\def\eqref#1{equation~\ref{#1}}
\def\1{\bm{1}}
\DeclareMathAlphabet{\mathsfit}{\encodingdefault}{\sfdefault}{m}{sl}
\SetMathAlphabet{\mathsfit}{bold}{\encodingdefault}{\sfdefault}{bx}{n}
\def\gA{{\mathcal{A}}}
\def\gN{{\mathcal{N}}}
\def\gV{{\mathcal{V}}}
\useunder{\uline}{\ul}{}
\newcommand{\msl}{\{\!\!\{}
\newcommand{\msr}{\}\!\!\}}
\newcommand{\RR}{\mathbb{R}}
\newcommand{\RRt}{\mathbb{R}^{n\times 3}}
\newcommand{\mc}{\mathcal}
\newcommand*{\ldblbrace}{\{\mskip-5mu\{}
\newcommand*{\rdblbrace}{\}\mskip-5mu\}}
\newcommand*{\op}{\mathsf{op}}
\newcommand*{\agg}{\mathsf{agg}}
\newcommand*{\gagg}{\mathsf{Geoagg}}
\newcommand*{\pool}{\mathsf{Pool}}
\newcommand*{\ggA}{\mathsf{GeoA}}
\theoremstyle{plain}
\newtheorem{theorem}{Theorem}[section]
\newtheorem{proposition}[theorem]{Proposition}
\newtheorem{lemma}[theorem]{Lemma}
\newtheorem{corollary}[theorem]{Corollary}
\theoremstyle{definition}
\newtheorem{definition}[theorem]{Definition}
\theoremstyle{remark}
\newcommand{\mb}{\mathbf}
\newcommand{\mbb}{\mathbb}
\newcommand{\revised}[1]{#1}
\newcommand{\revisedd}[1]{#1}
\newcommand{\reviseddd}[1]{#1}
\title{On the Completeness of Invariant Geometric Deep Learning Models}
\author{
    Zian Li$^{1,2}$, 
    Xiyuan Wang$^{1,2}$, 
    Shijia Kang$^{1}$,
    Muhan Zhang$^{1,}$\thanks{Corresponding author: Muhan Zhang (muhan@pku.edu.cn).} \\
    $^{1}$Institute for Artificial Intelligence, Peking University \\
    $^{2}$School of Intelligence Science and Technology, Peking University\\
}
\begin{document}

\maketitle

\begin{abstract}

Invariant models, one important class of geometric deep learning models, are capable of generating meaningful geometric representations by leveraging informative geometric features in point clouds. These models are characterized by their simplicity, good experimental results and computational efficiency. However, their theoretical expressive power still remains unclear, restricting a deeper understanding of the potential of such models. In this work, we concentrate on characterizing the theoretical expressiveness of a wide range of invariant models under \textit{fully-connected} conditions. We first rigorously characterize the expressiveness of the most classic invariant model, message-passing neural networks incorporating distance (DisGNN), restricting its unidentifiable cases to be only highly symmetric point clouds. We then prove that GeoNGNN, the geometric counterpart of one of the simplest subgraph graph neural networks, can effectively break these corner cases' symmetry and thus achieve E(3)-completeness. By leveraging GeoNGNN as a theoretical tool, we further prove that: 1) most subgraph GNNs developed in traditional graph learning can be seamlessly extended to geometric scenarios with E(3)-completeness; 2) DimeNet, GemNet and SphereNet, three well-established invariant models, are also all capable of achieving E(3)-completeness. Our theoretical results fill the gap in the expressive power of invariant models, contributing to a rigorous and comprehensive understanding of their capabilities. 
\end{abstract}

\vspace{-0.5mm}
\section{Introduction}
\vspace{-0.5mm}
Learning geometric structural information from 3D point clouds constitutes a fundamental requirement for various real-world applications, including molecular property prediction, physical simulation, and point cloud classification/segmentation~\citep{schmitz2019machine, GNS,alphafold2, pointclouds:survey}.

In the context of designing geometric models for point clouds, a pivotal consideration involves ensuring that the model respects both permutation symmetry and Euclidean symmetry (i.e., symmetry of SE(3) or E(3) group). This requirement necessitates the incorporation of appropriate inductive biases into the architecture, ensuring that the model's output remains invariant or equivariant to point reordering and Euclidean transformation.

However, these restrictions on symmetry can impose limitations on the ability of models to approximate a broad range of functions. For example, Message Passing Neural Networks (MPNNs)~\citep{MPNN}, a representative class of Graph Neural Networks (GNNs), learn permutation symmetric functions over graphs. However, it has been shown that such functions are not universal~\citep{GIN,k-GNN}, leading to the development of more expressive GNN frameworks~\citep{IGN,k-GNN,zhang2023complete}. Similar challenges arise in the geometric setting, where models must additionally respect the Euclidean symmetry, and thus achieving geometric universality (E(3)-completeness) becomes a non-trivial task as well.

In this work, we systematically investigate the expressiveness of a wide range of invariant models under fully-connected conditions
, where interactions occur among all points. We begin by revisiting DisGNN~\citep{li2024distance}, the simplest invariant model augmenting MPNNs with Euclidean distances between nodes as additional edge features. We seek to answer the question: \textit{how close is DisGNN to completeness?} Through theoretical analysis, we show that \textbf{DisGNN is nearly-E(3)-complete}, whose unidentifiable cases can be restricted to a 0-measure subset containing well-defined highly-symmetric point clouds only. Our results extend the findings of previous works that solely concentrated on individual hand-crafted counterexamples illustrating the incompleteness of DisGNN~\citep{li2024distance, pozdnyakov2022incompleteness,  pozdnyakov2020incompleteness}, and a contemporary work that provided a coarser characterization of DisGNN's expressiveness~\citep{hordan2024complete}.

We then show that \textbf{GeoNGNN}, the geometric counterpart of NGNN~\citep{zhang2021nested} (one simple subgraph GNN), can effectively break all DisGNN's unidentifiable cases' symmetry through node marking, thereby achieving \textbf{E(3)-completeness}. We then leverage GeoNGNN as a theoretical tool, proceeding to establish the E(3)-completeness of a wide range of invariant models by showcasing their ability to implement GeoNGNN. Specifically, we systematically define the geometric counterparts of subgraph GNNs within \citet{zhang2023complete}'s framework by generalizing their local operations to incorporate geometric information, and then prove that \textbf{all of these geometric subgraph GNNs are E(3)-complete}. Furthermore, we establish that three well-established invariant geometric models, \textbf{DimeNet}~\citep{dimenet}, \textbf{SphereNet}~\citep{liu2021spherical} and \textbf{GemNet}~\citep{gemnet}, are also all \textbf{capable of achieving E(3)-completeness}.

These established E(3)-complete models, except for GemNet~\citep{gemnet}, are either known to be strictly weaker than 2-FWL~\citep{cai1992optimal} in traditional graph learning setting, or utilize less informative aggregation schemes than 2-FWL-like geometric models~\citep{li2024distance, delle2023three, hordan2024complete, hordan2024weisfeiler} (See~\Cref{sec:complete_models} for details). However, our investigation reveals that in the geometric setting, these models are equally E(3)-complete. These surprising findings can deepen our understanding of the power of invariant models, while also suggesting that the bottleneck of invariant models may lie in generalization instead of expressiveness. As a side effect, by generalizing subgraph GNNs to the geometric setting, we greatly enlarge the design space of geometric deep learning models, which we hope could inspire future, more powerful model designs.

\vspace{-1mm}
\section{Related Works}
\vspace{-0.5mm}

Designing complete invariant descriptors for point clouds plays a crucial role for scientific problems~\citep{nigam2024completeness}, since complete invariants directly allow universal function approximations when coupled with MLPs~\citep{hordan2024complete, li2024distance}. Previous works, such as~\citet{widdowson2022resolving, widdowson2023recognizing, kurlin2023polynomial}, have proposed polynomial-time algorithms to compute and compare complete invariants that uniquely determine point clouds. However, these invariants are often highly structured, such as sets of matrices~\citep{kurlin2023polynomial}, and the absence of corresponding neural-network formulations limits their practical applicability.

In the neural-network context, most prior works propose powerful models but typically with ``weaker completeness''\citep{wang2022comenet,puny2021frame,duval2023faenet,du2024new,villar2021scalars}. For example, ComENet~\citep{wang2022comenet} uses nearest neighbors as references for computing higher-order geometric information, which is infeasible for symmetric point clouds where nodes can have multiple nearest neighbors, thereby limiting its completeness over a asymmetric subset of point clouds~\footnote{Arbitrarily selecting one when multiple nearest neighbors exist could break permutation invariance, a fundamental principle ensuring stable outputs. We focus only on invariant models.}. Frame-based methods, such as FAENet~\citep{duval2023faenet}, LEFTNet~\citep{du2024new}, and others~\citep{GNN-LF, puny2021frame}, project coordinates or vector features onto local or global equivariant frames and employ powerful structures like MLPs~\citep{hornik1989multilayer} to ensure expressiveness. However, these frames can degenerate in symmetric structures (e.g., structures with non-distinct eigenvectors~\citep{puny2021frame,duval2023faenet}), a limitation that these methods do not carefully address and theoretically characterize. Although symmetric point clouds often represent zero-measure cases~\citep{hordan2024complete}, they can significantly impact downstream continuous function learning, as demonstrated in~\citep{pozdnyakov2022incompleteness, hordan2024complete}.

Several previous works~\citep{dym2020universality, joshi2023expressive, hordan2024complete, li2024distance, delle2023three, hordan2024weisfeiler} have characterized geometric models' expressiveness in a more rigorous sense. The seminal work~\citep{dym2020universality} proved the universality of structures like TFN~\citep{TFN}, while such universality requires arbitrarily high-order tensors that are computationally expensive in practice. Another related work, GWL~\citep{joshi2023expressive}, offers many insightful theoretical conclusions, including that GWL provides upper bounds on the expressiveness of many equivariant models~\citep{batatia2022mace, EGNN}. However, GWL relies on injective multiset functions involving equivariant features, which do not exhibit practical polynomial-time neural forms like scalar multiset functions proposed in~\citep{dym2024low, amir2024neural}, thus serving more as conceptual upper bounds.~\citet{joshi2023expressive} also does not directly address completeness of models as we do. A recent work~\citep{cen2024high} characterizes the symmetry of point clouds from point group perspective, and investigate the expressiveness through degeneration of equivariant functions.  Notably, recent works~\citep{li2024distance, delle2023three, hordan2024complete, hordan2024weisfeiler} have proposed provably complete geometric models that can produce complete invariant features. Nevertheless, these methods are predominantly built upon the 2-FWL framework~\citep{cai1992optimal}, and the completeness of a broader class of invariant models—many of which employ weaker aggregation schemes~\citep{dimenet}—remains an open question. In this work, we focus on a significantly broader range of invariant methods~\citep{schnet,dimenet,gemnet,liu2021spherical,zhang2021nested,zhang2023complete}, some of which are newly proposed based on subgraph GNNs in traditional graph learning~\citep{zhang2021nested,zhang2023complete}, with the aim to fully characterize their completeness rigorously.

\vspace{-1mm}
\section{Preliminary}

\subsection{Notations and Definitions}
\label{sec: notations and definitions}

We investigate the expressiveness of invariant models for unlabeled point clouds consisting of $n$ nodes\footnote{Unlabeled point clouds represent one of the most complex and general cases in geometric expressiveness~\citep{kurlin2023polynomial, widdowson2023recognizing}, surpassing the complexity of point clouds with initial features (e.g., molecules). We consider cases where $P$ contains more than one point, as trivial cases arise otherwise.}, denoted by $P \in \mathbb{R}^{n \times 3}$. The coordinates of the $i$-th node in $P$ are represented by $p_i$. We use ${}$ to denote sets and $\msl \msr$ to denote multisets, with the set ${1, 2, \dots, n}$ represented as $[n]$. The Euclidean distance between nodes $i$ and $j$ is denoted as $d_{ij}$.

Two point clouds $P_1 \in \mathbb{R}^{n_1 \times 3}$ and $P_2 \in \mathbb{R}^{n_2 \times 3}$ are \textit{isomorphic} if $n_1 = n_2 = n$, and there exists a rotation matrix $\mathbf{R} \in O(3)$, a translation vector $\mathbf{t} \in \mathbb{R}^3$, and a permutation matrix $\mb{P}\in \mbb R^{n\times n}$ such that $(\mb{P}P_1)\mathbf{R} + \mathbf{t} = P_2$. Invariant models considered in this study are all \textit{permutation- and E(3)-invariant}, meaning they embed isomorphic point clouds into the same $k$-dimensional representation $s \in \mathbb{R}^k$.

\begin{definition}[Distinguish, Identify, and E(3)-Completeness]
Let $f: \bigcup_{n=1}^\infty \mathbb{R}^{n \times 3} \to \mathbb{R}^k$ be a permutation- and E(3)-invariant function that maps point clouds to a $k$-dimensional representation. We define the following concepts:

\begin{itemize}[itemsep=2pt,topsep=-2pt,parsep=0pt,leftmargin=1cm]
\item \textbf{Distinguish:} Given two non-isomorphic point clouds $P_1$ and $P_2$, if $f(P_1) \neq f(P_2)$, we say $f$ can distinguish $P_1$ and $P_2$.
\item \textbf{Identify:} For a point cloud $P_1$, if for any non-isomorphic $P_2$, we always have $f(P_1) \neq f(P_2)$, we say $f$ can identify $P_1$.
\item \textbf{E(3)-Completeness:} If for any pair of non-isomorphic point clouds $P_1$ and $P_2$, we always have $f(P_1) \neq f(P_2)$, we say $f$ is E(3)-complete.
\end{itemize}
\end{definition}

Notably, we introduce the novel concept of ``Identify'', which allows for a \textit{finer-grained} analysis of model expressiveness than commonly adopted ``Distinguish'': By definition, each “identify” operation encompasses an infinite number of “distinguish” pairs. It is evident that identifying all point clouds implies E(3)-completeness. Furthermore, since distinguishing point clouds with different node counts is straightforward, we focus solely on the case where point clouds have the same finite size.

For a parametric model $f_\theta$, we investigate its maximal expressive form where all intermediate functions are injective, following typical ways~\citep{delle2023three, li2024distance}. Notably, recent work by~\citet{dym2024low, amir2024neural} proves the existence of neural function forms and parametrizations that ensure injectivity for multiset functions commonly used in geometric models~\citep{dimenet, gemnet, li2024distance}, allowing us to achieve such completeness in polynomial time and under (for Lebesgue) almost every parametrization.

\subsection{DisGNN}

To leverage the rich geometric information present in point clouds, a straightforward approach is to model the point cloud as a \textit{distance graph}, where edges are present between nodes if their Euclidean distance is below a certain cutoff $r_{\text{cutoff}}$, and edge weights are Euclidean distance $d_{ij}$. Then, an MPNN is applied to it. We call models with such framework as DisGNN, and SchNet~\citep{schnet} is a representative of such works. 

To be specific, DisGNN models first embed nodes based on their initial node features $X \in \RR^{n\times d}$ (if there exists), such as atomic numbers, obtaining initial node embedding $h_i^{(0)}$ for node $i$. They then perform message passing according to the following equation iteratively:
\begin{equation}
\label{eq: DisGNN}
    h_i^{(l+1)} = f_{\text{update}}^{(l)}\left(h_i^{(l)}, f_{\text{aggr}}^{(l)} \left(\msl (h_j^{(l)}, d_{ij})\mid j \in \mathcal{N}(i)\msr\right)\right),
\end{equation}
where $h_i^{(l)}$ represents the node embedding for node $i$ at layer $l$, $\mathcal{N}(i)$ denotes the neighbor set of node $i$. The global graph embedding is then obtained by aggregating the representations of all nodes using a permutation-invariant function $f_{\text{out}}(\msl h_i^{(L)} \mid i \in [n] \msr)$, where $L$ is the total number of layers and $n$ is the total number of nodes. 

In the theoretical analysis for DisGNN (Section~\ref{sec: how powerful is DisGNN}), we assume \textit{fully-connected} distance graph modeling for point clouds, i.e., $\mathcal{N}(i)$ is $[n]$ by default, and sufficient iterations until convergence. This ensures maximal geometric expressiveness of DisGNN and aligns with existing works~\citep{pozdnyakov2022incompleteness, hordan2024complete, li2024distance}.

\vspace{-1mm}
\section{How Powerful is DisGNN?}
\vspace{-1mm}
\label{sec: how powerful is DisGNN}
In~\citet{pozdnyakov2022incompleteness, li2024distance}, researchers have carefully constructed pairs of symmetric point clouds that DisGNN cannot distinguish to illustrate its E(3)-incompleteness. However, rather than relying on hand-crafted finite pairs of counterexamples and intuitive illustration, a more significant question arises: \textit{what properties do the point clouds that DisGNN cannot \textbf{identify} share in common}? Only after theoretically characterizing these corner cases can we describe and bound the expressiveness of DisGNN in a rigorous way, and then design more powerful geometric models accordingly by ``solving'' these corner cases.

Naively, to determine whether a point cloud $P$ can be identified by DisGNN requires iterating through all other point clouds $P'$ within the entire cloud set $\mathbb{R}^{n\times 3}$ according to its definition. Here, we propose a simple sufficient condition for a point cloud to be identifiable by DisGNN, which focuses \textit{solely on $P$}. To achieve this, we first rigorously define the concept of $\mathcal{A}$-symmetry.

\begin{definition}[$\mathcal{A}$-symmetry] Given a point cloud $P\in \mathbb{R}^{n\times 3}$, let $\mathcal{A}$ be a permutation-equivariant and E(3)-invariant algorithm taking $P$ as input and produces node-level features $X^\mathcal{A} = \mathcal{A}(P)\in \mathbb{R}^{n\times d}$ (we use $x^{\mathcal{A}}_i\in \mathbb{R}^{ d}$ to denote the $i$-th nodes' feature). We define $\mathcal{A}$ center set of $P$ as $\mathcal{A}^{\text{set}}(P) = \{  \frac{\sum_{i\in [n]}m(x^{\mathcal{A}}_i) p_i} { \sum_{i\in[n]}m(x^{\mathcal{A}}_i)}   \mid m:\mathbb{R}^{d}\to \mathbb{R} , \sum_{i\in[n]}m(x^{\mathcal{A}}_i)\neq 0\}$. If $|\mathcal{A}^{\text{set}}(P)| = 1$, then we say $P$ is $\mathcal{A}$-symmetric; otherwise, we say $P$ is $\mathcal{A}$-asymmetric.
\label{def: symmetry}
\end{definition}

Intuitively, $\mathcal{A}$ assigns features to nodes of the point cloud based on their spatial properties, while $m$ represents a “mass” function that maps these features to corresponding “masses”.\footnote{Note that the geometric center of $P$ is always included in $\mathcal{A}^{\text{set}}(P)$ by defining $m$ as a constant function.} Thus, $\mathcal{A}^{\text{set}}(P)$ denotes the collection of “barycenters” associated with different “mass” assignments. For example, we can consider $\mathcal{A}$ as the distance encoding function, denoted as $\mathcal{C}$, which computes the distance from each node to the geometric center as the node feature. A $\mathcal{C}$-asymmetric case is shown in Figure~\ref{fig: enhancement_relation}(a). 

\begin{figure}[h]
    \centering
    \includegraphics[width=\columnwidth]{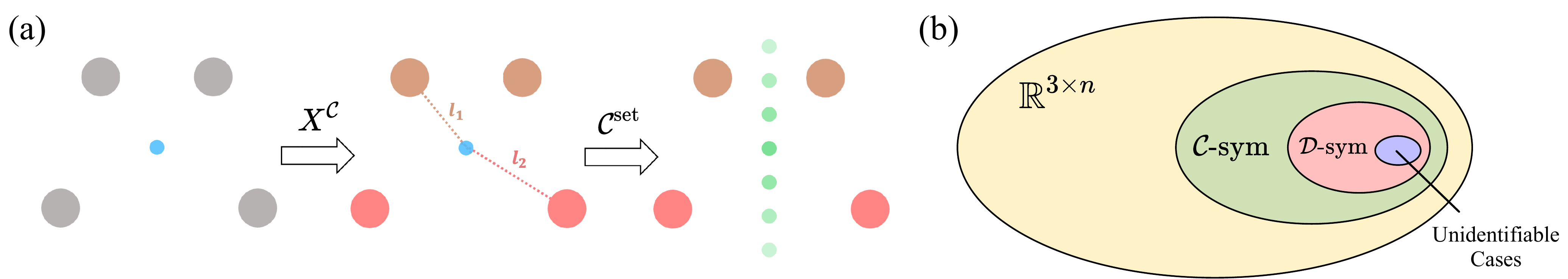}
    \caption{(a) A $\mathcal{C}$-asymmetric point cloud. The blue small node represents the geometric center, and each green node represents an element in $\mathcal{C}^{\text{set}}(P)$. The large nodes constitute the point cloud, with different node colors denoting different node features. (b) Illustration of the relations among different subsets of the point cloud. $\mathcal{A}$-sym parts encompass all $\mathcal{A}$-symmetric graphs. Each subset has a measure of $0$ and strictly contains their sub-parts (proper subset relation).}
    \vspace{-5mm}
    \label{fig: enhancement_relation}
\end{figure}

Obviously, the cardinality of $\mathcal{A}^{\text{set}}(P)$ is solely dependent on the \textit{partition} of node features $\mathcal{A}(P)$ and is independent of the specific feature values. We also propose a more powerful algorithm, DisGNN encoding, denoted as $\mathcal{D}$, which applies DisGNN to point clouds iteratively until the partition of node features stabilizes. Notably, earlier studies by \citet{delle2023three, li2024distance} have shown that $\mathcal{D}$ guarantees a node feature partition that is no coarser than that of $\mathcal{C}$. Consequently, $\mathcal{D}$-symmetry always implies $\mathcal{C}$-symmetry, representing a stricter form of symmetry.

Based on these definitions, we now present the primary theorem demonstrating that all such asymmetric point clouds can be identified by DisGNN.

\vspace{-1mm}
\begin{theorem}[Asymmetric point clouds are identifiable] Let $\mathcal{C}$ denote the center distance encoding and $\mathcal{D}$ the DisGNN encoding. Then, given an arbitrary point cloud $P\in \mathbb{R}^{n\times 3}$, $P$ is $\mathcal{C}$-asymmetric $\Rightarrow$ $P$ is $\mathcal{D}$-asymmetric $\Rightarrow$ $P$ can be identified by DisGNN.
\label{Theorem: Asymmetric point clouds are identifiable}
\end{theorem}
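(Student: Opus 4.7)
The statement is a chain of two implications, which I prove in order: the first is a direct inclusion argument, while the second carries the main technical content. For \textbf{$\mathcal{C}$-asymmetric $\Rightarrow$ $\mathcal{D}$-asymmetric}, the paragraph just before the theorem records that the stable $\mathcal{D}$-partition refines the $\mathcal{C}$-partition (citing \citet{delle2023three, li2024distance}). Thus $x^{\mathcal{D}}_i = x^{\mathcal{D}}_j$ implies $x^{\mathcal{C}}_i = x^{\mathcal{C}}_j$, so every mass function $m$ on $\mathcal{C}$-features factors through the $\mathcal{D}$-partition and is realised by an equivalent mass function on $\mathcal{D}$-features. Hence $\mathcal{C}^{\text{set}}(P) \subseteq \mathcal{D}^{\text{set}}(P)$, and $|\mathcal{C}^{\text{set}}(P)|>1$ forces $|\mathcal{D}^{\text{set}}(P)|>1$.

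For \textbf{$\mathcal{D}$-asymmetric $\Rightarrow$ identifiable}, I argue the contrapositive: assume $\text{DisGNN}(P)=\text{DisGNN}(P')$ and conclude $P\cong P'$. With injective aggregation/update/readout and iteration run to convergence, this output equality unrolls to a bijection $\sigma:[n]\to[n]$ identifying the $\mathcal{D}$-classes such that the full multiset $\msl (h_j, d_{ij})\mid j\in[n]\msr$ at each node $i$ is preserved under $\sigma$; in particular, for every node $i$ and every class $C$, the distance multiset $\msl d_{ij}\mid j\in C\msr$ and all intra-class distance multisets are matched between $P$ and $P'$. Via the identity
\[
\|p_i-\bar p_C\|^2 \;=\; \frac{1}{|C|}\sum_{k\in C}d_{ik}^2 \;-\; \frac{1}{2|C|^2}\sum_{k,l\in C}d_{kl}^2
\]
and an analogous identity for inter-centroid distances, every node's distances to each class centroid as well as all inter-centroid distances are therefore preserved. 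By $\mathcal{D}$-asymmetry at least two centroids $c_1, c_2$ are distinct, so after an $E(3)$ alignment of $P'$ matching its centroids with those of $P$, each $p_i$ and $p'_{\sigma(i)}$ lies on the common circle cut out by the two spheres around $c_1, c_2$. The matched class-wise distance structure then forces these circular positions to coincide up to a global axial rotation or reflection, itself an $E(3)$ symmetry; thus $P\cong P'$.

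\textbf{Main obstacle.} The technically delicate point is collapsing the residual rotational freedom around the $c_1 c_2$-axis. In principle, $P$ could decompose into axially symmetric sub-configurations that rotate independently around this axis, producing distance-matched but non-$E(3)$-equivalent reconstructions of $P'$. I would handle this by a bootstrap argument: once the centroid-distance features are appended and DisGNN re-run, the $\mathcal{D}$-partition strictly refines whenever fresh off-axis centroids emerge, and this process must terminate either with a rigid anchor set of four non-coplanar centroids (fully pinning each node) or with the residual rotational freedom already being a global $E(3)$ symmetry of $P$ itself (making any two reconstructions isomorphic). Carefully verifying this termination and dichotomy---in particular ruling out stalls in which refinement halts yet genuinely distinct completions of $P'$ still survive---is where the technical weight of the proof concentrates.
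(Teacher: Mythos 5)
Your first implication ($\mathcal{C}$-asymmetric $\Rightarrow$ $\mathcal{D}$-asymmetric) is correct and matches the paper's argument (the $\mathcal{D}$-partition refines the $\mathcal{C}$-partition, so $\mathcal{C}^{\text{set}}(P)\subseteq\mathcal{D}^{\text{set}}(P)$), and your setup for the second implication — two distinct class centroids as anchors, node-to-centroid and centroid-to-centroid distances recovered via the variance identity from class-wise distance multisets — is also the paper's route (its Lemma on locating $\mathcal{A}$-centers). But the step you yourself label the ``main obstacle'' is exactly where the entire technical content of the paper's proof lives, and your proposal does not close it. Equality of stable DisGNN features only gives you, at each node, distance \emph{multisets} to each feature class, not a node-to-node pairing; so once every point is placed on its circle about the $c_1c_2$ axis, the assertion that the matched class-wise distance structure ``forces these circular positions to coincide up to a global axial rotation or reflection'' is precisely the claim requiring proof, and the scenario you raise (axially decoupled sub-configurations rotating independently while preserving all class-wise distance data) shows it cannot be asserted without a dedicated argument. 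As written, the implication $\mathcal{D}$-asymmetric $\Rightarrow$ identifiable is therefore not established.

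The paper resolves this ambiguity with a mechanism absent from your sketch: it uses the deeper unrolled rounds, noting that $h_i^{(4)}$ derives, for every other node $j$, the full distance sextuple among $i,j,c_1,c_2$ and hence the dihedral angle $\theta(ic_1c_2,jc_1c_2)$; it then selects a node $x$ attaining the \emph{minimal} such dihedral angle with some $y$, so the wedge between the $x$- and $y$-half-planes contains no points, and it sweeps reflections of these half-planes outward (the ``dead area'' argument) to resolve every mirror ambiguity wedge by wedge, with a separate treatment of the 2D-degenerate case and of distinguishing $P$ from $\mathcal{D}$-symmetric $P'$ via the derivable symmetry indicator. Your proposed substitute — bootstrapping additional centroid anchors until four non-coplanar ones appear or the residual freedom is a global symmetry of $P$ — is a genuinely different idea, but it is only sketched and its dichotomy is doubtful: the refinement can stall with all class centroids lying on the $c_1c_2$ axis even though the configuration is neither axially symmetric nor rigidly pinned (the stable partition has no obligation to produce off-axis centroids), which is exactly the ``stall'' you flag. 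Until that termination/dichotomy is proved, or an argument of the paper's dihedral-angle type is supplied, the proof has a genuine gap at its central step.
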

\vspace{-1mm}

\textbf{Key idea.} Given a point cloud $P$, when it has two global-level \textit{distinct} \textit{anchors}, such as geometric center or other centers that can be determined in a permutation-invariant and E(3)-equivariant way, DisGNN has the potential to learn \textit{triangular distance encoding} for each node $i$ w.r.t. $i$ and these two global anchors, and produces complete representation for $P$ by leveraging such encoding information (Lemma~\ref{lemma: completeness on subspace}). For $\mathcal{C}$- or $\mathcal{D}$-asymmetric point clouds, their center sets $\mathcal{C}^{\text{set}}(P)$ or $\mathcal{D}^{\text{set}}(P)$ have more than 1 element, within which any pair of distinct centers can be utilized by DisGNN for such encoding (Lemma~\ref{lemma: locate GA}), thereby facilitating DisGNN to fully represent and identify $P$. 

We provide rigorous proof in Appendix~\ref{sec: proof of anchor theorem} and \textit{encourage readers to refer to it}, where we show the superior global geometric learning ability of DisGNN and can inspire the development of new complete model designs in the triangular distance encoding perspective, as exemplified by GeoNGNN proposed in the following section.

Theorem~\ref{Theorem: Asymmetric point clouds are identifiable} shows that to test if a point cloud is identifiable by DisGNN, one can first check its $\mathcal{C}$-symmetry, which is quick and intuitive. If it is $\mathcal{C}$-asymmetric, it is identifiable; otherwise, one can test for the more powerful but costlier $\mathcal{D}$-symmetry. Notably, although the latter test still requires running DisGNN, it demonstrates that a single execution of DisGNN on $P$ can provide an indication of identifiability, offering insights without necessitating an exhaustive search over all $P’ \in \mathbb{R}^{n \times 3}$.

Importantly, Theorem~\ref{Theorem: Asymmetric point clouds are identifiable} restricts the unidentifiable set of DisGNN to a constrained, highly symmetric subset, as illustrated in Figure~\ref{fig: enhancement_relation}(b). In~\Cref{theorem: measure 0}, we provide a rigorous analysis demonstrating that the unidentifiable set of DisGNN has zero measure. These results sufficiently demonstrate that, despite its extremely simple and naive design, DisGNN is \textbf{nearly E(3)-complete}.

\begin{theorem}[Unidentifiable set of DisGNN has measure zero]
\label{theorem: measure 0}
The Lebesgue measure on $\mathbb{R}^{n\times 3}$ of the $\mathcal{C}$-symmetric, $\mathcal{D}$-symmetric, and unidentifiable point cloud sets is zero.
\end{theorem}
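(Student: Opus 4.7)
The plan is to reduce all three measure claims to a single one by chaining the inclusions already implicit in the paper. By the contrapositive of \Cref{Theorem: Asymmetric point clouds are identifiable} and the preceding remark that $\mathcal{D}$-symmetry implies $\mathcal{C}$-symmetry, we have
\[
\{P : P\text{ unidentifiable}\}\ \subseteq\ \{P : P\text{ is }\mathcal{D}\text{-symmetric}\}\ \subseteq\ \{P : P\text{ is }\mathcal{C}\text{-symmetric}\},
\]
where the second inclusion follows because $\mathcal{D}$ produces a node-feature partition no coarser than $\mathcal{C}$'s, so every $\mathcal{C}$-based mass function lifts to a $\mathcal{D}$-based one and $\mathcal{C}^{\text{set}}(P)\subseteq \mathcal{D}^{\text{set}}(P)$. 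It therefore suffices to show that the $\mathcal{C}$-symmetric subset of $\mathbb{R}^{n\times 3}$ is Lebesgue null.

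For this, I would argue that $\mathcal{C}$-symmetry forces a nontrivial polynomial equation on the coordinates. Let $g=\frac{1}{n}\sum_k p_k$ be the geometric center and $c_i=\|p_i-g\|$. If the scalars $c_1,\dots,c_n$ are pairwise distinct, then for each index $i$ one can take the admissible mass function $m:\mathbb{R}\to\mathbb{R}$ defined by $m(c_i)=1$ and $m(c_j)=0$ for $j\neq i$; this satisfies $\sum_k m(c_k)=1\neq 0$ and its weighted barycenter equals exactly $p_i$, so $p_i\in\mathcal{C}^{\text{set}}(P)$ for every $i$. If additionally $|\mathcal{C}^{\text{set}}(P)|=1$, all $p_i$ must coincide, which in turn forces $c_i=0$ for every $i$ and contradicts distinctness whenever $n\ge 3$. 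Hence $\mathcal{C}$-symmetry implies $c_i=c_j$ for some $i\neq j$.

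The condition $c_i=c_j$ is equivalent to the vanishing of
\[
F_{ij}(P)\;=\;\|p_i-g\|^2-\|p_j-g\|^2\;=\;\|p_i\|^2-\|p_j\|^2-\tfrac{2}{n}(p_i-p_j)^{\!\top}\!\textstyle\sum_k p_k,
\]
a polynomial of degree $2$ in the $3n$ coordinates of $P$ that is not identically zero for $n\ge 3$: placing $p_i=(1,0,0)$, $p_j=0$, and all remaining $p_k=0$ gives the value $1-2/n\neq 0$. The real zero locus of a nonzero real polynomial in $\mathbb{R}^{3n}$ is a proper real-algebraic subvariety, hence Lebesgue null; the $\mathcal{C}$-symmetric set sits inside the finite union $\bigcup_{i<j}\{F_{ij}=0\}$ of such null sets and is itself null. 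The two inclusions above then transfer the bound to the $\mathcal{D}$-symmetric and unidentifiable sets.

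The only real subtlety, and the main place where care is needed, is the edge case $n=2$: there $c_1=c_2$ holds identically (the two points are equidistant from their midpoint), so $F_{12}$ vanishes on all of $\mathbb{R}^{2\times 3}$ and every two-point cloud is $\mathcal{C}$- and $\mathcal{D}$-symmetric. In that case the proof above is vacuous, but DisGNN nonetheless recovers the single distance $d_{12}$, which fully determines the isomorphism class, so the unidentifiable set is empty and still has measure zero. For $n\ge 3$ the polynomial argument proceeds as described, and no further nontrivial ingredient is required beyond the standard fact that the zero set of a nonzero real polynomial on $\mathbb{R}^N$ is Lebesgue null.
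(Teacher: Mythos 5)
Your proof is correct and follows essentially the same route as the paper: the chain unidentifiable $\subseteq$ $\mathcal{D}$-symmetric $\subseteq$ $\mathcal{C}$-symmetric (via \Cref{Theorem: Asymmetric point clouds are identifiable} and the fact that the $\mathcal{D}$-partition refines the $\mathcal{C}$-partition), followed by covering the $\mathcal{C}$-symmetric set with the zero loci of the nonzero quadratic polynomials $\|p_i-g\|^2-\|p_j-g\|^2$, which are Lebesgue null. You additionally spell out why pairwise-distinct center distances force $\mathcal{C}$-asymmetry (a step the paper labels obvious) and flag the $n=2$ degeneracy, where the $\mathcal{C}$-/$\mathcal{D}$-symmetric sets are actually all of $\mathbb{R}^{2\times 3}$ and only the unidentifiable-set claim survives — a legitimate edge case the paper's proof silently skips.
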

\vspace{-1mm}

Finally, we note that a recent study~\citep{hordan2024complete} also indicates the near-completeness of DisGNN, by showing that it can distinguish arbitrary pairs of point clouds from an asymmetric point cloud $\mathbb{R}^{n\times 3}_{\text{distinct}}$. Nevertheless, our findings represent a significant advancement by introducing a strictly larger asymmetric subset and employing the concept of ``identify'' instead of ``pair-wise distinguish''. This provides a finer characterization of the near completeness of DisGNN. Please refer to~\Cref{sec:comparsion_with_hordan} for a detailed comparison.

\vspace{-1mm}
\section{On the Completeness of Invariant Geometric Models}
\vspace{-1mm}
\label{sec:complete_models}

In this section, we move on to demonstrate and prove the E(3)-completeness of a broad range of invariant models that leverage more geometric features or more complicated aggregation schemes than DisGNN \revisedd{under fully-connected conditions}. These models can effectively identify all those highly-symmetric point clouds that DisGNN can not. We first introduce a new simple invariant model design, which can precisely break such symmetry and achieve completeness. Consequently, we show how this new design serves as a theoretical tool for proving a wide range of invariant models' expressiveness. Proof for this section is provided in~\Cref{sec: proof}.

\subsection{GeoNGNN: Breaking Symmetry Through Node Marking}
\label{sec: geongnn}

According to Theorem~\ref{Theorem: Asymmetric point clouds are identifiable} (along with the lemmas emphasized in its key idea), the essential factor in identifying a point cloud by DisGNN is the existence of two distinct anchors. Although every point cloud inherently possesses one, i.e., the geometric center, the other may not exist in some highly symmetric cases, such as a sphere. This hinders DisGNN from achieving completeness. In this subsection, we show that GeoNGNN, the geometric counterpart of Nested GNN (NGNN)~\citep{zhang2021nested}
, can fill in this last piece of the puzzle by applying DisGNN on point clouds with an additionally marked node, which breaks the symmetry and exactly acts as the other anchor.

\reviseddd{
GeoNGNN employs a two-level hierarchical framework by \textit{nesting} DisGNN. Specifically, it processes a given point cloud as follows: \textbf{1)} The first level, referred to as the inner GNN, operates independently on each point’s $r_{\text{sub}}$-sized sub-point cloud and aggregates the local sub-point cloud information into an initial embedding for the corresponding point in the original point cloud. \textbf{2)} The second level, referred to as the outer GNN, processes the original point cloud based on the embeddings generated by the inner GNN, thereby producing final point-level and cloud-level representations.

Formally, for the inner GNN, the representation of the node $j$ in node $i$'s sub-point cloud at the $l$-th layer, denoted as $h_{ij}^{(l)}$, is updated as:
\begin{equation}
h_{ij}^{(l+1)} = f_{\text{update, inner}}^{(l)}\left(h_{ij}^{(l)}, f_{\text{aggr, inner}}^{(l)} \left(\msl (h_{ik}^{(l)}, d_{kj}) \mid k \in \mathcal{N}(j), d_{ik} \leq r_{\text{sub}} \msr \right)\right),
\end{equation}
for all $h_{ij}$ satisfying $d_{ij} \leq r_{\text{sub}}$, and $\mathcal{N}(j)$ represents all nodes $k$ satisfying $d_{kj} \leq r_{\text{cutoff}}$. Here, $r_{\text{sub}}$ and  $r_{\text{cutoff}}$ are hyperparameters representing the subgraph size and interaction cutoff, respectively.  $h_{ij}^{(0)}$ is initialized as:
\begin{equation}
\label{eq: inner init}
h_{ij}^{(0)} = f_{\text{init, inner}}(x_j, d_{ij}, \mathds{1}_{i=j}),
\end{equation}
where $x_j$ is the raw feature of node $j$, and $d_{ij}$ and $\mathds{1}_{i=j}$ denote the position encoding and marking of the center point, respectively. After $N_{\text{in}}$ iterations, the inner GNN summarizes the sub-point cloud as:
\begin{equation}
h_i^{(0)} = f_{\text{output, inner}}(\msl h_{ij}^{(N_{\text{in}})} \mid j \in [n], d_{ij} \leq r_{\text{sub}} \msr),
\end{equation}
producing the initial point representations for the outer GNN. The outer GNN then updates these representations over $N_{\text{out}}$ iterations following the framework in \Cref{eq: DisGNN}.

By representing point clouds as distance graphs, GeoNGNN captures \textit{subgraph} patterns instead of \textit{subtree} patterns, which have been shown to be more expressive in traditional graph learning~\citep{zhang2021nested}. We now show that GeoNGNN achieves geometric completeness under fully-connected conditions, thereby effectively addressing expressiveness limitations of DisGNN.}

\begin{theorem}[E(3)-Completeness of GeoNGNN] When the following conditions are met, GeoNGNN is E(3)-complete:
\begin{itemize}[itemsep=2pt,topsep=-2pt,parsep=0pt,leftmargin=10pt]
    \item $N_{\text{in}} >= 5$ and $N_{\text{out}} >= 0$ (where $0$ indicates that the outer GNN only performs final pooling).
    \item The distance graph is fully-connected ($r_{\text{cutoff}}=+\infty$).
    \item All subgraphs are the original graph ($r_{\text{sub}}=+\infty$).
\end{itemize}
\label{theorem: completeness of GeoNGNN}
\end{theorem}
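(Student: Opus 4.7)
My plan is to reduce the theorem to the DisGNN identifiability result established in Theorem~\ref{Theorem: Asymmetric point clouds are identifiable} applied to \emph{marked} point clouds. Observe that the inner GNN in GeoNGNN, by the initialization in \Cref{eq: inner init}, effectively runs DisGNN on the point cloud $P$ with node $i$ marked via the indicator $\mathds{1}_{i=j}$, and summarizes the result into $h_i^{(0)}$. The outer pooling then acts on the multiset $\msl h_i^{(0)} \mid i\in[n]\msr$. Hence it suffices to prove two claims: (i) for every node $i$ whose position $p_i$ differs from the geometric center $c$ of $P$, the inner DisGNN identifies the marked point cloud $(P,i)$ up to E(3) and permutation symmetries that also send the marked node of one cloud to the marked node of the other; and (ii) the multiset of these marked identifications over all $i$ uniquely determines $P$.

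For claim (i) I would extend the key idea of Theorem~\ref{Theorem: Asymmetric point clouds are identifiable} to marked inputs: the mark $\mathds{1}_{i=j}$ is simply an additional initial node feature that the inner DisGNN propagates, so the $\mathcal{A}$-symmetry analysis applies verbatim to the enriched features. Crucially, the mass function $m$ that returns $1$ on the marked feature and $0$ elsewhere places $p_i$ itself into the $\mathcal{A}$-center set of $(P,i)$, while the constant mass function contributes the geometric center $c$. Whenever $p_i\neq c$ these are two distinct global anchors, so by the lemmas invoked in the key idea (anchor localization plus completeness-on-subspace via triangular distance encodings $(d_{ji}, d_{jc})$ for every node $j$), the inner DisGNN identifies $(P,i)$ as a marked point cloud. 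The hyperparameter $N_{\text{in}}\geq 5$ supplies the rounds needed to propagate the mark, stabilize the $\mathcal{D}$-partition on $(P,i)$, and realize the triangular encoding step by step.

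For claim (ii), suppose $P_1,P_2$ yield the same multiset $\msl h_i^{(0)}\msr$. If $P_1$ is not the degenerate cloud where all points coincide (which is trivially identifiable by the output pooling alone), at most one node of $P_1$ lies at $c_{P_1}$, so we can pick $i_1$ with $p_{i_1}\neq c_{P_1}$; the multiset match produces $i_2\in P_2$ with $h_{i_1}^{(0)}(P_1)=h_{i_2}^{(0)}(P_2)$. Claim (i) then gives an E(3)/permutation isomorphism from $(P_1,i_1)$ to $(P_2,i_2)$, which in particular sends $P_1$ to $P_2$ as unmarked clouds, yielding $P_1\cong P_2$. The main obstacle I anticipate is rigorously extending Theorem~\ref{Theorem: Asymmetric point clouds are identifiable} to the marked setting: one has to check that the anchor-localization lemma still succeeds when one of the two anchors is a singleton-mass concentration at the marked node rather than a $\mathcal{D}$-derived centroid, and that the completeness-on-subspace lemma continues to produce a faithful reconstruction of $P$ from the two-anchor triangular distance encodings. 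A secondary subtlety is the corner case $p_i=c_P$ for some $i$; this is harmless because we only need claim (i) to hold for \emph{some} node used in the matching argument, and at least $n-1$ nodes satisfy the required inequality.
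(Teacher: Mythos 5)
Your proposal is correct and follows essentially the same route as the paper's proof: the node mark and the geometric center act as the two distinct anchors, the inner DisGNN builds the triangular distance encoding $(d_{ji},d_{jc},d_{ic})$ and reconstructs the (marked) cloud via the completeness-on-subspace lemma, and the corner case $p_i=c$ is handled exactly as in the paper by the existence of an off-center node together with injective outer pooling. The only difference is cosmetic: the paper counts rounds explicitly (round 1 gives $d_{ij}$ from the mark, round 2 gives $d_{jc}$ via the barycenter-type lemma, round 3 gives $d_{ic}$, plus two more for reconstruction, hence $N_{\text{in}}=5$) and never needs the $\mathcal{D}$-partition to stabilize, which your sketch alludes to but which is unnecessary and, taken literally, would require up to $n$ rounds rather than a constant.
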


\textbf{Key idea.} Given an arbitrary point cloud $P$, consider a specific node $i$ within it that differs from the geometric center. As node $i$ is explicitly marked within its own subgraph (\Cref{eq: inner init}), there are now two anchors in this subgraph, namely node $i$ and the geometric center, that facilitate triangular distance encoding by DisGNN.  Consequently, DisGNN can fully represent and identify $P$ through the representation of node $i$'s subgraph, as underscored in the key idea of Theorem~\ref{Theorem: Asymmetric point clouds are identifiable}. Notice that in a point cloud with more than two nodes, a node that is distinct from the geometric center always exists, and through the outer pooling, overall completeness and permutation invariance are guaranteed.

While Theorem~\ref{theorem: completeness of GeoNGNN} establishes that infinite subgraph radius and distance cutoff guarantee completeness over all point clouds, in Appendix~\ref{sec: finite size} we show that finite values also lead to boosted expressiveness compared to DisGNN. And we notice that the conditions specified in Theorem~\ref{theorem: completeness of GeoNGNN} ultimately result in \textit{polynomial-time} complexity w.r.t. the size of the point cloud even when considering the complexity involved in obtaining intermediate injective functions, which aligns with prior studies~\citep{kurlin2023polynomial, widdowson2023recognizing, hordan2024complete, li2024distance, delle2023three}. The detailed complexity analysis is provided in Appendix~\ref{sec: complexity analysis}.

\subsection{Completeness of Geometric Subgraph GNNs}

GeoNGNN provides a simple approach to extend traditional subgraph GNNs to geometric scenarios with remarkable geometric expressiveness. Indeed, the realm of traditional graph learning literature contains a multitude of subgraph GNNs such as DSS-GNN~\citep{bevilacqua2021equivariant}, GNN-AK~\citep{zhao2021stars}, OSAN~\citep{qian2022ordered} and so on~\citep{you2021identity, frasca2022understanding}. Extending these models to geometric settings can significantly \textit{enlarge the design space} of geometric models, and potentially introduce valuable \textit{inductive biases}. In this section, we take a pioneering step to do this, subsequently establishing the geometric completeness of all these models.

We first define the \textit{general geometric subgraph GNN}, a broad family of geometric models, by slightly adapting the unweighted graph models from \citet{zhang2023complete} to handle geometric scenarios. For simplicity, we provide an intuitive overview and outline the modifications here, while self-contained and formal definitions can be found in Appendix~\ref{sec: geometric subgraph gnns}.

\begin{definition}[General geometric subgraph GNN, \textit{informal}]
    \label{def: canonical geometric subgraph GNN}
    A general geometric subgraph GNN takes point clouds $P\in \RRt$ (potentially with node features $X\in \RR^{n\times d}$, such as atomic numbers) as input, treats it as a distance graph with interaction cutoff $r_{\text{cutoff}}$, and:
    \begin{itemize}[itemsep=2pt,topsep=-2pt,parsep=0pt,leftmargin=10pt]
        \item Utilizes node marking with $r_{\text {sub}}$-size ego subgraph as the subgraph generation policy. 
        \item Stacks multiple geometric subgraph GNN layers, each following \textit{general subgraph GNN layer} defined in \citet{zhang2023complete}, which could arbitrarily include single-point, global, and local operations for aggregating graph information at different levels. The only modification is to the local operations: when updating $h_{uv}$ (the representation of node $v$ in $u$’s subgraph), distance information is additionally integrated, thus the local operations aggregate information $\msl\big(h_{uw}, d_{vw}\big) \mid w \in \mathcal{N}(v)\msr$ and $\msl\big(h_{wv}, d_{uw}\big) \mid w \in \mathcal{N}(u)\msr$ respectively.        
        \item It adopts vertex-subgraph or subgraph-vertex pooling schemes~\citep{zhang2023complete} to summarize node and subgraph features and produce the final graph representation.
    \end{itemize}
\end{definition}

It is noteworthy that GeoNGNN (without outer layers) represents a particular instance of general geometric subgraph GNNs, focusing solely on \textit{intra-subgraph} message propagation. Beyond this scope, general geometric subgraph GNNs have the capability to engage in \textit{inter-subgraph} message passing in diverse manners and adopt different pooling schemes. Now, we show that the whole family, \textit{with at least one local aggregation}, is complete under exactly the same conditions as GeoNGNN.

\begin{theorem}[Completeness for general geometric subgraph GNNs] When the general geometric subgraph layer number is larger than some constant $C$ (irrelevant to the node number), and the last two conditions specified in Theorem~\ref{theorem: completeness of GeoNGNN} are met, all general geometric subgraph GNNs in Definition~\ref{def: canonical geometric subgraph GNN} with at least one local aggregation are E(3)-complete.
    \label{theorem: completeness of general geometric subgraph GNNs}
\end{theorem}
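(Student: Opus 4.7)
The plan is to reduce the claim to the already-established completeness of GeoNGNN in \Cref{theorem: completeness of GeoNGNN}. The key observation is that GeoNGNN itself fits into \Cref{def: canonical geometric subgraph GNN} as a specific instance: it uses only intra-subgraph local operations of the first form in the definition, has no non-trivial single-point or global aggregations, and ends with a subgraph-vertex pooling that first reads out each subgraph and then pools the resulting subgraph-level features. Hence it suffices to show that any model in the family possessing at least one local aggregation can, within a bounded number of layers $C$ independent of $n$, simulate GeoNGNN's inner message-passing on every subgraph.

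First I would observe that the other ingredients in the general framework cannot reduce expressiveness. Because all intermediate functions are assumed injective and each can be made to preserve its input (e.g., by concatenating the old feature with the aggregation result), any single-point or global operation interleaved with the local operations can act as an identity on the pair-indexed component $h_{uv}$, so their presence does not destroy information that the local operation produces. The same holds for pooling: both vertex-subgraph and subgraph-vertex pooling, applied through sufficiently expressive injective multiset functions, can recover the same readout structure GeoNGNN uses, where each subgraph rooted at a node is summarized and then the summaries over roots are aggregated.

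Next I would split into cases according to which local operation is available. If the intra-subgraph local operation $\msl(h_{uw}, d_{vw}) \mid w \in \mathcal{N}(v)\msr$ is present, then one such layer is exactly one DisGNN step inside each subgraph, so stacking at least $N_{\text{in}}\ge 5$ of them (with identity-like single-point or global operations between them if required by the layer template) reproduces GeoNGNN's inner GNN, after which the pooling plays the role of its outer readout, and completeness is inherited directly from \Cref{theorem: completeness of GeoNGNN}. If only the cross-subgraph local operation $\msl(h_{wv}, d_{uw}) \mid w \in \mathcal{N}(u)\msr$ is available, I would argue that under the fully-connected condition a constant number of composed steps can still recover the inner computation of GeoNGNN: the initialization already encodes the root marking $\mathds{1}_{u=v}$ and the pairwise distance $d_{uv}$, and iterating the cross-subgraph update lets each $h_{uv}$ absorb, injectively, the multiset $\msl(h_{wv}, d_{uw}) \mid w \in [n]\msr$. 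Because every $h_{wv}$ already carries the anchor information of root $w$ and the distance $d_{wv}$, composing two cross-subgraph updates yields the triangular distance encodings centered at pairs of distinct anchors which underlie the proof of \Cref{Theorem: Asymmetric point clouds are identifiable}, so a constant depth $C$ suffices.

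The main obstacle will be the cross-subgraph-only case: one must formalize that the composed cross-subgraph updates faithfully transport both the root markings and all pairwise distances, so that the resulting feature $h_{uv}$ injectively encodes the same subgraph-plus-anchor data that DisGNN consumes in GeoNGNN's inner phase. This amounts to carefully tracking, layer by layer, which scalar quantities are contained in $h_{uv}^{(l)}$ and invoking injectivity of each update to show that they remain separable. Once this simulation is in place and combined with an injective pooling, invoking \Cref{theorem: completeness of GeoNGNN} closes the argument, establishing E(3)-completeness for the entire family in \Cref{def: canonical geometric subgraph GNN} whenever at least one local aggregation is present and the conditions $r_{\text{cutoff}}=r_{\text{sub}}=+\infty$ are met.
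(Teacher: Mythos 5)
Your overall strategy is the same as the paper's: show that any member of the family can implement GeoNGNN and inherit completeness from \Cref{theorem: completeness of GeoNGNN}, and your treatment of the case where the intra-subgraph operation $\gagg^\mathsf{L}_{\mathsf{u}}$ is present matches the paper's. However, two steps you rely on are not actually established. First, the pooling claim: you assert that both pooling schemes ``can recover the same readout structure GeoNGNN uses,'' but this is false as stated. GeoNGNN's readout is VS pooling (pool over $v$ within each subgraph $u$, then over $u$; incidentally you misname it SV), and an injective function of the SV nested multiset $\msl\msl h_{uv}\mid u\in\mathcal V_G\msr\mid v\in\mathcal V_G\msr$ cannot in general produce the VS nested multiset $\msl\msl h_{uv}\mid v\in\mathcal V_G\msr\mid u\in\mathcal V_G\msr$, because the nested multiset does not record which entries share a common root $u$. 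The paper closes this by spending one additional aggregation layer to push all of subgraph $u$'s information into the marked diagonal entry $h_{uu}$ (feasible exactly because $\gagg^\mathsf{L}_{\mathsf{u}}$ is available, $r_{\text{sub}}=r_{\text{cutoff}}=+\infty$, and the root is node-marked), after which the diagonal entries are identifiable inside each inner multiset of the SV structure and suffice to emulate VS pooling. Your ``identity-preserving'' remarks about single-point and global operations are fine, but they do not substitute for this step.

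Second, the case where only the cross-subgraph operation $\gagg^\mathsf{L}_{\mathsf{v}}$ is present is left as an acknowledged obstacle, and the sketch you give (``composing two cross-subgraph updates yields the triangular distance encodings'') is not correct as a route: the aggregated features $h_{wv}$ are rooted at varying anchors $w$, so two compositions do not directly produce the fixed-two-anchor encoding used in \Cref{Theorem: Asymmetric point clouds are identifiable}. The paper instead disposes of this case with a short symmetry argument (\Cref{proposition: geometric symmetry}): exchanging the roles of $u$ and $v$ throughout swaps $\gagg^\mathsf{L}_{\mathsf{v}}$ with $\gagg^\mathsf{L}_{\mathsf{u}}$ and VS with SV pooling, so the model is equivalent to one already covered by the first case. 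Equivalently, for fixed $v$ the cross-subgraph updates are precisely DisGNN run on the point cloud with node $v$ marked, i.e., GeoNGNN's inner GNN with indices transposed, and the induced pooling mismatch is then handled by the same extra-layer trick above. Until you either carry out this transposition argument (or an equivalent layer-by-layer derivation) and justify the pooling conversion, the proof is not closed.
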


\textbf{Key idea.} It can be shown that any type of general geometric subgraph GNNs, which could employ any combination of aggregation and pooling schemes, can implement~\citep{frasca2022understanding} GeoNGNN, which is established as an E(3)-complete model in Theorem~\ref{theorem: completeness of GeoNGNN}. Consequently, any general geometric subgraph GNN can also achieve E(3)-completeness under the same conditions as GeoNGNN.

Concerning prior research on traditional subgraph GNNs~\citep{bevilacqua2021equivariant, zhao2021stars, qian2022ordered, you2021identity, frasca2022understanding}, their geometric counterparts can be defined correspondingly, by substituting all local aggregation schemes with \textit{distance-aware} counterparts. We denote their geometric counterpart by prefixing \textit{Geo} to their original names, for instance, GeoSUN~\cite{frasca2022understanding}. Consequently, it can be established that some of them precisely fall into the general definition~\ref{def: canonical geometric subgraph GNN}, such as GeoOSAN~\citep{qian2022ordered}, and for the others such as GeoGNN-AK~\citep{zhao2021stars} and GeoSUN~\cite{frasca2022understanding}, while even though they do not exactly match the general definition, they can still implement GeoNGNN, thereby exhibiting E(3)-completeness (Theorem~\ref{theorem: Completeness for geometric counterparts of well-known subgraph GNNs}). Please refer to Appendix~\ref{sec: Geometric Counterparts of Well-Known Traditional Subgraph GNNs} for details about all of these.

We note that all subgraph GNNs in this section have been proven to be strictly weaker than 2-FWL~\citep{cai1992optimal} \textit{in traditional graph learning context}~\citep{zhang2023complete}. Specifically, there exist pairs of unweighted graphs that can be distinguished by 2-FWL but remain indistinguishable by these subgraph GNNs. Moreover, these subgraph GNNs themselves establish a strict expressiveness hierarchy~\citep{zhang2023complete}. Interestingly, our findings reveal that \textit{all of these discrepancies diminish} when these models are extended to geometric scenarios with point clouds by leveraging distance graphs. This phenomenon could be attributed to the low-rank nature of distance graphs, whose intrinsic dimension is less than the point cloud space dimension $3n$ rather than equals to their ambient dimension $n^2$.

\vspace{-1mm}
\subsection{Completeness of Well-Established Invariant Models}
\label{sec: completeness of well-established}
Based on the completeness of GeoNGNN, we move on to establish the E(3)-completeness of several well-established invariant models, including DimeNet~\citep{dimenet}, GemNet~\citep{gemnet} and SphereNet~\citep{liu2021spherical}. These models do not exactly learn on subgraphs, however, can still be mathematically aligned with GeoNGNN. Formal descriptions of these models can be found in Appendix~\ref{sec: dimenet comepleteness}.

\begin{theorem}[E(3)-Completeness of DimeNet, SphereNet, GemNet] When the following conditions are met, DimeNet, SphereNet\footnote{Here in SphereNet we do not consider the relative azimuthal angle $\varphi$, since SphereNet with $\varphi$ is not E(3)-invariant, while E(3)-completeness is defined on E(3)-invariant models. Note that the exclusion of $\varphi$ only results in weaker expressiveness.} and GemNet are E(3)-complete.
\begin{itemize}[itemsep=2pt,topsep=-2pt,parsep=0pt,leftmargin=10pt]
    \item The aggregation layer number is larger than some constant $C$ (irrelevant to the node number).
    \item They initialize and update all edge representations, i.e., $r_{\text{embed}} = +\infty$.
    \item They interact with all neighbors, i.e., $r_{\text{int}} = +\infty$.\footnote{In DimeNet and GemNet, nodes $i$ or $j$ are excluded when aggregating neighbors for edge $ij$. The condition requires the inclusion of these end nodes.}.
\end{itemize}
\label{theorem: completeness of DimeNet}
\end{theorem}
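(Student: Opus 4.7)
The plan is to establish E(3)-completeness of DimeNet, SphereNet, and GemNet by showing that each of them can simulate GeoNGNN, which is already E(3)-complete under analogous conditions by Theorem~\ref{theorem: completeness of GeoNGNN}. The key conceptual step is that each of these architectures maintains embeddings $m_{ji}$ indexed by directed edges (or triplets/quadruplets in GemNet), and these pair-indexed embeddings naturally correspond to node-in-subgraph embeddings $h_{ij}$ in GeoNGNN where $i$ plays the role of the marked subgraph root. The three conditions in the theorem are exactly what is needed to make this correspondence tight: $r_{\text{embed}} = +\infty$ ensures every edge carries an embedding (analogue of $r_{\text{sub}} = +\infty$), $r_{\text{int}} = +\infty$ ensures full-neighbor aggregation with the end nodes included (analogue of $r_{\text{cutoff}} = +\infty$), and sufficient depth matches the inner-GNN convergence required in Theorem~\ref{theorem: completeness of GeoNGNN}.

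First I would formalize the correspondence $m_{ji} \leftrightarrow h_{ij}$: the edge embedding $m_{ji}$ is interpreted as the representation of $j$ inside the subgraph rooted at $i$. The initialization of $m_{ji}^{(0)}$ in each model is an injective function of $(x_i, x_j, d_{ji})$, which already encodes both $x_j$ and the marking of $i$ prescribed by~\Cref{eq: inner init}; the degenerate diagonal $j=i$ is handled by padding with self-loops that the end-node inclusion allows. Next I would show that the update of each model subsumes GeoNGNN's inner update. For DimeNet, aggregating $(m_{kj}, d_{kj}, \alpha_{kj,ji})$ over neighbors $k$ of $j$ recovers $d_{ki}$ via the law of cosines from $(d_{kj}, d_{ji}, \alpha_{kj,ji})$, so DimeNet's aggregation contains at least the information of GeoNGNN's inner aggregation $\msl (h_{ik}, d_{kj}) \mid k \in \mathcal{N}(j)\msr$. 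SphereNet's polar angle $\theta$ plays the same role and GemNet's quadruplet interactions are strictly richer. An injective parametrization then lets each model reproduce, layer by layer, GeoNGNN's inner computation. The readout of each model, which pools edge embeddings into atom-level and then graph-level embeddings, implements the two-level pooling of GeoNGNN: the inner summary $h_i^{(0)} = f_{\text{output, inner}}(\msl h_{ij}^{(N_{\text{in}})} \msr)$ followed by the outer aggregation over $i$.

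The main obstacle is the technical alignment between the edge- or triplet-centric data flow of these architectures and the explicitly nested subgraph data flow of GeoNGNN. Concretely it requires careful treatment of the $j=i$ diagonal which is absent in the edge-indexed formulation, verification that each model's particular angular encoding actually recovers the third-side distance $d_{ki}$ (trivial for DimeNet and GemNet, more delicate for SphereNet once $\varphi$ is dropped, since one must check that losing the azimuthal angle still leaves enough information to match the distance data used by GeoNGNN), and a bookkeeping argument that the end-node inclusion granted by $r_{\text{int}} = +\infty$ exactly restores the aggregation domain expected by GeoNGNN. Once these alignments are formalized, E(3)-completeness of GeoNGNN transfers to DimeNet, SphereNet, and GemNet via an implementation argument analogous to the one used in Theorem~\ref{theorem: completeness of general geometric subgraph GNNs}.
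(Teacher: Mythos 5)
Your high-level strategy is the same as the paper's: reduce to Theorem~\ref{theorem: completeness of GeoNGNN} by showing each model can implement GeoNGNN (and, in the paper, SphereNet and GemNet are then reduced to DimeNet). However, there is a genuine gap at the central step. Under your correspondence $m_{ji}\leftrightarrow h_{ij}$, GeoNGNN's inner update of $h_{ij}$ aggregates $\msl (h_{ik}, d_{kj}) \mid k \msr$, i.e.\ hidden states \emph{sharing the same root} $i$, whereas DimeNet's update of $m_{ji}$ aggregates $\msl (m_{kj}, d_{kj}, \alpha_{kj,ji}) \mid k\msr$, i.e.\ messages on edges $k\!\to\! j$, which under the same correspondence are representations rooted at the \emph{other} endpoint (a transposed index pattern $h_{ki}$ vs.\ $h_{ik}$ in the paper's notation). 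Your justification --- that the law of cosines recovers $d_{ki}$, hence DimeNet's aggregation ``contains at least the information of'' GeoNGNN's --- does not close this: recovering a distance is not the same as having access to the evolving hidden state $h_{ik}$, so a single DimeNet layer cannot be claimed to subsume a GeoNGNN inner layer, and your ``layer by layer'' simulation does not go through as stated. You flag the alignment as the main obstacle, but you treat it as bookkeeping rather than supplying the mechanism that actually resolves it.

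The paper's resolution is a concrete two-layer simulation: a first DimeNet layer computes the transpose, $h_{ij}^{(l+\frac12)} \to h_{ji}^{(l)}$, by injectively \emph{selecting} the unique term with $d_{kj}=0$ (i.e.\ $k=j$) from the aggregated multiset --- this is precisely why the theorem's footnote demands inclusion of the end nodes --- and a second layer then aggregates $\msl (h_{ki}^{(l+\frac12)}, d_{kj})\msr \to \msl (h_{ik}^{(l)}, d_{kj})\msr$, matching GeoNGNN's update; the same trick (plus one extra layer) fixes the transposed double pooling in the readout, and the analogous zero-distance selection ($d_{bi}=0\Rightarrow b=i$) is how GemNet's quadruplet update is collapsed to DimeNet's form, rather than by a generic ``strictly richer'' argument. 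SphereNet (without $\varphi$) is then handled simply because its update has DimeNet's form with extra node-level inputs that can be ignored. To repair your proposal you would need to add this index-swap construction (or an equivalent device) explicitly; without it, the key implementation claim is unsupported.
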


\textbf{Key idea.} The key insight underlying is that all these models track edge representations ($h_{ij}^{\text {edge}}$ for edge $(i,j)$), which can be mathematically aligned with the node-subgraph representations tracked in GeoNGNN ($h_{ij}^{\text {subg}}$ for node $j$ in subgraph $i$). Moreover, the additionally incorporated features, such as angles, can all be equivalently expressed by multiple distances. Consequently, these three models can all implement GeoNGNN, and thereby achieving completeness.

Note that DimeNet is a relatively simpler invariant model that aggregates neighbor information in a \textit{weaker manner} compared to existing 2-FWL-like complete geometric models such as 2F-DisGNN~\citep{li2024distance}. Specifically, when updating $h_{ij}$, DimeNet can be equivalently considered as aggregating $(h_{ki}, d_{kj})$ for a specific neighbor $k$, while 2F-DisGNN aggregates $(h_{ik}, h_{kj})$. The latter incorporates a more informative hidden representation $h_{kj}$, which is essential in the proof of 2F-DisGNN's completeness~\citep{li2024distance, delle2023three}, instead of $d_{kj}$ in the former. Nevertheless, our findings first show that these models are equally E(3)-complete under fully-connected conditions. Consequently, future developments like GemNet that integrate higher-order geometry could be unnecessary in terms of boosting theoretical expressiveness under such conditions.

\subsection{Summarization and Discussion}
\label{sec:discussion_on_fully_connected}

\textbf{Fully-Connected Condition.} Thus far, we have characterized a broad collection of E(3)-complete invariant models under similar conditions. Among these conditions, the condition of \textit{fully-connected} geometric graphs (or equivalently, \textit{infinite} cutoffs) is required. Notably, this condition is consistent with all prior works that rigorously characterize invariant models/descriptors in the same sense, including~\citep{kurlin2023polynomial,widdowson2022resolving,widdowson2023recognizing,li2024distance,delle2023three,hordan2024complete,hordan2024weisfeiler}. And due to significant local information loss during invariant message passing~\citep{joshi2023expressive, du2024new}, even under this condition the characterization remains highly nontrivial. Indeed, removing the fully-connected condition would require conditions about specific forms of sparsity, which would be considerably more demanding than the overall connectivity typically required by equivariant models~\citep{joshi2023expressive, du2024new, wang2024rethinking, sverdlov2024expressive} that can maintain local information through equivariant features. Such characterization is left for future work.

\textbf{Theoretical Characterization vs. Practical Use.} In practical scenarios, \textit{efficiency} and \textit{generalization} are often prioritized, and local/sparse connectivity can typically offer empirical advantages in these aspects~\citep{musaelian2023learning}. Thus, strictly adhering to the complete condition which requires full connectivity is unnecessary, and finding a balance is crucial, as demonstrated in our additional experiments in~\Cref{sec: supplementary experiment information} (\Cref{fig: eff of subgraph size}). However, this does not diminish the importance of theoretical characterization, since theoretical completeness provides an \textit{upper bound} for the parametric model’s potential--when combined with MLPs~\citep{hornik1989multilayer}, complete models can achieve universal approximation over continuous invariant functions~\citep{hordan2024complete, li2024distance}. This is analogous to the universal approximation property for MLPs~\citep{hornik1989multilayer} and the Turing completeness for RNNs~\citep{siegelmann1992computational}, where practical implementations do not (and typically cannot) satisfy all theoretical conditions, yet these results indicate their superiority over weaker structures that cannot achieve certain approximations even with unlimited resources. 


\textbf{SE(3)-complete Counterpart.} Finally, we present a provable SE(3)-complete variant of GeoNGNN, capable of distinguishing chiral molecules, by making a minor modification to the distance features through the addition of a orientation sign. This is further detailed in~\Cref{sec: GeoNGNN-C}.

\vspace{-1mm}
\section{Experiments}
\label{sec: exper}
\vspace{-1mm}

In this section, we conduct additional assessments to validate our theoretical claims. The first experiment aims to verify the conclusion that DisGNN is nearly complete by assessing the proportion of its unidentifiable cases in real-world point clouds. The second experiment is designed to evaluate whether the complete models consistently demonstrate separation power for challenging pairs of point clouds, where numerical precision may influence the outcomes. We provide more evaluations of GeoNGNN on practical molecular-relevant tasks in~\Cref{sec: supplementary experiment information}, which could offer further insights.

\vspace{-1mm}

\subsection{Assessment of Unidentifiable Cases of DisGNN}

\vspace{-1mm}

\label{sec:assessment_on_unidentifiable_case}
In~\Cref{theorem: measure 0}, we have rigorously shown the rarity of symmetric point cloud sets, which are supersets of the unidentifiable set of DisGNN. However, since real-world point clouds are typically subject to \textit{slight noise}, requiring exact symmetry would be overly restrictive. Therefore, we address a more challenging setting by explicitly accounting for noise and evaluating the rarity of \textbf{relaxed symmetric} point clouds in practical scenarios. To this end, we first define two noise tolerances that allow us to relax the exact $\mc C$- and $\mc D$-symmetry.

\textbf{Noise Tolerances.} 1) The \textit{rounding number} $r$ for distance-related calculations. When applying algorithms $\mathcal{C}$ and $\mathcal{D}$, we round the distance values to $r$ decimal places for robustness to noise. 2) The \textit{deviation error} $\epsilon$. In~\Cref{def: symmetry}, we define symmetry by the set of extended "mass" centers. We provide an equivalent definition in~\Cref{proposition: alternative}, which shows that a point cloud is $\mc A$-symmetric if and only if the geometric centers of all sub-point clouds, partitioned by distinct node features of $\mc A$, coincide. This alternative definition allows us to define another noise tolerance: we now consider a point cloud as symmetric if all sub-point clouds' centers described above lie within a ball of radius $\epsilon$. Together, $r$ and $\epsilon$ define a relaxed symmetry that accounts for noise in real-world point clouds. When $r \to +\infty$ and $\epsilon \to 0$, the relaxed symmetry becomes the exact theoretical symmetry.

We are now ready to evaluate the proportion of relaxed symmetric point clouds in real-world datasets. We select two representative datasets, namely QM9~\citep{qm9-1, qm9-2} and ModelNet40~\citep{wu20153d}, for this assessment. The QM9 dataset comprises approximately 130K molecules, while the ModelNet40 dataset consists of roughly 12K real-world point clouds categorized into 40 classes, including objects such as chairs. We first rescale all point clouds, and then fix the rounding number to small values and evaluate the proportion of symmetric point clouds with respect to different values of $\epsilon$. Please refer to~\Cref{sec: assessment_on_unidentifiable_case_details} for detailed settings.

Results are shown in Figure~\ref{fig: assessment}. Here are several key observations: 1) For the QM9 dataset, even with the largest deviation error ($10^{-1}$), the proportion of ${\mathcal{C}}$- and ${\mathcal{D}}$-symmetric point clouds is less than $\sim$0.15\% of the entire dataset. Since ${\mathcal{C}}$- and ${\mathcal{D}}$-symmetric point cloud sets are supersets of the unidentifiable set of DisGNN, the unidentifiable proportion is therefore no more than $\sim$0.15\%. With the strictest deviation error, only 0.0046\% of the graphs (6 out of $\sim$130K) exhibit ${\mathcal{D}}$-symmetry. 2) For the ModelNet40 dataset, when the deviation error is less than $10^{-2}$, only 1 point cloud exhibit $\mc C$-symmetry. Moreover, no ${\mathcal{D}}$-symmetric point clouds are found across all deviation errors.

To summarize, the statistical results suggest that the occurrence of unidentifiable graphs in DisGNN is practically negligible in real-world scenarios, even when the criterions are significantly relaxed. This supports the conclusion that DisGNN is almost E(3)-complete.

\begin{figure}[!h]
    \centering
    \begin{subfigure}
        \centering\includegraphics[width=0.4\columnwidth,height=5.5cm,keepaspectratio]{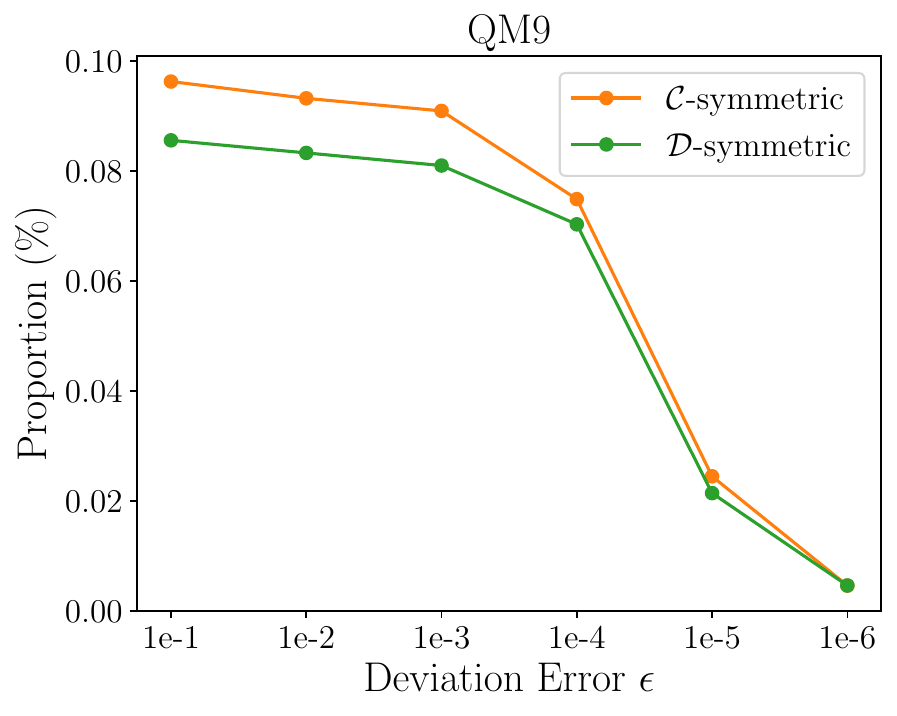}
    \end{subfigure}
    \begin{subfigure}
        \centering
        \includegraphics[width=0.4\columnwidth,height=5.5cm,keepaspectratio]{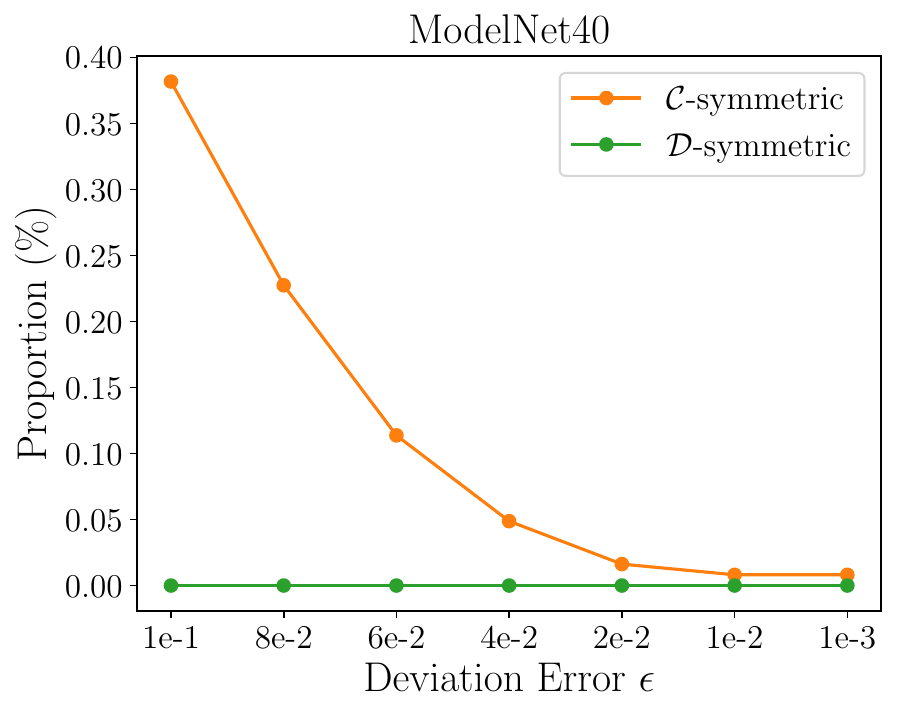}
    \vspace{-1mm}
    \end{subfigure}
\caption{Assessment of symmetric point clouds in real-world datasets. (a) Proportion of symmetric point clouds in QM9 ($r=2$). (b) Proportion of symmetric point clouds in ModelNet40. The proportion of ${\mathcal{D}}$-symmetric point clouds in ModelNet40 is zero across all deviation errors ($r=1$).}
\vspace{-1mm}
\label{fig: assessment}
\end{figure}

\subsection{Separation Power on Synthetic Point Cloud Pairs}

To construct hard-to-distinguish counterexamples, we develop a geometric expressiveness dataset based on the counterexamples proposed by~\citet{li2024distance}. The synthetic dataset consists of 10 isolated and 7 combinatorial counterexamples. Each counterexample is composed of a pair of highly symmetric point clouds (all of which are $\mc D$-symmetric, as described in Section~\ref{sec: how powerful is DisGNN}), which are non-isomorphic yet \textit{indistinguishable by DisGNN}. We provide some examples in~\Cref{fig: ce}, and please refer to~\Cref{Sec: Detailed Experimental Settings} for further settings.
\begin{table}[!h]
\centering
\vspace{-2mm}
\caption{Separation results on the constructed geometric expressiveness dataset. Models for which we have theoretically established completeness are highlighted in \textcolor{gray!70!white}{gray}.}
\label{tab: counterexample}
\resizebox{0.85\columnwidth}{!}{
\begin{tabular}{l|cccccc|cc}
\hline
                    & \multicolumn{6}{c|}{Invaraint}                                    & \multicolumn{2}{c}{Equivaraint}                      \\
\hline
                    & SchNet & DisGNN & \cellcolor{gray!30!white} DimeNet & \cellcolor{gray!30!white} SphereNet & \cellcolor{gray!30!white} GemNet & \cellcolor{gray!30!white} GeoNGNN &  PaiNN  &  MACE  \\
                    \hline
Isolated (10 cases) & 0\%    & 0\%    & 100\%   & 100\%  & 100\%                       & 100\%   & 100\%                  & 100\%                    \\
Combined (7 cases)  & 0\%    & 0\%    & 100\%   & 100\%  & 100\%                       & 100\%   & 100\%                  & 100\%                   \\
\hline
\end{tabular}
}
\end{table}

Results are presented in Table~\ref{tab: counterexample}. As shown, our established complete models can all distinguish these challenging pairs effectively, whereas DisGNN and SchNet cannot. This supports our theory and indicates that numerical precision is not impacting these complete models' separation ability here. Interestingly, two equivariant models, PaiNN~\citep{PaiNN} and MACE~\citep{batatia2022mace} that leverages vectors and high-order tensors respectively, also effectively distinguish all the pairs. This prompts further question of whether they are also complete, especially under sparse connections or \textit{finite} tensor orders, and requires further investigation.

\section{Conclusion and Limitation}
\label{sec: conclusion and limitation}
\textbf{Conclusion.} In this study, we thoroughly analyze a wide range of invariant models' theoretical expressiveness \revisedd{under fully-connected condition}. Specifically, we rigorously characterize the expressiveness of DisGNN, showcasing that all its unidentifiable cases exhibit $\mc {D}$-symmetry and have a measure of 0. We then establish a large family of E(3)-complete models, which encompasses GeoNGNN, geometric subgraph GNNs, as well as three established models - DimeNet, GemNet, and SphereNet. This contributes significantly to a comprehensive understanding of invariant geometric models. Moreover, the newly introduced geometric subgraph GNNs notably enlarge the design space of expressive geometric models. Experiments further validate our theoretical findings.

\textbf{Limitation.} The rigorous E(3)-completeness characterization for invariant models is under the conditions of fully connected graphs, which can be limited and have been throughout discussed in~\Cref{sec:discussion_on_fully_connected}. The extent to which invariant models can exhibit high expressiveness on general \textit{sparse} graphs remains an open question that needs further investigation. Additionally, future research is needed to characterize the expressiveness of vector models that rely solely on 1-order equivariant representations and adopt atom-level representations, such as PaiNN~\citep{PaiNN}, which show promising experimental results but still lack theoretical expressiveness characterization.

\section*{Acknowledgment}

This work is supported by the National Natural Science Foundation of China (62276003).

\bibliography{iclr2025_conference}
\bibliographystyle{iclr2025_conference}
\appendix
\onecolumn

\section{Extended discussion of related work}

\label{sec: related work}
\paragraph{Invaraint geometric models}
To leverage the rich 3D geometric information contained in point clouds in permutation- and E(3)-invariant manner, early work~\citet{schnet} integrated 3D Euclidean distance into the MPNN framework~\citep{MPNN} and aggregate geometric information iteratively. Nevertheless, this model exhibits a restricted capacity to distinguish non-isomorphic point clouds even on fully-connected graphs as substantiated by~\citet{li2024distance, pozdnyakov2022incompleteness, pozdnyakov2020incompleteness, hordan2024complete}. Subsequent invariant models~\citep{dimenet, gasteiger2020fast, gemnet, gasteiger2022gemnet, liu2021spherical, wang2022comenet, li2024distance} have endeavored to enhance their geometric expressiveness through the incorporation of carefully designed high-order geometric features~\citep{dimenet, gasteiger2020fast, gemnet, gasteiger2022gemnet}, adopting local spherical representations~\citep{wang2022comenet} or higher-order distance features~\citep{li2024distance}. These designs have greatly improved the models' performance on downstream tasks, however, most of them lack theoretical guarantees of geometric completeness over the whole point cloud spaces.

\section{Proof of Main Conclusions in Main Body}
\label{sec: proof}

\subsection{Measure of Symmetric and Unidentifiable Point Clouds}
\label{sec:zero_measure}

In this subsection, we aim to prove that the Lebesgue measure on $\mathbb{R}^{n\times 3}$ of the $\mathcal{C}$-symmetric point cloud set, the $\mathcal{D}$-symmetric point cloud set, and the unidentifiable point cloud set are all \textbf{zero}. 

\textbf{\cref{theorem: measure 0}} \emph{(Unidentifiable set of DisGNN has measure zero) The Lebesgue measure on $\mathbb{R}^{n\times 3}$ of the $\mathcal{C}$-symmetric, $\mathcal{D}$-symmetric, and unidentifiable point cloud sets is zero.}

We first show that the $\mathcal{C}$-symmetric point cloud set has measure zero. We denote the set containing all \(\mathcal{C}\)-symmetric point clouds as \(\mathbb{R}^{n \times 3}_{\mathcal{C}}\). Note that
\[
\mathbb{R}^{n\times 3}_{\mathcal{C}} \subseteq \mathbb{R}^{n\times 3}_{\text{super }\mathcal{C}} := \{P\in \mathbb{R}^{n\times 3} \mid \exists i\neq j \in [n] \text{ such that } \text{cond}_{i,j} \text{ holds} \},
\]
where 
\[
\text{cond}_{i,j} := \| p_i - \text{mean}_k (p_k)\|^2 - \| p_j - \text{mean}_k (p_k)\|^2 = 0,
\]
which is essentially a non-trivial polynomial equality constraint. Here, $\text{mean}_k (p_k)$ calculates the geometric center of the point cloud, and $\| p_i - \text{mean}_k (p_k)\|$ is essentially the distance between $i$ and the geometric center. The set $\mathbb{R}^{n\times 3}_{\text{super }\mathcal{C}}$ is a superset of $\mathbb{R}^{n\times 3}_{\mathcal{C}}$, which can be validated as follows: 

Assume some element $P$ belongs to $\mathbb{R}^{n\times 3}_{\mathcal{C}}$ but not to $\mathbb{R}^{n\times 3}_{\text{super }\mathcal{C}}$. Then, we have 
\[
\forall i\neq j \in [n], \quad \text{cond}_{i,j} \text{ for $P$ does not hold}.
\]
This essentially means that all the nodes are embedded with different features after applying the $\mathcal{C}$ algorithm. Then, it is obvious that $P$ is $\mathcal{C}$-asymmetric, which contradicts the assumption that $P \in \mathbb{R}^{n\times 3}_{\mathcal{C}}$. Hence, we conclude that 
\[
\mathbb{R}^{n\times 3}_{\mathcal{C}} \subseteq \mathbb{R}^{n\times 3}_{\text{super }\mathcal{C}}.
\]

The set $\mathbb{R}^{n \times 3}_{\text{super }\mathcal{C}}$ defines an algebraic manifold with a non-trivial polynomial equality constraint, thus having dimension at most $3n - 1$. Therefore, according to~\citep{mityagin2015zero, hordan2024complete}, the set $\mathbb{R}^{n \times 3}_{\text{super }\mathcal{C}}$ has measure zero, which implies that $\mathbb{R}^{n \times 3}_{\mathcal{C}}$ also has measure zero.

According to~\Cref{proposition: Proper Subset}, since $\mathbb{R}^{n\times 3}_{\mathcal{C}}$ is the largest set under consideration, it follows that both the $\mathcal{D}$-symmetric point cloud set and the unidentifiable point cloud set also have measure zero.

\subsection{Preparation for Reconstruction Proof}

In the following proof, we particularly focus on models' ability to encode input representations without significant information loss. We provide a formal definition of an important concept \textit{derive} below:

\begin{definition}
    (\textbf{Derive})
    Given input representations $P \in \mathbb{P}$, let $f:\mathbb{P} \to \mathbb{O}_1$ and $g: \mathbb{P} \to \mathbb{O}_2$ be two property encoders defined on $\mathbb{P}$.
    If there exists a function $h: \mathbb{O}_1\to \mathbb{O}_2$ such that for all $P\in \mathbb{P}$, $h(f(P))=g(P)$, we say that $f$ can derive $g$. With a slight abuse of notation, we say that $f(P)$ can derive $g(P)$, denoted as $f(P) \to g(P)$.
\end{definition}

For example, consider the space $\mathbb{P} = \RRt$ comprising all point clouds of size $n$. Let $f$ denote an encoder that computes $f(P) = (\sum_{i=1}^{n} p_i, n)$, which represents the sum of the nodes' coordinates and the number of nodes, while $g$ being an encoder calculating the geometric center of $P$, it follows that $f(P)\to g(P)$, since we can calculate the geometric center from the sum of the nodes' coordinates and the number of nodes. In the following analysis, we consider $\mathbb{P} = \RRt$ by default. Notice that derivation exhibits transitivity: if $f_1\to f_2$ and $f_2 \to f_3$, it follows that $f_1 \to f_3$. 

Intuitively, ``derive'' is a concept describing the relation of two properties, i.e., whether property $f(P)$ contains all the information needed to calculate $g(P)$. Obviously, if an encoder $f$ can embed all the information needed for reconstructing a given point cloud, then it is E(3)-complete:

\begin{proposition}
    
    \label{proposition: reconstruction implies E(3)-completeness}
    Consider $\mathbb{P} = \RRt$ being the space of all point clouds of size $n$, and $f$ satisfying permutation- and E(3)-invariance, then: $f(P) \to P$ up to permutation and Euclidean isometry $\iff$ $f$ is E(3)-complete. 
\end{proposition}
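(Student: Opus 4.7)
The plan is to prove the two directions separately, viewing the statement as an equivalence between (i) being able to reconstruct the equivalence class of $P$ from $f(P)$ and (ii) being injective on equivalence classes. For the ($\Rightarrow$) direction, I would argue by contradiction: suppose $f(P)\to P$ up to permutation and Euclidean isometry via some function $h$, yet $f$ is not E(3)-complete. Then there exist non-isomorphic $P_1,P_2$ with $f(P_1)=f(P_2)$, hence $h(f(P_1))=h(f(P_2))$, which would force $P_1$ and $P_2$ to lie in the same equivalence class under permutation and E(3)---contradicting non-isomorphism. This direction is essentially immediate from the definitions.

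For the ($\Leftarrow$) direction, the strategy is to construct the derivation function $h$ explicitly by inverting $f$ on the quotient. Let $\sim$ denote the equivalence relation on $\RRt$ induced by permutation and E(3) transformations, and let $[\,\cdot\,]$ denote equivalence classes. First, I would observe that permutation- and E(3)-invariance of $f$ means $f$ factors through the quotient $\RRt/\!\!\sim$, inducing a well-defined map $\bar f\colon \RRt/\!\!\sim \,\to\, \mathbb{O}_1$ with $\bar f([P])=f(P)$. Next, E(3)-completeness gives that $\bar f$ is \emph{injective}: if $[P_1]\neq[P_2]$ then $P_1,P_2$ are non-isomorphic, so $f(P_1)\neq f(P_2)$. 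Hence $\bar f$ is a bijection from $\RRt/\!\!\sim$ onto its image $f(\RRt)\subseteq \mathbb{O}_1$.

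I would then define $h\colon \mathbb{O}_1\to \RRt/\!\!\sim$ by setting $h(y):=\bar f^{-1}(y)$ for $y\in f(\RRt)$, and extending $h$ arbitrarily (e.g., to a fixed equivalence class) on $\mathbb{O}_1\setminus f(\RRt)$. By construction $h(f(P))=\bar f^{-1}(\bar f([P]))=[P]$ for every $P\in\RRt$, which is exactly the statement that $f(P)$ derives $P$ up to permutation and Euclidean isometry.

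The only subtle point---and the one I would pay closest attention to---is making sure the target space of $h$ is compatible with the definition of ``derive'': the definition outputs $P$ itself, but we can only recover $P$ up to the symmetry group, so formally $h$ should return a representative of the equivalence class (e.g., via an arbitrary choice of representative in each $\sim$-class). I do not expect this to be a genuine obstacle, since the statement explicitly reads ``up to permutation and Euclidean isometry,'' but I would make this identification precise at the start to avoid ambiguity. No deep geometric or analytic machinery is needed; the proposition is really a set-theoretic statement about injectivity on a quotient.
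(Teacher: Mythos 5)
Your proposal is correct and follows essentially the same route as the paper: the forward direction by contradiction via the reconstruction map, and the backward direction by defining $h$ as an inverse of $f$ on its image over equivalence classes (the paper simply "lets" such an $h$ exist, while you make the quotient/injectivity argument explicit). No substantive difference.
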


\begin{proof}

    We first prove the reverse direction. 
    Since $f$ is E(3)-complete, it will give distinct graph embeddings/features $s\in \RR^d$ for non-isomorphic finite-size point clouds. We can thus \textit{let} $h: \RR^d \to \RRt$ be a mapping which exactly maps the graph embeddings to the original point clouds up to permutation and Euclidean isometry. The existence of such $h$ implies that $f(P) \to P$ up to permutation and Euclidean isometry.

    For the forward direction, we prove by contradiction. Assume that $f$ is not E(3)-complete, i.e., there exists two non-isomorphic point clouds $P_1, P_2$ such that $f(P_1) = f(P_2)$. Since $f(P) \to P$ up to permutation and Euclidean isometry, we have that there exists a function $h$ that can reconstruct the point cloud from the embedding. However, since $f(P_1) = f(P_2)$, their reconstruction $h(f(P_1)), h(f(P_2))$ will be the same up to permutation and Euclidean isometry, which contradicts the assumption that $P_1$ and $P_2$ are non-isomorphic. Therefore, $f$ is E(3)-complete.
\end{proof}

\subsection{Proof of Theorem~\ref{Theorem: Asymmetric point clouds are identifiable}}
\label{sec: proof of anchor theorem}
As indicated in Figure~\ref{fig: enhancement_relation}(b), all $\mc{D}$-symmetric point clouds are also $\mc{C}$-symmetric, which is a direct conclusion from~\citet{delle2023three, li2024distance}. We formally prove it here.

\begin{proposition} ($\mc{D}$-symmetry implies $\mc{C}$-symmetry)
    \label{prop: VD to C}
    For any point cloud $P\in \RRt$, if $P$ is $\mc{D}$-symmetric, then $P$ is $\mc{C}$-symmetric.
\end{proposition}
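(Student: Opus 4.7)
The plan is to exploit the already-cited fact that the partition of nodes induced by $\mathcal{D}$ is no coarser than that induced by $\mathcal{C}$, and then show that any mass function on $\mathcal{C}$-features can be ``lifted'' to a mass function on $\mathcal{D}$-features giving the same barycenter. This will yield the set containment $\mathcal{C}^{\text{set}}(P) \subseteq \mathcal{D}^{\text{set}}(P)$, from which the proposition follows immediately.

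More concretely, first I would recall from \citet{delle2023three, li2024distance} that running DisGNN to convergence produces node features whose equivalence classes refine those induced by distance-to-center. That is, for every pair of indices $i, j \in [n]$, the implication $x^{\mathcal{D}}_i = x^{\mathcal{D}}_j \Rightarrow x^{\mathcal{C}}_i = x^{\mathcal{C}}_j$ holds. Consequently, any function $m_{\mathcal{C}} : \mathbb{R}^{d_{\mathcal{C}}} \to \mathbb{R}$ on the $\mathcal{C}$-feature space induces a well-defined function $m_{\mathcal{D}} : \mathbb{R}^{d_{\mathcal{D}}} \to \mathbb{R}$ on the $\mathcal{D}$-feature space satisfying $m_{\mathcal{D}}(x^{\mathcal{D}}_i) = m_{\mathcal{C}}(x^{\mathcal{C}}_i)$ for every $i \in [n]$ (on features that never appear one may set $m_{\mathcal{D}}$ arbitrarily).

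Second, using this lifting, every extended barycenter realizable via $\mathcal{C}$ is also realizable via $\mathcal{D}$: for any admissible $m_{\mathcal{C}}$ with $\sum_i m_{\mathcal{C}}(x^{\mathcal{C}}_i) \neq 0$, the lifted $m_{\mathcal{D}}$ assigns the same per-node weights, so
\[
\frac{\sum_{i \in [n]} m_{\mathcal{D}}(x^{\mathcal{D}}_i)\, p_i}{\sum_{i \in [n]} m_{\mathcal{D}}(x^{\mathcal{D}}_i)} \;=\; \frac{\sum_{i \in [n]} m_{\mathcal{C}}(x^{\mathcal{C}}_i)\, p_i}{\sum_{i \in [n]} m_{\mathcal{C}}(x^{\mathcal{C}}_i)}.
\]
This gives the inclusion $\mathcal{C}^{\text{set}}(P) \subseteq \mathcal{D}^{\text{set}}(P)$. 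Now, if $P$ is $\mathcal{D}$-symmetric then $|\mathcal{D}^{\text{set}}(P)| = 1$, hence $|\mathcal{C}^{\text{set}}(P)| \leq 1$; but $\mathcal{C}^{\text{set}}(P)$ always contains the geometric center of $P$ (take $m$ to be any nonzero constant, as noted in the footnote of Definition~\ref{def: symmetry}), so $|\mathcal{C}^{\text{set}}(P)| = 1$, i.e., $P$ is $\mathcal{C}$-symmetric.

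The only non-routine step is the first one, namely invoking that the $\mathcal{D}$-partition refines the $\mathcal{C}$-partition; everything else is a bookkeeping argument about lifting mass functions along a refinement of partitions. Since this refinement fact has already been established in prior work cited in the paragraph preceding the proposition, the proof should be short and self-contained.
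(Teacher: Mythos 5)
Your proposal is correct and follows essentially the same route as the paper: both rest on the prior-work fact that the $\mathcal{D}$-feature partition refines the $\mathcal{C}$-feature partition, and then conclude that $|\mathcal{D}^{\text{set}}(P)|=1$ forces $|\mathcal{C}^{\text{set}}(P)|=1$ since the geometric center always lies in $\mathcal{C}^{\text{set}}(P)$. Your mass-function lifting giving $\mathcal{C}^{\text{set}}(P)\subseteq\mathcal{D}^{\text{set}}(P)$ just makes explicit the step the paper states as ``no coarser partition will lead to no smaller center set.''
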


\begin{proof}
    As indicated by the conclusions of~\citet{delle2023three, li2024distance}, with the ``derive'' notations we introduced in the previous subsection, we have that: $x^{\mc{D}}_i \to x^{\mc{C}}_i$, where $P\in \RRt$. This essentially implies that DisGNN's node-level output contains the information of the distance between nodes to the geometric center. As a direct consequence, for any point cloud $P$, the node partition based on node features $X^{\mc{D}}$ will be no coarser than that based on $\mc{C}(P)$.

    If $P$ is $\mc{D}$-symmetric, the center set $\mc{D}^{\text{set}}(P)$ will obtain only one element. By definition, the cardinality of the center sets only depends on the node features partitions, and no coarser partition will lead to no smaller center set. Therefore, $\mc{C}^{\text{set}}(P)$ will also contain only one element, and thus $P$ is $\mc{C}$-symmetric.
\end{proof}

Based on Proposition~\ref{prop: VD to C}, to prove Theorem~\ref{Theorem: Asymmetric point clouds are identifiable}, it suffices to prove that point clouds that are not $\mc{D}$-symmetric are identifiable. To prove this, we first show the ability of DisGNN to learn global geometric information:

\begin{lemma}(Locate $\mc {A}$ centers) Given a point cloud $P\in\RRt$ with node features $X^{\mc{A}}$ calculated by algorithm $\mc{A}$. Denote the $\mc {D}$-center calculated by mass function $m$ as $c^m \in \RRt$. With a bit notation abuse, we use $d_{ic^m}$ to represent the distance between node $i$ and $c^m$.
    
We now run a DisGNN on $(P, X^{\mc{A}})$, and denote the node $i$'s representations at layer $l$ as $h_i^{(l)}$ (we let $h_i^{(0)} = x_i^{\mc A}$). Then we have:

\label{lemma: locate GA}
\begin{itemize}
    \item (Node-center distance) Given an arbitrary mass function $m$, $h_i^{(2)}$ can derive $d_{i, c^m}$.
    \item (Center-center distance) Given two arbitrary mass functions $m_1$ and $m_2$, $\{\!\!\{h_i^{(2)} \mid  i \in [n]  \}  \!\!\}$ can derive $d_{c^{m_1}, c^{m_2}}$.
    \item (Counting centers) $\{\!\!\{h_i^{(2)} \mid  i \in [n]  \}  \!\!\}$ can derive $\mathds{1}_{
    |\mc{A}^{\text{set}}(P)|= 1}$.
\end{itemize}
\end{lemma}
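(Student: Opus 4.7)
The plan is to trace, layer by layer, what $h_i^{(2)}$ injectively encodes, and then show that each quantity in the three bullets can be written as a formula depending only on pairwise distances $d_{ij}$ and the mass-weighted values $m(x_j^{\mc A})$. After layer~$0$ we have $h_i^{(0)} = x_i^{\mc A}$; after layer~$1$, $h_i^{(1)}$ injectively encodes $\bigl(x_i^{\mc A}, \msl (x_j^{\mc A}, d_{ij}) \mid j\in[n]\msr\bigr)$; and after layer~$2$, $h_i^{(2)}$ encodes $\bigl(h_i^{(1)}, \msl (h_j^{(1)}, d_{ij}) \mid j \in [n]\msr\bigr)$. In particular, $h_i^{(2)}$ sees, for every $j$, the distance $d_{ij}$ together with the full multiset of distances emanating from $j$ and all node features. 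By the assumed injectivity of the aggregators, any function of this information can be read off by the appropriate readout.

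For the first bullet I would expand the squared distance between $p_i$ and the weighted centroid $c^m = \tfrac{1}{M}\sum_j m(x_j^{\mc A})\,p_j$, where $M = \sum_j m(x_j^{\mc A})$, using the polarization identity $(p_i-p_j)\cdot(p_i-p_k) = \tfrac{1}{2}(d_{ij}^2 + d_{ik}^2 - d_{jk}^2)$. A routine manipulation yields
\begin{equation*}
d_{i,c^m}^2 \;=\; \frac{1}{M}\sum_{j} m(x_j^{\mc A})\,d_{ij}^2 \;-\; \frac{1}{2M^2}\sum_{j,k} m(x_j^{\mc A})\,m(x_k^{\mc A})\,d_{jk}^2 .
\end{equation*}
The single sum is derivable from $h_i^{(1)}$, and the double sum is a function of $\msl h_j^{(1)} \mid j\in[n]\msr$, both of which $h_i^{(2)}$ injectively encodes; so $h_i^{(2)} \to d_{i,c^m}$.

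For the second bullet I would set $a_i = \tfrac{m_1(x_i^{\mc A})}{M_1} - \tfrac{m_2(x_i^{\mc A})}{M_2}$, observe that $\sum_i a_i = 0$, and then use the same polarization trick to get
\begin{equation*}
d_{c^{m_1},c^{m_2}}^2 \;=\; \Bigl\|\sum_i a_i p_i\Bigr\|^2 \;=\; -\frac{1}{2}\sum_{i,j} a_i\,a_j\,d_{ij}^2 ,
\end{equation*}
where the cross terms produced by the polarization identity vanish because $\sum_i a_i = 0$. Since the multiset $\msl h_i^{(2)} \mid i\in[n]\msr$ recovers the full weighted distance graph (all node features and all pairwise distances), the right-hand side is a function of this multiset. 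For the third bullet I would first partition nodes by feature value into blocks $B_1,\dots,B_K$ with centroids $\bar p_k = \tfrac{1}{|B_k|}\sum_{i\in B_k} p_i$, and note that $c^m$ depends on $m$ only through $\{m(\xi_k)\}$; consequently $\mc A^{\text{set}}(P)$ is precisely the affine span of $\{\bar p_1,\dots,\bar p_K\}$. Hence $|\mc A^{\text{set}}(P)|=1$ iff all $\bar p_k$ coincide, iff $\sum_{k=1}^{K} d_{\bar p_k, c^{\mathrm{const}}}^2 = 0$ where $c^{\mathrm{const}}$ is the geometric center (obtained from the constant mass). Each $\bar p_k$ is the centroid produced by the indicator mass $m^{(k)}(\xi) = \mathds{1}_{\xi=\xi_k}$, so all these distances are derivable by the second bullet from $\msl h_i^{(2)}\msr$, and therefore so is the indicator.

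The main obstacle is recognizing the second-bullet identity: one must introduce the zero-sum coefficients $a_i$ so that polarization collapses the $p_i\cdot p_j$ terms into purely distance-based ones. Once this identity is available, the first bullet is a direct specialization (with $m_2$ set to the Dirac mass at node $i$), and the third bullet reduces to a single scalar test built from the second, so only the routine bookkeeping of ``which sum is available at which layer'' remains.
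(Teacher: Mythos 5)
Your identities are correct and the lemma does follow, but by a somewhat different route than the paper's. For the node--center distance the paper works with the weighted distance profile $f(m,i)=\sum_j m(x_j^{\mc A})d_{ij}^2$ (available from $h_i^{(1)}$), uses $\sum_j m(x_j^{\mc A})(p_j-c^m)=0$, and obtains $d_{i,c^m}^2=\frac1M\big(f(m,i)-\frac1{2M}\sum_j m(x_j^{\mc A})f(m,j)\big)$; your polarization formula is the same computation written in closed form. For the center--center distance the paper does \emph{not} use a distance-only identity: it first derives $(d_{i,c^{m_1}},d_{i,c^{m_2}})$ per node from bullet one, then averages $\|p_i-c^{m_1}\|^2-\|p_i-c^{m_2}\|^2$ over $i$ with weights $m_1(x_i^{\mc A})/M_1$ to get $-\|c^{m_1}-c^{m_2}\|^2$; your zero-sum-coefficient identity $\|c^{m_1}-c^{m_2}\|^2=-\frac12\sum_{i,j}a_ia_jd_{ij}^2$ is a clean alternative. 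For counting centers the paper invokes its separate proposition on the equivalent definition of $\mc A$-symmetry and inspects the distances $d_{c,c^{m_k}}$ for the indicator masses; your affine-hull observation reproves that equivalence inline, and the test $\sum_k d_{\bar p_k,c}^2=0$ is in fact a slightly sharper criterion than the paper's ``cardinality of the distance set'' phrasing.

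Two points need repair, though neither is fatal. First, your justification for bullet two --- that the multiset $\msl h_i^{(2)}\mid i\in[n]\msr$ ``recovers the full weighted distance graph (all node features and all pairwise distances)'' --- is false as stated: if two rounds of DisGNN recovered the distance graph up to permutation, DisGNN would be E(3)-complete, contradicting the very counterexamples that motivate the paper. What is true, and all you need, is that the double sum factors, $\sum_{i,j}a_ia_jd_{ij}^2=\sum_i a_i\big(\sum_j a_jd_{ij}^2\big)$, where $a_i$ and the inner sum are each derivable from $h_i^{(1)}$ (hence from $h_i^{(2)}$), so the whole expression is a function of the multiset $\msl h_i^{(2)}\mid i\in[n]\msr$ --- exactly the bookkeeping you already performed for the double sum in bullet one. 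Second, the closing remark that bullet one is ``a direct specialization (with $m_2$ set to the Dirac mass at node $i$)'' of bullet two is not legitimate: mass functions act on feature values, not node indices, so a Dirac mass at node $i$ is admissible only when $x_i^{\mc A}$ is unique, and bullet one must moreover be derived from the single representation $h_i^{(2)}$ rather than from the multiset. Since you gave an independent and correct derivation of bullet one, this is only a stray remark, but it should be dropped or qualified.
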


This proposition essentially extends the Barycenter Lemma in~\citet{delle2023three, li2024distance} to the general case of centers defined by $\mc {D}$ features. Such information can enable DisGNN to obtain many global geometric features, facilitating it to distinguish non-isomorphic point clouds. Moreover, the ability to count centers is crucial for DisGNN to distinguish any point cloud that is not $\mc{A}$-symmetric from a $\mc{A}$-symmetric one, thereby achieving ``identifiability''. Now we give the formal proof.

\begin{proof}\ 

    \paragraph{Node-center distance} We first denote the weighted distance profile of node $i$ as $f(m,i)$, which is defined as $f(m,i) = \sum_{j\in [n]} m(x^{\mc{A}}_j)d_{ij}^2$. According to the aggregation scheme of DisGNN, $f(m,i)$ can be derived from $h^{(1)}_i$, since $h^{(1)}_i \to (h^{(0)}_i, \msl (h^{(0)}_j, d_{ij}) \mid j \in [n]  \msr )  \to \msl (x^{\mc{A}}_j, d_{ij}) \mid j \in [n]  \msr \to \sum_{j\in [n]} m(x^{\mc{A}}_j)d_{ij}^2$ when $m$ is given. All the $\to$ here hold because we assume that DisGNN uses injective function forms and parameterizations (thereby resulting in no information loss), and $j\in[n]$ since DisGNN treats point clouds as fully connected distance graphs, as mentioned at the beginning of the main body.
    
    \(f(m,i)\) can be further decomposed as:
    \begin{align}
    f(m,i) &= \sum_{j\in [n]} m(x^{\mc{A}}_j)d_{ij}^2 \notag \\
    &= \sum_{j\in [n]}m(x^{\mc{A}}_j)||p_i-p_j||^2 \notag\\
    &= \sum_{j\in [n]}m(x^{\mc{A}}_j)||p_i-c^m+c^m-p_j||^2 \notag\\
    &= \sum_{j\in [n]}m(x^{\mc{A}}_j)\big(||p_i-c^m||^2 + ||p_j-c^m||^2 - 2 \langle p_i-c^m, p_j-c^m\rangle\big) \notag\\
    &= M||p_i-c^m||^2 + \sum_{j\in [n]}m(x^{\mc{A}}_j)||p_j-c^m||^2 - 2 \langle p_i-c^m, \sum_{j\in [n]} m(x^{\mc{A}}_j)(p_j-c^m)\rangle \notag
    \end{align}
    where \(M = \sum_{j\in [n]}m(x^{\mc{A}}_j)\) and \(\langle \rangle\) denotes the inner product.

    By definition of $c^m$, \(\sum_{j\in [n]} m(x^{\mc{A}}_j)(p_j-c^m) = 0\). Therefore, we have:
    \[
    f(m,i) = M\Vert p_i-c^m\Vert^2 + \sum_{j\in [n]}m(x^{\mc{A}}_j)\Vert p_j-c^m\Vert^2
    \]
    Therefore, the distance from \(i\) to \(c^m\) can be calculated as:
    \begin{align}
    \frac{1}{M}\left(f(m,i)-\frac{1}{2M}\sum_{j\in [n]}m(x^{\mc{A}}_j)f(m,j)\right) &= ||p_i-c^m||^2  = d_{i,c^m}^2
    \end{align}
    
    And since
    \begin{align}
    h^{(2)}_i &\to (h^{(1)}_i, \msl (h^{(1)}_j, d_{ij}) \mid j \in [n]  \msr ) \notag\\
    &\to (f(m,i), \{\!\!\{ (m(x^{\mc{A}}_j),f(m,j)) \mid j\in[n] \}\!\!\}),
    \end{align}
    we finally have: $h^{(2)}_i \to d_{i,c^m}$.

    \paragraph{Center-center distance} Based on the conclusion of Node-center distance, at round 2, each node is aware of its distance to the two centers defined by $m_1$ and $m_2$:
    \begin{align}
    h_i^{(2)} \to (d_{i,c^{m_1}},d_{i,c^{m_2}}).
    \end{align}
    Therefore,
    \begin{align}
    h_i^{(2)} &\to \Vert p_i-c^{m_1}\Vert^2 - \Vert p_i-c^{m_2}\Vert^2 \notag\\&= \langle c^{m_2}-c^{m_1}, 2p_i-c^{m_1}-c^{m_2}\rangle\notag
    \end{align}.

    Therefore, the multiset of all node representations at round 2 can derive the distance between the two centers:
    \begin{align}
    \msl h_i^{(2)} \mid  i \in [n]  \msr &\to \msl (h_i^{(2)}, m_1(x^{\mc{A}}_i)) \mid  i \in [n]  \msr \notag \\
    &\to \frac{1}{M_1} \sum_{i\in [n]} m_1(x^{\mc{A}}_i)\langle -c^{m_1}+c^{m_2},  2p_i-c^{m_1}-c^{m_2}\rangle \notag \\
    &= \langle -c^{m_1}+c^{m_2},  \big(\frac{2}{M_1}\sum_{i\in[n]}m_1(x^{\mc{A}}_i)p_i \big)-c^{m_1}-c^{m_2}\rangle \notag \\
    &= \langle -c^{m_1}+c^{m_2}, 2c^{m_1}-c^{m_1}-c^{m_2}\rangle \notag \\
    &= \langle -c^{m_1}+c^{m_2}, c^{m_1}-c^{m_2}\rangle \notag \\
    &= -||c^{m_1}-c^{m_2}||^2,
    \end{align}
    where \(M_1 = \sum_{i\in[n]} m_1(x^{\mc{A}}_i)\). 

    \paragraph{Counting centers} In this case, we are not given any specific mass function $m$. It seems like we need to iterate all possible mass functions to determine the number of centers, which is infeasible due to the infinite number of mass functions. However, thanks to Proposition~\ref{proposition: alternative}, we can determine whether $\mc{A}^{\text{set}}(P)$'s cardinality is 1 (i.e., whether $P$ is $\mc{A}$-symmetric) by checking whether the geometric centers of all corresponding sub point clouds coincide with each other. 
    
    Now, assume that there are $K$ types of node features in $X^{\mc{A}}$, and we use $m_k$ to represent the mass function that maps all $x^{\mc{A}}_i$ to 1 if $x^{\mc{A}}_i$ is the $k$-th type of features and 0 otherwise, and use $c$ solely to represent the coordinate of geometric center (unweighted average coordinates of all points).

    Based on the conclusion of center-center distance part, we know that $\msl h_i^{(2)} \mid  i \in [n]  \msr$ can derive the distance between $c$ to all $c^{m_k}$, $k\in [K]$, i.e., $\msl h_i^{(2)} \mid  i \in [n]  \msr \to \{d_{c, c^{m_k}} \mid k\in [K]\}$. We can easily check the cardinality of the set $\{d_{c, c^{m_k}} \mid k\in [K]\}$ to determine whether the geometric centers of all these $K$ sub point clouds coincide, and according to Proposition~\ref{proposition: alternative}, this is equivalent to determining whether $|\mc{A}^{\text{set}}(P)|=1$.

    Therefore, $\msl h_i^{(2)} \mid  i \in [n]  \msr \to \mathds{1}_{
        |\mc{A}^{\text{set}}(P)|= 1}$.
\end{proof}

Notice that DisGNN can itself calculate $\mc C$ and $\mc {D}$ feature, therefore, it can simply take \textbf{unlabeled point clouds as input}, and eventually is able to learn distance information related to all centers in set $\mc{D}^{\text{set}}(P)$. This is concluded in the following corollary:

\begin{corollary}
    \label{corollary: locate GA}
    Given an unlabelled point cloud $P\in \RRt$. DisGNN can derive the ``node-center'', ``center-center'' and ``center count'' information defined in Lemma~\ref{lemma: locate GA} w.r.t. center sets $\mc{A}^{\text{set}}(P)$ after $k$ rounds, where $\mc{A}$ and $k$ can be:

    \begin{itemize}
        \item $\mc{A} = \text{NULL}$ (i.e., $\mc A$ assigns the same features to all nodes), $k=2$. In such case, $\mc{A}^{\text{set}}(P)$ contains only the geometric center of $P$.
        \item $\mc{A} = \mc{C}$, $k=4$.
        \item $\mc{A} = \mc{D}$, $k$'s upper bound is $(n + 2)$ ($n$ means that we need to first ensure that DisGNN stabilizes).
    \end{itemize}

\end{corollary}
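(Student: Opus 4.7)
}
The plan is to reduce all three cases to a direct application of Lemma~\ref{lemma: locate GA}, with the essential task being to show that DisGNN can internally synthesize the $\mathcal{A}$-feature labelling before invoking the lemma. The lemma itself guarantees that \emph{two} additional DisGNN rounds suffice once each node carries the feature $x_i^{\mathcal{A}}$, so the only quantity to control in each case is the number of preliminary rounds required to produce $x_i^{\mathcal{A}}$ from the unlabelled input.

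For the $\mathcal{A} = \text{NULL}$ case, set $h_i^{(0)}$ to a constant (or any single permutation-invariant scalar), so every node is already labelled with the same $x_i^{\mathcal{A}}$. The only admissible mass function up to rescaling is constant, so $\mathcal{A}^{\text{set}}(P)$ contains only the geometric center, and Lemma~\ref{lemma: locate GA} applies directly with $k = 2$. This also yields a useful intermediate fact for the next case: after two rounds each node representation $h_i^{(2)}$ derives $d_{i,c}$, the distance to the geometric center, which is exactly the $\mathcal{C}$-feature $x_i^{\mathcal{C}}$.

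For $\mathcal{A} = \mathcal{C}$, I would first run the NULL stage for two rounds to obtain $h_i^{(2)} \to x_i^{\mathcal{C}}$ at every node, by the remark above. Because injectivity gives $h_i^{(2)}$ strictly more information than $x_i^{\mathcal{C}}$, we can then treat the next two rounds as an instance of Lemma~\ref{lemma: locate GA} applied to the $\mathcal{C}$-labelled cloud: the lemma's proof uses only the ability to compute, at layer $\ell + 2$, the weighted distance profiles $f(m,i)$ and their aggregates over layer-$\ell$ features. Hence $h_i^{(4)}$ delivers the three required outputs with respect to $\mathcal{C}^{\text{set}}(P)$, giving $k = 4$.

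For $\mathcal{A} = \mathcal{D}$, the preparatory stage is the standard WL-style stabilization argument: since each DisGNN refinement step produces a node-colour partition at least as fine as the previous one, and there are at most $n$ such strict refinements possible on an $n$-node cloud, the node-colour partition stabilizes after at most $n$ rounds, at which point by definition $h_i^{(n)}$ derives $x_i^{\mathcal{D}}$. Two further rounds, again by Lemma~\ref{lemma: locate GA}, yield the node-center, center-center, and center-count information with respect to $\mathcal{D}^{\text{set}}(P)$, giving the upper bound $k \le n + 2$. The main obstacle here is the stabilization bound: one must verify that the refinement in DisGNN is monotone (no partition gets coarser across a round) so that the standard $n$-round termination argument applies; this should follow from injectivity of the aggregation and update functions, since the layer-$(\ell+1)$ colour of a node is a function of its layer-$\ell$ colour plus the multiset of layer-$\ell$ colours of its neighbours paired with distances.
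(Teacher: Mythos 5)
Your proposal is correct and matches the paper's intended argument: the paper proves the corollary implicitly by noting DisGNN can itself compute the $\mathcal{C}$- and $\mathcal{D}$-features from unlabelled input (0 rounds for NULL, 2 rounds via the node-center distance result for $\mathcal{C}$, at most $n$ rounds of WL-style stabilization for $\mathcal{D}$) and then invoking Lemma~\ref{lemma: locate GA}'s two additional rounds, exactly as you do. Your extra care about injectivity preserving the "derive" relation and about monotone refinement for the stabilization bound fills in details the paper leaves tacit, but the route is the same.
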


In the case of where $\mc A$ is NULL, the conclusion degenerates to the Barycenter Lemma in~\citet{delle2023three, li2024distance}.

We now give an essential lemma. In previous work~\citep{pozdnyakov2022incompleteness, li2024distance}, it has been shown that DisGNN is E(3)-incomplete, i.e., there exist pairs of non-isomorphic point clouds that DisGNN cannot distinguish. This essentially means that we cannot prove that DisGNN's output $s$ can derive the point cloud $P$ up to permutation and Euclidean isometry. However, in the following lemma and corollary, we will show that, if we restrict the \textbf{underlying space $\mathbb{P}$ to be all point clouds that are not $\mc A$-symmetric} 
for arbitrary $\mc A$ algorithm, we can prove that DisGNN's output can derive the input point cloud up to permutation and Euclidean isometry. This implies that DisGNN is complete on such subsets.

\begin{lemma}
    \label{lemma: completeness on subspace}
    Given a point cloud $P \in \RRt_{\text{not } \mc A}$ with node features $X^{\mc{A}}$ calculated by algorithm $\mc{A}$, where $\RRt_{\text{not } \mc A}$ contains all $\mc A$-asymmetric point clouds in $\RRt$. Then, $4$-round DisGNN's output $s$ can derive $P$ up to permutation and Euclidean isometry.
\end{lemma}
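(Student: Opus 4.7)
The plan is to leverage the $\mc A$-asymmetry of $P$ to produce two distinct permutation-invariant, E(3)-equivariant anchors, then use Lemma~\ref{lemma: locate GA} to show that after round $2$ every node records its distances to these anchors, and finally to show that two further rounds expose enough geometric information to trilaterate every node in $\mathbb{R}^3$ up to an element of $E(3)$.

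Since $P\in\RRt_{\text{not }\mc A}$ there exist mass functions $m_1,m_2$ whose centers $c_1:=c^{m_1}$ and $c_2:=c^{m_2}$ are distinct, so $d_{c_1,c_2}>0$. Lemma~\ref{lemma: locate GA} already yields $h_i^{(2)}\to\tau(i):=(d_{i,c_1},d_{i,c_2})$ for every $i$ and $\msl h_i^{(2)}\mid i\in[n]\msr\to d_{c_1,c_2}$. By injectivity of the subsequent updates, $h_i^{(3)}\to\bigl(\tau(i),\msl(\tau(j),d_{ij})\mid j\in[n]\msr\bigr)$ and $h_i^{(4)}\to\msl(h_j^{(3)},d_{ij})\mid j\in[n]\msr$, so the final pooling produces an $s$ from which one can recover the multiset $\msl h_i^{(4)}\mid i\in[n]\msr$ together with $d_{c_1,c_2}$.

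To turn this data into a reconstruction of $P$, I would place $c_1$ at the origin and $c_2$ at $(d_{c_1,c_2},0,0)$. Then $\tau(i)$ pins down node $i$'s $x$-coordinate $x_i$ and its distance $\rho_i\geq 0$ to the $x$-axis, leaving only the angle $\theta_i$ free. Pick any node $i_0$ with $\rho_{i_0}>0$ (if none exists the whole cloud is collinear with the axis and is already determined by $\{\tau(i)\}$) and set $\theta_{i_0}=0$. For every other $j$ with $\rho_j>0$, the law of cosines on the triangle formed in the plane perpendicular to the $x$-axis gives
\[
\cos(\theta_j-\theta_{i_0})=\frac{\rho_{i_0}^2+\rho_j^2+(x_{i_0}-x_j)^2-d_{i_0 j}^2}{2\rho_{i_0}\rho_j},
\]
which fixes $\theta_j$ up to the single global reflection $(y,z)\mapsto(y,-z)\in O(3)$. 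Hence every node's coordinates are determined up to an element of $E(3)$.

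The principal obstacle is that the data extracted from $s$ is a multiset of multisets, so two nodes sharing the same $\tau$-value could in principle be mismatched across different inner views. This is exactly why the fourth round is needed: any two nodes with equal $\tau$ but distinct distance profiles already acquire distinct round-$3$ representations, and round $4$ propagates these finer labels into every other node's neighbor multiset, so each inner entry $(h_j^{(3)},d_{ij})$ uniquely identifies node $j$ up to a true symmetry of $P$. Any residual ambiguity corresponds to nodes that are genuinely interchangeable on their common circle; any consistent matching in that case yields a configuration isometric to $P$, which completes the derivation of $P$ up to permutation and Euclidean isometry.
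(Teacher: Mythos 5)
Your setup — extracting two distinct $\mathcal{A}$-centers $c_1,c_2$ from asymmetry, obtaining the triangular encoding $(d_{i,c_1},d_{i,c_2},d_{c_1,c_2})$ at round $2$ via Lemma~\ref{lemma: locate GA}, and working in cylindrical coordinates about the axis $c_1c_2$ — matches the paper's, but the reconstruction step has a genuine gap. The law-of-cosines relation determines $\theta_j-\theta_{i_0}$ only up to sign \emph{separately for each node $j$}, not up to a single global reflection $(y,z)\mapsto(y,-z)$: flipping the sign for some nodes but not others generally yields a configuration that is not isometric to $P$. Resolving these per-node mirror ambiguities is exactly the hard part of the lemma. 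Doing it with pairwise distances $d_{jk}$ would require knowing which entry of the multiset inside $h_j^{(3)}$ corresponds to which already-placed node, and your justification — that round-$4$ refinement makes each inner entry ``uniquely identify node $j$ up to a true symmetry of $P$'' — is unproven and essentially circular: stabilized color refinement does not in general identify nodes up to an isometry of the point cloud (that is precisely the failure mode on DisGNN's hard instances), and asserting it on the asymmetric subset is tantamount to assuming the completeness statement being proven. Likewise, ``any consistent matching yields a configuration isometric to $P$'' is not true for nodes that merely share refined colors.

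The paper closes this gap by a different mechanism that never matches individual far nodes across views. It selects from the pooled multiset a node $x$ realizing the \emph{minimal} dihedral angle $\theta(xc_1c_2,yc_1c_2)$ with some $y$, fixes the four anchors $x,y,c_1,c_2$, and observes that every remaining node is pinned by its distances to $(x,c_1,c_2)$ and to $(y,c_1,c_2)$ up to reflection across the $x$-plane or the $y$-plane, respectively. Minimality of the dihedral angle guarantees the wedge between the two planes contains no undetermined nodes, and an iterative reflection (sweeping) argument then resolves each node's two-fold ambiguity region by region until all of $\mathbb{R}^3$ is covered, with the degenerate planar case treated separately. Your proposal needs an argument of this kind (or a correct, non-circular matching argument) to fix the per-node signs; as written it does not establish that $s$ derives $P$ up to permutation and Euclidean isometry.
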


\begin{proof}

By definition of ``derive'', we describe the reconstruction function that takes DisGNNs' output $s$ as input and produces the point cloud $P$ up to permutation and Euclidean isometry. We initially assume that the point cloud \textit{does not degenerate into 2D}, which is a trivial case and will be discussed at the end of the proof.

Since $P$ is $\mc{A}$-asymmetric, \textit{there are at least two centers} in $\mc{A}^{\text{set}}(P)$. We now take two arbitrary such centers and denote them as $c_1$ and $c_2$. According to Corollary~\ref{corollary: locate GA}, $2$-round DisGNN's node-level features can derive the distance information related to these two centers, namely, $h^{(2)}_i \to (d_{i,c_1}, d_{i,c_2}, d_{c_1,c_2})$. In the following, we show how \textbf{DisGNN can reconstruct the whole geometry based on triangular distance encoding}.

By defnition of DisGNN and the injectivity assumption, its output $s$ can derive the multiset $\msl h_i^{(4)} \mid i \in [n] \msr$. And since $h_i^{(4)} \to h_i^{(3)} \to h_i^{(2)} \to (d_{i,c_1}, d_{i,c_2}, d_{c_1,c_2})$, we know that from $h_i^{(4)}$ we can reconstruct the triangle formed by $i, c_1, c_2$. Consequently, it is feasible to determine the following dihedral angle list from $h_i^{(4)}$:
\begin{align}
    h_i^{(4)} &= \text{HASH} ( h_i^{(3)}, \msl ( h_j^{(3)}, d_{ij}  ) \mid j \in [n] \msr ) \notag\\
    &\to \msl (h_i^{(3)}, h_j^{(3)}, d_{ij} )  \mid j \in [n] \msr \notag\\
    &\to \msl (h_i^{(2)}, h_j^{(2)}, d_{ij} )  \mid j \in [n] \msr \notag\\
    &\to \msl (d_{ij}, d_{i,c_1}, d_{i,c_2}, d_{j,c_1} , d_{j,c_2}, d_{c_1,c_2})  \mid j \in [n] \msr \notag\\
    &\to \msl \theta(ic_1c_2,jc_1c_2) \mid j\in[n] \msr\notag.
\end{align}
Here, \(\theta(ic_1c_2,jc_1c_2)\) represents the dihedral angle formed by plane \(ic_1c_2\) and plane \(jc_1c_2\). If \(i\) or \(j\) lies on the line \(c_1c_2\), or the angle \(\theta\) is 0, we define/overwrite \(\theta\) as \(+\infty\).

Given this observation, we now conduct a search across the multiset \(\{\!\!\{ h_i^{(4)} \mid i \in [n]\}\!\!\}\) to find the node \(x\) which can derive the smallest \(\theta(xc_1c_2,yc_1c_2)\) relative to some other node $y$ (In case of multiple minimal angles, an arbitrary $x$ is chosen). Denote the minimal angle as \(\alpha\), and we record the corresponding $h_x^{(4)}$. And note that since $\theta(xc_1c_2,yc_1c_2)$ is derived from $(h_x^{(4)}, h_y^{(3)}, d_{xy} )$, the corresponding $(h_y^{(3)}, d_{xy})$ in the calculation of $h_x^{(4)}$ can also be recorded. We now aim to prove that the entire geometry can be reconstructed using $h_x^{(4)}$ and the corresponding $(h_y^{(3)}, d_{xy})$.

First of all, having found the node \(y\) in the multiset, we can calculate the exact 3D coordinates of nodes \(x, y, c_1, c_2\) \textit{up to Euclidean isometry} given \((h_x^{(4)}, h_y^{(3)}, d_{xy})\). This is because \((h_x^{(4)}, h_y^{(3)}, d_{xy}) \to (d_{xy}, d_{x,c_1}, d_{x,c_2}, d_{y,c_1} , d_{y,c_2}, d_{c_1,c_2}) \). At this stage, the coordinates of \(n-2\) nodes remain undetermined.

We proceed to traverse all nodes in the multiset $\{\!\!\{(h^{(1)}_j, d_{xj}) \mid j\in [n]\}\!\!\}$ that can be derived from $h_x^{(4)}$ by definition. For each node \(j\), with information \((h_x^{(1)}, h_{j}^{(1)}, d_{xj})\), we can derive $(d_{jx}, d_{j, c_1},d_{j, c_2})$. Consequently, we can determine the 3D coordinates of \(j\) to at most two positions, which are symmetrically mirroring w.r.t. the plane \(xc_1c_2\). Notably, \(j\)'s position is unique if and only if it lies on the plane \(xc_1c_2\) (denoted as \(x\)-plane). We first identify all nodes on the \(x\)-plane, since their calculated positions are unique. 

Additionally, from \(h_y^{(3)}\), we can also derive \((d_{jy}, d_{j, c_1}, d_{j, c_2})\) for each node \(j\), allowing us to identify all nodes on the \(y\)-plane. 

The coordinates of these nodes are added to the ``known'' set of nodes \(K\), while the remaining nodes are labeled as ``unknown.''

It is established that there are no ``unknown'' nodes in the interior (as \(\theta_{xc_1c_2, yc_1c_2}\) is minimal) and border (since we have identified these points) of the \(x\)-plane and \(y\)-plane. We denote the interior and border of the two planes as \(P_0\).

Now, we reflect the \(y\)-plane mirrorly w.r.t. the \(x\)-plane which yields the \(p_{-1}\)-plane. The interior and border formed by \(p_{-1}\)-plane and \(x\)-plane are denoted as \(P_{-1}\). Since there are no ``unknown'' nodes in \(P_0\), the remaining undetermined nodes in \(P_{-1}\) can be determined from \(h_x^{(4)}\), as their another possible position calculated by \(h_x^{(4)}\) lie in the ``dead area'' \(P_0\), where it is ensured to be empty (no undetermined nodes left). Similarly, reflecting the \(x\)-plane w.r.t. the \(y\)-plane produces the \(p_{1}\)-plane, and the corresponding area by \(p_{1}\)-plane and $y$-plane is denoted as \(P_1\). Nodes in \(P_1\) are determined likewise from \(h_y^{(3)}\). This process continues by reflecting the \(p_{1}\)-plane w.r.t. the \(x\)-plane to obtain the \(p_{-2}\)-plane, determining the nodes in \(P_{-2}\), and so forth, until all \(P\) areas collectively cover the entire 3D space. Consequently, all nodes can be determined, and the geometric structure can be reconstructed.

When the point cloud degenerates into 2D, we can initially derive a node $i$'s representation $h_i^{(4)} $ from the output $s$ of DisGNN that does not lie on the line $c_1c_2$. By leveraging $h_i^{(4)}$, we can extract the necessary distance information to compute the coordinates of $i$, $c_1$, and $c_2$ up to Euclidean isometry. Given that $h_i^{(4)} \to \msl ( h_j^{(3)}, d_{ij}  ) \mid j \in [n] \msr$, we can derive $(d_{ij}, d_{jA_1}, d_{jA_2})$ from each $( h_j^{(3)}, d_{ij}  )$. Consequently, the coordinates of node $j$ can be uniquely determined (2D setting). And therefore, the complete geometry can be reconstructed up to Euclidean isometry.

\end{proof}

As an insightful takeaway from Lemma~\ref{lemma: completeness on subspace}, we notice that the key in reconstruction is the initial \textbf{triangular distance encoding} that can be captured by DisGNN. The triangular distance encoding enhances each node $i$'s feature with distance information $(d_{ic_1}, d_{ic_2}, d_{c_1c_2})$, where $c_1$ and $c_2$ are two global \textit{anchors}. In Lemma~\ref{lemma: completeness on subspace}, the two anchors are from $\mc A$ center sets, and DisGNN can capture the corresponding distance encoding according to Lemma~\ref{lemma: locate GA}. In Section~\ref{sec: geongnn}, we show that we can mark a node as an additional anchor, which can also be captured by DisGNN due to the distinct mark. \textit{Other designs may also be proposed by considering this insightful idea.}

Similarly to Corollary~\ref{corollary: locate GA}, DisGNN can take \textbf{unlabelled point clouds} as input and finally reconstruct them when the unlabelled point clouds are $\mc C$-asymmetric or $\mc {D}$-asymmetric.

\begin{corollary}
    \label{corollary: completeness on subspace2}
    Given a point cloud $P \in \RRt$, assume that $P$ is $\mc{A}$-asymmetric. Then, $k$-round DisGNN's output $s$ can derive $P$ up to permutation and Euclidean isometry, where $\mc A$ and $k$ can be:
    \begin{itemize}
        \item $\mc{A} = \mc{C}$, $k=6$.
        \item $\mc{A} = \mc{D}$, $k$'s upper bound is $(n + 4)$.
    \end{itemize}
\end{corollary}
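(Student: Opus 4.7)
The plan is to decompose the reconstruction procedure into two phases: first, use the early rounds of DisGNN to internally synthesize the node features $X^{\mathcal{A}}$ that Lemma~\ref{lemma: completeness on subspace} assumes are supplied as input; second, invoke Lemma~\ref{lemma: completeness on subspace} on the remaining rounds to carry out the triangular-distance-encoding reconstruction. Since the initial input is unlabelled, all nodes share the same placeholder feature, and DisGNN under injective parametrization can freely copy any quantity derivable from $h_i^{(l)}$ forward into $h_i^{(l+1)}$ without loss. Thus once some round $l$ produces a node embedding that derives $x_i^{\mathcal{A}}$, the subsequent four rounds act as a fully initialised DisGNN whose initial features include $X^{\mathcal{A}}$, so Lemma~\ref{lemma: completeness on subspace} applies to the remaining computation.

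For the first case, $\mathcal{A}=\mathcal{C}$, I would invoke Corollary~\ref{corollary: locate GA} with $\mathcal{A}=\text{NULL}$ to conclude that after $2$ rounds each node embedding $h_i^{(2)}$ already derives its distance to the geometric center, i.e.\ $x_i^{\mathcal{C}}$. Concatenating these with the underlying geometry via DisGNN's injective update then plays the role of an augmented initialisation, and applying Lemma~\ref{lemma: completeness on subspace} for $4$ more rounds yields that $h_i^{(6)}$ collectively derive $P$ up to permutation and Euclidean isometry, giving $k=6$.

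For the second case, $\mathcal{A}=\mathcal{D}$, the only change is that synthesising $X^{\mathcal{D}}$ requires DisGNN itself to stabilise. By a standard colour-refinement argument (as used throughout the Weisfeiler--Leman literature, and already invoked for the $(n+2)$ bound in Corollary~\ref{corollary: locate GA}), the node-feature partition produced by DisGNN can refine at most $n-1$ times before it becomes stable, so after $n$ rounds $h_i^{(n)}$ derives $x_i^{\mathcal{D}}$. Applying Lemma~\ref{lemma: completeness on subspace} for $4$ additional rounds then reconstructs $P$, giving the bound $k\leq n+4$. The same $\mathcal{A}$-asymmetry hypothesis guarantees the existence of two distinct centers $c_1,c_2\in\mathcal{A}^{\text{set}}(P)$ needed by Lemma~\ref{lemma: completeness on subspace}'s triangular-encoding argument.

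The main subtlety I expect is the compositional argument that justifies ``chaining'' the two phases: one has to argue that running DisGNN on unlabelled $P$ for $l+4$ rounds is at least as expressive as running a $4$-round DisGNN on $(P,X^{\mathcal{A}})$ with $X^{\mathcal{A}}$ computed in the first $l$ rounds. Under the injective-parametrization convention used throughout the paper this amounts to observing that $h_i^{(l)}$ can be made to carry a copy of $x_i^{\mathcal{A}}$ into subsequent layers, so the latter $4$ layers have access to everything Lemma~\ref{lemma: completeness on subspace} needs from its initialisation. Once this compositional step is formalised, the two bullet points follow immediately by plugging in the round counts from Corollary~\ref{corollary: locate GA}.
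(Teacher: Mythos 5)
Your proposal is correct and follows essentially the same route as the paper, which justifies this corollary only by the one-line observation that DisGNN can itself compute the $\mathcal{C}$/$\mathcal{D}$ features (2 rounds, resp.\ at most $n$ rounds to stabilization, as in Corollary~\ref{corollary: locate GA}) and then carry them forward into the 4-round reconstruction of Lemma~\ref{lemma: completeness on subspace}, giving $2+4=6$ and $n+4$. Your explicit treatment of the "chaining" step is just a spelled-out version of what the paper leaves implicit under the injective-parametrization convention.
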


Now, we are ready to prove Theorem~\ref{Theorem: Asymmetric point clouds are identifiable} based on all the above propositions and theorems.

\textbf{\cref{Theorem: Asymmetric point clouds are identifiable}.} \emph{(Asymmetric point clouds are identifiable) Let $\mathcal{C}$ denote the center distance encoding and $\mathcal{D}$ the DisGNN encoding. Then, given an arbitrary point cloud $P\in \mathbb{R}^{n\times 3}$, $P$ is $\mathcal{C}$-asymmetric $\Rightarrow$ $P$ is $\mathcal{D}$-asymmetric $\Rightarrow$ $P$ can be identified by DisGNN.}

\begin{proof}
    The first $\Rightarrow$ is proved in~\Cref{prop: VD to C}.
    
    It suffices to show that $P$ is $\mathcal{D}$-asymmetric $\Rightarrow$ $P$ can be identified by DisGNN. 

    Now, given another $P' \in \mathbb{R}^{n\times 3}$, there are two cases:

    \begin{itemize}
        \item $P'$ is $\mathcal{D}$-asymmetric. As a direct consequence of~\Cref{lemma: completeness on subspace},~\Cref{corollary: completeness on subspace2} and~\Cref{proposition: reconstruction implies E(3)-completeness}, DisGNN can distinguish $P$ and $P'$.

        \item $P'$ is $\mathcal{D}$-symmetric. According to Lemma~\ref{lemma: locate GA} and Corollary~\ref{corollary: locate GA}, DisGNN's output for $P$ and $P'$ can derive $\mathds{1}_{|\mathcal{D}^{\text{set}}(P)|=1}$ and $\mathds{1}_{|\mathcal{D}^{\text{set}}(P')|=1}$ respectively. Since $P$ is $\mathcal{D}$-asymmetric while $P'$ is, by definition, we know that $\mathds{1}_{|\mathcal{D}^{\text{set}}(P)|=1} \neq \mathds{1}_{|\mathcal{D}^{\text{set}}(P')|=1}$, and thus DisGNN can distinguish $P$ and $P'$.
    \end{itemize}
\end{proof}

\subsection{Proof of Theorem~\ref{theorem: completeness of GeoNGNN}}
\label{sec: proof of completeness of GeoNGNN}
\textbf{\cref{theorem: completeness of GeoNGNN}} \emph{(E(3)-Completeness of GeoNGNN) When the following conditions are met, GeoNGNN is E(3)-complete:
\begin{itemize}[itemsep=2pt,topsep=-2pt,parsep=0pt,leftmargin=10pt]
    \item $N_{\text{in}} >= 5$ and $N_{\text{out}} >= 0$ (where $0$ indicates that the outer GNN only performs final pooling).
    \item The distance graph is fully-connected ($r_{\text{cutoff}}=+\infty$).
    \item All subgraphs are the original graph ($r_{\text{sub}}=+\infty$).
\end{itemize}
}

\begin{proof}
    Based on the condition that $r_{\text{sub}} = \infty$, all subgraphs are exactly the original distance graph, with the exception that the central node (the subgraph around which is generated) is explicitly marked.

We consider the GeoNGNN with exactly 5 inner layers and 0 outer layers, and since we assume the injectiveness of intermediate functions, more layers will only lead to no-worse expressiveness.

Now consider node $i$'s subgraph. We first assume that node $i$ is distinct to the geometric center $c$. We denote node $j$'s representation in subgraph $i$ using $h_j$ instead of $h_{ij}$ for brevity when the context is clear. We can now prove the following:

\begin{enumerate}
    \item After 1 round of DisGNN, for all $j\in [n]$, $h_j^{(1)} \to d_{ij}$ (since node $i$ is explictly marked in its subgraph).
    \item After 2 rounds of DisGNN, according to Lemma~\ref{lemma: locate GA}, for all $j\in [n]$, $h_j^{(2)} \to d_{jc}$.
    \item After 3 rounds of DisGNN, for all $j\in [n]$, $h_j^{(3)} \to h_i^{(2)} \to d_{ic}$.
\end{enumerate}

At this point, for all nodes $j$, $h_j^{(3)}$ can derive $d_{ji}$, $d_{jc}$, and $d_{ic}$, thus completing the \textit{triangular distance encoding} necessary for reconstruction in Lemma~\ref{lemma: completeness on subspace}. Therefore, similar to the proof of Lemma~\ref{lemma: completeness on subspace}, with another two more rounds of DisGNN, the output can derive the point cloud up to permutation and Euclidean isometry.

\begin{enumerate}[resume]
    \item At round 5, the output of DisGNN w.r.t. the subgraph of node $i$, $s_i = f_{\text{output}}(\msl h_j^{(5)} \mid j \in [n] \msr)$, can derive the point cloud up to permutation and Euclidean isometry.
\end{enumerate}

There is still a potential problem with the above proof: node $i$ may coincide with the geometric center. In such case, we can not anymore obtain triangle distance information formed by each node $j$ and $i, c$, which is essential in Lemma~\ref{lemma: completeness on subspace} for reconstruction. However, since $P$ has more than 2 nodes, there is at least one node that satisfies the assumption. 

Notice that GeoNGNN injectively pools all the subgraph representations in the outer GNN and obtains the final output $s = f_{\text{outer}}(\msl s_i \mid i \in [n] \msr)$. Each subgraph represnetation $s_i$ can derive the distence between node $i$ and the geometric center $c$ as following: $s_i \to h_i^{(5)} \to h_i^{(4)} \to ... \to h_i^{(2)} \to d_{ic}$, according to Lemma~\ref{lemma: locate GA}. This can be leveraged as an indicator, telling us whether node $i$ coincides with the $c$. As a consequence, we can derive the point cloud from $s$ as follows: first, search across all subgraph representations $s_i$ to find the one that does not coincide with the geometric center, then reconstruct the point cloud based on this subgraph representation according to Lemma~\ref{lemma: completeness on subspace}. This finishes the proof.

\end{proof}

\subsection{Proof of Theorem~\ref{theorem: completeness of general geometric subgraph GNNs}}
\label{proof: completeness of general geometric subgraph GNNs}

\textbf{\cref{theorem: completeness of general geometric subgraph GNNs}} \emph{(Completeness for general geometric subgraph GNNs) Under the conditions specified in Theorem~\ref{theorem: completeness of GeoNGNN}, all general geometric subgraph GNNs in Definition~\ref{def: canonical geometric subgraph GNN} with at least one local aggregation are E(3)-complete.
}

\begin{proof}

We establish by demonstrating that all general geometric subgraph GNNs $\ggA(\gA,\pool, L, r_{\text {sub}} = + \infty,  r_{\text {cutoff}}= + \infty)$ (Please see~\Cref{sec: geometric subgraph gnns} for formal definitions), abbreviated as $\ggA$ hereafter, are capable of \textit{implementing}~\citep{frasca2022understanding} GeoNGNN (with $N_{\text{inner}}$ inner layers, $r_{\text {sub}} = + \infty,  r_{\text {cutoff}}= + \infty$ and without outer GNN) – denoted as complete GeoNGNN – with a fixed number of layers $L$. Since complete GeoNGNN is E(3)-complete when $N_{\text{inner}} \geq 5$, $\ggA$ can also achieve E(3)-completeness when $L$ is larger than some constant.

Two scenarios are considered: when $\gagg^\mathsf{L}_{\mathsf{u}} \in \gA$ and when $\gagg^\mathsf{L}_{\mathsf{u}} \notin \gA$.

In the scenario where $\gagg^\mathsf{L}_{\mathsf{u}} \in \gA$, a single aggregation layer in $\ggA$ can implement a complete GeoNGNN layer by learning an aggregation function $f^{(l)}$, defined in Section~\ref{sec: geometric subgraph gnns}, that only maintains $\gagg^\mathsf{P}_{\mathsf{u,v}}$ and $\gagg^\mathsf{L}_{\mathsf{u}}$ while disregarding other aggregation operations.

Notably, complete GeoNGNN utilizes VS pooling, while $\pool$ in $\ggA$ encompasses the options of VS or SV pooling. If $\pool$ is VS, then consequently $\ggA$ with $L = N_{\text{inner}}$ has the capability to implement complete GeoNGNN. 

On the other hand, should $\pool$ be SV, $\ggA$ can deploy an additional aggregation layer in conjunction with SV pooling to implement VS pooling. In the following, we mainly elaborate on this simulation process. 

First, it is crucial to note that SV pooling learns the global representation $h_G$ through the function $f^{\text{SV}}$ as follows:
\begin{equation*}
    h_G = f^{\text{SV}} (\msl \msl h^{(L_{\text{SV}})}_{uv} \mid u \in \mathcal{V}_G  \msr \mid v \in \mathcal{V}_G\msr ),
\end{equation*}
and VS pooling learns through $f^{\text{VS}}$ in the subsequent manner: 
\begin{equation*}
h_G = f^{\text{VS}} (\msl \msl h^{(L_{\text{VS}})}_{uv} \mid v \in \mathcal{V}_G  \msr \mid u \in \mathcal{V}_G\msr ).
\end{equation*}
With $L_{\text{SV}} = L_{\text{VS}} + 1$, the last aggregation layer can be utilized to accumulate all subgraph information to the central node representation, i.e., $h^{(L_\text{SV})}(u, u) = h^{(L_\text{VS} + 1)}(u, u) \to \msl h^{(L_\text{VS})}_{uv} \mid v \in \mathcal{V}_G \msr$, given that $\gagg^\mathsf{L}_{\mathsf{u}} \in \gA$. Notice that this is feasible since each subgraph is the original graph and is fully connected. Consequently:
\begin{align*}
    &\msl \msl h^{(L_{\text{SV}})}_{uv} \mid u \in \mathcal{V}_G  \msr \mid v \in \mathcal{V}_G\msr \\\to &\msl  h^{(L_{\text{SV}})}_{vv}  \mid v \in \mathcal{V}_G\msr \\ \to &\msl \msl h^{(L_{\text{VS}})}_{vu} \mid u \in \mathcal{V}_G  \msr \mid v \in \mathcal{V}_G\msr \\ = &\msl \msl h^{(L_{\text{VS}})}_{uv} \mid v \in \mathcal{V}_G  \msr \mid u \in \mathcal{V}_G\msr,
\end{align*}
indicating that $f^{\text{SV}}$ can acquire a function that initially converts $\msl \msl h^{(L_{\text{SV}})}_{uv} \mid u \in \mathcal{V}_G  \msr \mid v \in \mathcal{V}_G\msr$ to $\msl \msl h^{(L_{\text{VS}})}_{uv} \mid v \in \mathcal{V}_G  \msr \mid u \in \mathcal{V}_G\msr$, and subsequently emulates $f^{\text{VS}}$. Thus, $\ggA$ with $L = N_{\text{inner}} + 1$ has the capability to implement complete GeoNGNN.

Finally, consider the case where $\gagg^\mathsf{L}_{\mathsf{u}} \notin \gA$. And since we assume the existence of at least one local operation, it follows that $\gagg^\mathsf{L}_{\mathsf{v}} \in \gA$. By presenting a proposition that essentially underscores symmetry, we can affirm that this case is equivalent to the first case, and therefore the conclusion still holds.

\begin{proposition}
    \label{proposition: geometric symmetry}
Let $\ggA$ be any general geometric subgraph GNN defined in Section~\ref{sec: geometric subgraph gnns}. Denote $\gA^{\mathsf{u}\leftrightarrow\mathsf{v}}$ as the aggregation scheme obtained from $\gA$ by exchanging the element $\agg^\mathsf{P}_{\mathsf{uu}}$ with $\agg^\mathsf{P}_{\mathsf{vv}}$, exchanging $\gagg^\mathsf{L}_{\mathsf{u}}$ with $\gagg^\mathsf{L}_{\mathsf{v}}$, and exchanging $\agg^\mathsf{G}_{\mathsf{u}}$ with $\agg^\mathsf{G}_{\mathsf{v}}$. Then, $\ggA(\gA, \mathsf{VS},  L, r_{\text {sub}}, r_{\text {cutoff}})$ and $\ggA(\gA^{\mathsf{u}\leftrightarrow\mathsf{v}}, \mathsf{SV}, L, r_{\text {sub}}, r_{\text {cutoff}})$ can implement each other.
\end{proposition}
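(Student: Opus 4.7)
The plan is to exhibit an explicit index-swapping bijection between the two models and to verify that under it, every operation of $\ggA(\gA, \mathsf{VS}, L, r_{\text {sub}}, r_{\text {cutoff}})$ on states $\{h_{uv}\}$ corresponds to the analogous operation of $\ggA(\gA^{\mathsf{u}\leftrightarrow\mathsf{v}}, \mathsf{SV}, L, r_{\text {sub}}, r_{\text {cutoff}})$ on states $\{\tilde h_{uv}\}$, where $\tilde h_{uv} := h_{vu}$. Since this bijection is an involution, an implementation in one direction immediately yields the reverse.

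The verification then proceeds by induction on the layer index $l$, with three things to check: initialization, per-layer updates, and pooling. For initialization, the central-node marking $\mathds{1}_{u=v}$, the raw feature $x_\cdot$, and the distance $d_{uv}=d_{vu}$ are symmetric in the two indices in exactly the way needed, so $h^{(0)}_{uv}$ (depending on $x_v, d_{uv}, \mathds{1}_{u=v}$) swaps to the correct initial value of $\tilde h^{(0)}_{uv}$ (depending on $x_u, d_{uv}, \mathds{1}_{u=v}$) after relabeling. For the per-layer update, I would go through each element of the aggregation alphabet $\gA$ one by one: the point-wise term $\agg^\mathsf{P}_{\mathsf{uv}}$ is index-symmetric; $\agg^\mathsf{P}_{\mathsf{uu}}$ reads $h_{vv}$ when computing the update of $h_{vu}$, which under $\Phi$ becomes $\tilde h_{vv}$, i.e.\ an $\agg^\mathsf{P}_{\mathsf{vv}}$ call on $\tilde h$; and symmetrically in the other direction. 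For the local operations, using the two multisets explicitly written in the excerpt, $\gagg^\mathsf{L}_{\mathsf{v}}$ applied when updating $h_{vu}$ reads $\{\!\!\{(h_{vw}, d_{uw}) \mid w \in \mathcal{N}(u)\}\!\!\}$, which after substitution $h_{vw}=\tilde h_{wv}$ is exactly $\gagg^\mathsf{L}_{\mathsf{u}}$ applied to $\tilde h$ at indices $(u,v)$; the same calculation in reverse swaps $\gagg^\mathsf{L}_{\mathsf{u}}$ with $\gagg^\mathsf{L}_{\mathsf{v}}$. The argument for the global aggregations $\agg^\mathsf{G}_{\mathsf{u}} \leftrightarrow \agg^\mathsf{G}_{\mathsf{v}}$ is identical, since they amount to multiset aggregations over an entire row or column of $h$.

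For the pooling step, it suffices to rewrite the definitions by renaming dummy variables:
\[
f^{\mathsf{VS}}\bigl(\msl \msl h_{uv} \mid v\in\mathcal{V}_G\msr \mid u\in\mathcal{V}_G\msr\bigr)
= f^{\mathsf{VS}}\bigl(\msl \msl \tilde h_{vu} \mid v\in\mathcal{V}_G\msr \mid u\in\mathcal{V}_G\msr\bigr),
\]
and after swapping the names of the outer and inner bound variables this equals an $\mathsf{SV}$-pooling on $\tilde h$; injectivity of the inner multiset function guarantees the two pooling schemes can learn each other on this transposed input. Combining this with the per-layer correspondence shows that one model can simulate the output of the other by simply choosing its learnable functions to be compositions of the bijection $\Phi$ with those of the other model; the reverse direction follows by applying the involutive swap a second time.

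I expect the main obstacle to be purely notational bookkeeping rather than anything conceptually delicate: one must be careful that ``subgraph index'' in $\gA$ refers to the first coordinate of $h$ while in $\gA^{\mathsf{u}\leftrightarrow\mathsf{v}}$ applied to $\tilde h$ it still refers to the first coordinate of $\tilde h$, so that the swap of the semantic roles of $\mathsf{u}$ and $\mathsf{v}$ in the alphabet matches the literal index transpose. As long as this convention is stated once up front, the rest of the verification is a straightforward catalogue of the aggregation operations, and the two implementations follow directly.
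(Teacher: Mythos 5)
Your proposal is correct and is essentially the paper's own argument made explicit: the paper likewise proves this by the transposition symmetry $h^{(l)}_{uv} \leftrightarrow h^{(l)}_{vu}$, arguing that the learnable functions of one model can maintain exactly the transposed states of the other, with the VS/SV pooling schemes coinciding on transposed inputs. The only blemish is a labelling slip in your local-operation paragraph (the multiset $\msl (h_{vw}, d_{uw}) \mid w\in\gN(u)\msr$ you attribute to $\gagg^\mathsf{L}_{\mathsf{v}}$ at entry $(v,u)$ is, under the role-based convention, $\gagg^\mathsf{L}_{\mathsf{u}}$ there), but the correspondence $\gagg^\mathsf{L}_{\mathsf{u}}\leftrightarrow\gagg^\mathsf{L}_{\mathsf{v}}$ you conclude is the right one and you flag the convention issue yourself, so this does not affect correctness.
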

\begin{proof}
    Similar to the original proof of Proposition 4.5. in~\citet{zhang2023complete}, the proof is almost trivial by symmetry: All functions within $\ggA(\gA, \mathsf{VS})$ can inherently learn to ensure that $h^{(l)}_{uv}$ in $\ggA(\gA, \mathsf{VS})$ precisely corresponds to $h^{(l)}_{vu}$ in $\ggA(\gA^{\mathsf{u}\leftrightarrow\mathsf{v}}, \mathsf{SV})$ for any arbitrary $u, v, l$, with the reverse equivalence also holding. Additionally, given the symmetry of the pooling method, consistency in the output can be guaranteed.
\end{proof}

As a direct consequence of Proposition~\ref{proposition: geometric symmetry}, all of the general subgraph GNN with $\gagg^\mathsf{L}_{\mathsf{v}}$ can implement another equivalent one with $\gagg^\mathsf{L}_{\mathsf{u}}$, and according to the previous proof, they are also E(3)-complete under the conditions specified in Theorem. This ends the proof.

\end{proof}

\subsection{Proof of Theorem~\ref{theorem: completeness of DimeNet}}
\label{sec: dimenet comepleteness}

\textbf{\cref{theorem: completeness of DimeNet}} \emph{(E(3)-Completeness of DimeNet, SphereNet, GemNet) When the following conditions are met, DimeNet, SphereNet and GemNet are E(3)-complete.
\begin{itemize}[itemsep=2pt,topsep=-2pt,parsep=0pt,leftmargin=10pt]
    \item The aggregation layer number is larger than some constant $C$ (irrelevant to the node number).
    \item They initialize and update all edge representations, i.e., $r_{\text{embed}} = +\infty$.
    \item They interact with all neighbors, i.e., $r_{\text{int}} = +\infty$..
\end{itemize}
}

The main idea to prove this theorem is to also show that DimeNet, SphereNet and GemNet can implement~\citep{frasca2022understanding} GeoNGNN, which is E(3)-complete, and thus they are E(3)-complete as well. We elaborate on these models in separate subsections.

\subsubsection{E(3)-completeness of DimeNet}

To begin, let us abstract the functions utilized in DimeNet. Fundamentally, DimeNet employs edge representations, iteratively updating these representations based on the neighbors of edges within a specified neighbor cutoff. During aggregation, DimeNet incorporates angle information formed by the center edge and its neighbor edges. Finally, all these edge representations are aggregated to produce a graph-level output.

We formally state these procedures:

\paragraph{Initialization (DimeNet)} In DimeNet, the initial representation \( h_{ij} \) of the edge \( ij \) is initialized based on the tuple \( (x_i, x_j, d_{ij}) \), where \( x_i \) and \( x_j \) are the features of nodes \( i \) and \( j \), and \( d_{ij} \) is the distance between these nodes:
\begin{equation}
\label{dimenet: init}
h_{ij} = f_{\text{init}}^{\text{DimeNet}}(x_i, x_j, d_{ij})
\end{equation}

\paragraph{Message Passing (DimeNet)} The message passing in DimeNet updates the edge representation \( h_{ij} \) based on the features of neighboring edges and their respective geometric information. It aggregates information from all neighbors \( k \) of node \( i \):
\begin{equation}
h_{ij} = f_{\text{update}}^{\text{DimeNet}}\left( h_{ij},  \{\!\!\{ (\theta_{kij}, h_{ki}, d_{ij}) \mid k \in \mathcal{N}(i)  \}\!\!\} \right)\notag
\end{equation}
Note that $h_{ij}$ is initialized by fusing $d_{ij}$ and atomic information, therefore we have: \( h_{ij} \to d_{ij} \). And clearly \( d_{ij}, d_{ik}, d_{kj} \to \theta_{kij} \). Therefore, the above function is expressively equivalent to:
\begin{equation}
h_{ij} = f_{\text{update}}^{\text{DimeNet}}\left( h_{ij},  \{\!\!\{ (d_{kj}, h_{ki}) \mid k \in \mathcal{N}(i) \}\!\!\} \right)\notag
\end{equation}
When the interaction cutoff of DimeNet is infinite, $\mathcal{N}(i)=[n]$, giving the final update function:
\begin{equation}
\label{dimenet: update}
h_{ij} = f_{\text{update}}^{\text{DimeNet}}\left( h_{ij},  \{\!\!\{ (h_{ki}, d_{kj} ) \mid k \in [n] \}\!\!\}  \right)
\end{equation}
\paragraph{Output Pooling (DimeNet)} Finally, DimeNet pools over all node pairs to obtain the final representation \( t \):
\begin{equation}
\label{dimenet: output}
t = f_{\text{output}}^{\text{DimeNet}}(\{\!\!\{ \{\!\!\{ h_{ij} \mid i \in [n] \} \!\!\} \mid j  \in [n] \} \!\!\})
\end{equation}

GeoNGNN, as a subtype of subgraph GNN, initializes and updates the node $j$'s represnetation in node \( i \)'s subgraph based on its atomic number and its distance to node \( i \). Representing the node \( j \)'s representation in node \( i \)'s subgraph as \( h_{ij} \), then we can samely abstract its function forms.

\paragraph{Initialization (GeoNGNN)} GeoNGNN initializes $h_{ij}$ based on node $j$'s initial node feature and distance encoding w.r.t. $i$:
\begin{equation}
\label{geongnn: init}
h_{ij} = f_{\text{init}}^{\text{GeoNGNN}}(x_j, d_{ij})
\end{equation}
\paragraph{Message Passing (GeoNGNN)}
GeoNGNN's inner GNN iteratively updates \( h_{ij} \) based on the following procedure:
\begin{equation}
\label{geongnn: update}
h_{ij} = f_{\text{update}}^{\text{GeoNGNN}}(h_{ij},  \{\!\!\{ (h_{ik}, d_{kj}) \mid k \in [n] \}\!\!\}  )
\end{equation}
Note that $k$ iterates all nodes, because according to Theorem~\ref{theorem: completeness of GeoNGNN}, the complete version GeoNGNN has infinite subgraph size and message passing cutoff.

\paragraph{Output Pooling (GeoNGNN)} In the absence of an outer GNN, GeoNGNN produces the scalar output by respectively aggregating all in-subgraph nodes' representations as the subgraph representation, and then aggregating all subgraphs' representations:
\begin{equation}
\label{geongnn: output}
t = f_{\text{output}}^{\text{GeoNGNN}}(\{\!\!\{ \{\!\!\{ h_{ij} \mid j \in [n] \} \!\!\} \mid i  \in [n] \} \!\!\} )
\end{equation}
\paragraph{Implementing GeoNGNN with DimeNet} Now we show how to use DimeNet to implement GeoNGNN. Since GeoNGNN is E(3)-complete, it can be observed that if DimeNet can successfully implement GeoNGNN, then DimeNet is also E(3)-complete.

Let us start by examining the initialization step. Note that $f_{\text{init}}^{\text{DimeNet}}$ (Eq.~\ref{dimenet: init}) takes more information than $f_{\text{init}}^{\text{GeoNGNN}}$ (Eq.~\ref{geongnn: init}) as input. Therefore, by learning a function $f_{\text{init}}^{\text{DimeNet}}$ that simply disregards the $x_i$ term, $f_{\text{init}}^{\text{DimeNet}}$ can achieve the same function form as $f_{\text{init}}^{\text{GeoNGNN}}$.

Next, we move on to the update (message passing) step. The only difference between $f_{\text{update}}^{\text{DimeNet}}$ (Eq.~\ref{dimenet: update}) and $f_{\text{update}}^{\text{GeoNGNN}}$ (Eq.~\ref{geongnn: update}) lies in their input arrangements. Specifically, $f_{\text{update}}^{\text{DimeNet}}$ takes $h_{ki}, k \in [n]$ as part of input, while $f_{\text{update}}^{\text{GeoNGNN}}$ takes $h_{ik}, k \in [n]$ as the corresponding part of input. This index swap can be mathematically aligned: We can stack two DimeNet update layers to implement one GeoNGNN update layer. The first DimeNet layer, starting from $h_{ij}^{(l)}$, calculates $h_{ij}^{(l+\frac{1}{2})}$ to store the information of $h_{ji}^{(l)}$, i.e., $h_{ij}^{(l+\frac{1}{2})} \to h_{ji}^{(l)}$.
This is feasible due to the property that in Eq.\ref{dimenet: update}, $(d_{jj}, h_{ji})$ can be selected uniquely from the multiset by $f_{\text{update}}^{\text{DimeNet}}$, as $d_{kj} = 0$ if and only if $k = j$. Subsequently, the second DimeNet layer swaps the indices within the multiset, transforming $\big(h_{ij}^{(l+\frac{1}{2})},  \{\!\!\{ (d_{kj}, h_{ki}^{(l+\frac{1}{2})}) \mid k \in [n] \}\!\!\} \big)$ into $\big(h_{ij}^{(l)},  \{\!\!\{ (d_{kj}, h_{ik}^{(l)}) \mid k \in [n] \}\!\!\} \big)$. At this stage, the function form aligns with that of $f_{\text{update}}^{\text{GeoNGNN}}$ (Eq.\ref{geongnn: update}).

Finally, we consider the output step. Similarly, the only distinction between $f_{\text{output}}^{\text{DimeNet}}$ (Eq.\ref{dimenet: output}) and $f_{\text{update}}^{\text{GeoNGNN}}$ (Eq.\ref{geongnn: output}) lies in the swapping of input indices. By stacking one DimeNet update layer and one DimeNet output layer, we can similarly implement the GeoNGNN output layer, as discussed in the prior paragraph.

In conclusion, we have shown that DimeNet's layers can be utilized to implement GeoNGNN's layers. By doing so, we establish that DimeNet is also E(3)-complete with a constant number of layers.

\subsubsection{E(3)-completeness of SphereNet}

 We aim to demonstrate the E(3)-completeness of SphereNet by illustrating that SphereNet can implement DimeNet, which has already been established as E(3)-complete in the preceding subsection. We begin by abstracting the layers of SphereNet.
 
\paragraph{Initialization (SphereNet)}    
Similar to DimeNet, SphereNet also initializes edge representations by integrating atom properties and distance information:
\begin{equation}
\label{spherenet: init}
h_{ij} = f_{\text{init}}^{\text{SphereNet}}(x_i, x_j, d_{ij})    
\end{equation}

\paragraph{Message Passing (SphereNet)} During message passing, SphereNet aggregates neighbor edge information into the center edge, while additionally considering end nodes' embeddings and spherical coordinates. Since we consider SphereNet$^*$, which drops the relative azimuthal angle $\varphi_k$ of end node $k$, the function form is highly similar to that of DimeNet:
\begin{equation*}
h_{ij}= f_{\text{update}}^{\text{SphereNet}}(h_{ij}, v_i, v_j, \{\!\!\{ (h_{ki}, d_{ki}, \theta_{kij})  \mid k \in [n]\}\!\!\})
\end{equation*}

Note that the original SphereNet can only be more powerful than SphereNet$^*$.

Since $h_{ij} \to d_{ij}$, $(d_{ij}, d_{ki}, \theta_{kij}) \to d_{kj}$, the above function is equivalently expressive as
\begin{equation}
\label{spherenet: update}
    h_{ij}= f_{\text{update}}^{\text{SphereNet}}(h_{ij}, v_i, v_j, \{\!\!\{ (h_{ki}, d_{kj})  \mid k \in [n] \}\!\!\})
\end{equation}

\paragraph{Output (SphereNet)} The output block of SphereNet first aggregates edge representations into node representations as follows:
\begin{equation*}
    v_j = f_{\text{node}}^{\text{SphereNet}}( \{\!\!\{ h_{ij} \mid i \in [n] \} \!\!\}  )
\end{equation*}
Then, in order to calculate graph/global embedding, it further aggregates node representations as follows:
\begin{equation*}
    t = f_{\text{graph}}^{\text{SphereNet}}( \{\!\!\{ v_j \mid j \in [n] \} \!\!\}  )
\end{equation*}
Therefore, the overall output function can be abstracted as:
\begin{equation}
\label{spherenet: output}
    t = f_{\text{output}}^{\text{SphereNet}}( \{\!\!\{ \{\!\!\{ h_{ij} \mid i \in [n] \} \!\!\} \mid j \in [n] \} \!\!\}  )
\end{equation}

\paragraph{Implementing DimeNet with SphereNet} Now we show how to use SphereNet to implement DimeNet. 

At the initialization step, $f_{\text{init}}^{\text{SphereNet}}$ (Eq.~\ref{spherenet: init}) and $f_{\text{init}}^{\text{DimeNet}}$ (Eq.~\ref{dimenet: init}) exhibit exactly the same function form.

At the update step, $f_{\text{update}}^{\text{SphereNet}}$ (Eq.~\ref{spherenet: update}) takes strictly more information, i.e., the node representations, than $f_{\text{update}}^{\text{DimeNet}}$ (Eq.~\ref{dimenet: update}). Therefore, by learning a function $f_{\text{update}}^{\text{SphereNet}}$ that simply ignores node representations $v_i, v_j$, $f_{\text{update}}^{\text{SphereNet}}$ and $f_{\text{update}}^{\text{DimeNet}}$ can share the same function form.

At the output step, $f_{\text{output}}^{\text{SphereNet}}$ (Eq.~\ref{spherenet: output}) and $f_{\text{output}}^{\text{DimeNet}}$ (Eq.~\ref{dimenet: output}) share exactly the same function form.

In conclusion, we have shown that SphereNet's layers can be utilized to implement DimeNet's layers. By doing so, we establish that SphereNet is also E(3)-complete with a constant number of layers.

\subsubsection{E(3)-completeness of GemNet}
GemNet(-Q) and DimeNet share the same initialization and output block. The difference between them lies in the update block. Specifically, GemNet adopts a three-hop message passing to incorporate higher-order geometric information, dihedral angle. The update function can be abstracted as:

\begin{equation}
h_{ij} = f_{\text{update}}^{\text{GemNet}}\big(h_{ij},  \{\!\!\{ (h_{ab},d_{ij}, d_{bj}, d_{ab}, \notag \theta_{ijb}, \theta_{jba}, \theta_{ijba}) \mid b\in [n], a \in [n]  \}\!\!\} \big)
\end{equation}

Here, $\theta_{ijba}$ that has 4 subscripts represents the dihedral angle of plane $ijb$ and $jba$. By learning a function that simply selects all $b=i$ from the multiset and ignores some terms (Note that this is feasible, since $(h_{ab}, d_{ij}, \theta_{ijb}, \theta_{ijba}, d_{bj}, \theta_{jba}, d_{ab} ) \to (d_{ij}, \theta_{ijb}, d_{bj}) \to d_{bi}$, while only $b=i$ results in  $d_{bi}=0$), $f_{\text{update}}^{\text{GemNet}}$ can be simplified as follows:
\begin{equation}
        h_{ij} = f_{\text{update}}^{\text{GemNet}}\left( h_{ij},  \{\!\!\{ (\theta_{aij}, h_{ai}, d_{ij} ) \mid a \in [n] \}\!\!\} \right)
\end{equation}
Now, the simplified $f_{\text{update}}^{\text{GemNet}}$ has the same form as $f_{\text{update}}^{\text{DimeNet}}$ (Eq.~\ref{dimenet: update}), indicating that GemNet can implement DimeNet, and therefore is also E(3)-complete.

\section{Extended Analysis of DisGNN}
\label{sec: extended analysis of vd}

\subsection{The Proper Subset Relation in Figure~\ref{fig: enhancement_relation}(b)}

\begin{proposition}(Proper Subset)
    \label{proposition: Proper Subset}
    \begin{itemize}
        \item $\mathcal{D}$-symmetric point cloud set is a proper subset of $\mathcal{C}$-symmetric point cloud set.
        \item The unidentifiable point cloud set is a proper subset of $\mathcal{D}$-symmetric point cloud set.
    \end{itemize}
\end{proposition}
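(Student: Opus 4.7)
The plan is to prove each claim in two stages: first the set inclusion, then strictness via an explicit witness.

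For the first claim, the inclusion $\mathcal{D}\text{-sym}\subseteq\mathcal{C}\text{-sym}$ follows directly from Proposition~\ref{prop: VD to C}. The derivation $x^{\mathcal{D}}_i\to x^{\mathcal{C}}_i$ established there means that the partition induced by $\mathcal{D}$ refines the one induced by $\mathcal{C}$, so every mass function defined on $\mathcal{C}$-features lifts to an equivalent mass function on $\mathcal{D}$-features, giving $\mathcal{C}^{\text{set}}(P)\subseteq\mathcal{D}^{\text{set}}(P)$. Since both sets always contain the geometric center, $|\mathcal{D}^{\text{set}}(P)|=1$ forces $|\mathcal{C}^{\text{set}}(P)|=1$. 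For strictness, I will exhibit a point cloud in which every node is equidistant from the geometric center so that $\mathcal{C}$ collapses to a single class, yet DisGNN still separates the nodes into distinct orbits. A concrete candidate is the unit-circle configuration $P=\{(1,0),\,(-\tfrac{1}{2},\tfrac{\sqrt{3}}{2}),\,(-\tfrac{1}{2},-\tfrac{\sqrt{3}}{2}),\,(0,1),\,(0,-1)\}$: the five points sum to zero, so the geometric center is the origin and $\mathcal{C}$-symmetry is automatic, while a short computation of the $1$-hop distance multisets already splits them into three DisGNN orbits $\{(1,0)\}$, $\{(-\tfrac{1}{2},\pm\tfrac{\sqrt{3}}{2})\}$, $\{(0,\pm 1)\}$, whose geometric subcenters $(1,0)$, $(-\tfrac{1}{2},0)$, $(0,0)$ do not coincide. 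Proposition~\ref{proposition: alternative} then yields $\mathcal{D}$-asymmetry.

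For the second claim, the inclusion $\text{unidentifiable}\subseteq\mathcal{D}\text{-sym}$ is exactly the contrapositive of Theorem~\ref{Theorem: Asymmetric point clouds are identifiable}. For strictness, the simplest witness is any two-point configuration $P=\{p_1,p_2\}$: the two nodes carry the same DisGNN feature by symmetry, so the unique subcenter is their midpoint, which coincides with the geometric center, making $P$ $\mathcal{D}$-symmetric; yet $P$ is trivially identifiable, because a two-point cloud is determined up to $E(3)$ and permutation by its single pairwise distance, which DisGNN reads off in one round.

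The hard part will be the strictness of the first inclusion: most symmetric-looking configurations, such as regular polytopes, are simultaneously $\mathcal{C}$- and $\mathcal{D}$-symmetric, so the construction has to be symmetric enough for $\mathcal{C}$ to see only a single class but asymmetric enough for DisGNN to refine it. Once the candidate above is validated by the distance-multiset computation together with the subcenter characterization of Proposition~\ref{proposition: alternative}, both parts of the proposition follow.
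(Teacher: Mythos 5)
Your proposal is correct and follows essentially the same route as the paper: both inclusions come from Proposition~\ref{prop: VD to C} and the contrapositive of Theorem~\ref{Theorem: Asymmetric point clouds are identifiable}, and strictness is shown by explicit witnesses. Your witnesses differ from the paper's (a five-point unit-circle configuration instead of the equilateral-triangle-plus-two-nodes example of Figure~\ref{fig: CVD}, and a two-point cloud instead of an equilateral triangle), but both check out: your circle configuration is $\mathcal{C}$-symmetric yet its stabilized DisGNN partition $\{(1,0)\},\{(-\tfrac12,\pm\tfrac{\sqrt3}{2})\},\{(0,\pm1)\}$ has non-coinciding subcenters, and a two-point cloud is $\mathcal{D}$-symmetric yet identifiable from its single distance.
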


\begin{proof}
The first $\subseteq$ relation is a direct consequence of results in~\citet{delle2023three, li2024distance}, and reformulated in~\Cref{prop: VD to C}. The second $\subseteq$ relation is a direct consequence of Theorem~\ref{Theorem: Asymmetric point clouds are identifiable}. Therefore, we only need to prove the $\subset$ relation for the two cases. To see this, we construct a point cloud $P$ for each case, showing that $P$ is in the second set but not in the first set. 

The first constructed $P$ is shown in Figure~\ref{fig: CVD}. This point cloud $P$ is $\mathcal{C}$-symmetric but $\mathcal{D}$-asymmetric.
\begin{figure}[h]
    \centering
    \includegraphics[width=0.65\columnwidth]{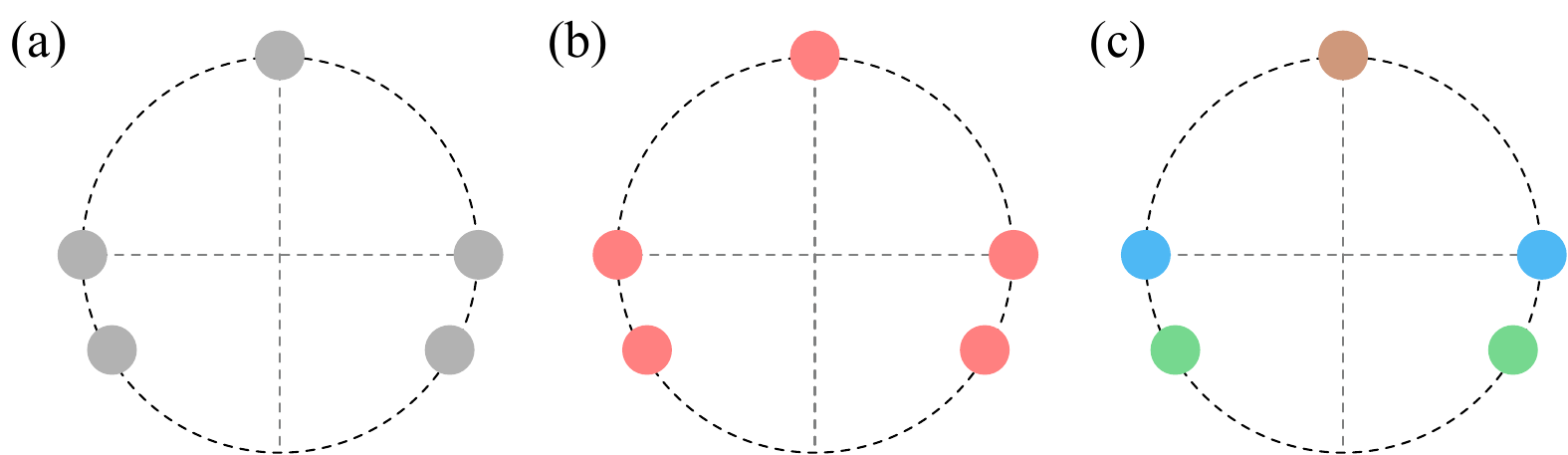}
    \caption{A point cloud $P$ that exhibits ${\mathcal{C}}$-symmetry but not ${\mathcal{D}}$-symmetry. (a) The point cloud $P$, which consists of an equilateral triangle and two additional nodes. (b) The labeled point cloud $(P, X^{\mc C})$ after calculating $\mc C$ features. Note that each node has the same distance to the geometric center, therefore the nodes still remain undivided. (c) The labeled point cloud $(P, X^{\mc VD})$ after calculating $\mc {D}$ features.}
    \label{fig: CVD}
\end{figure}

We describe the second case as follows. Consider an equilateral triangle \( P \) of arbitrary side length. For any other point cloud \( P' \) consisting of three points and non-isomorphic to \( P \), there are only two possible scenarios: (1) \( P' \) consists of three nodes forming an equilateral triangle with a side length distinct from that of \( P \); (2) $P'$ consists of 3 nodes arranged in a manner that does not form an equilateral triangle. Importantly, DisGNN possesses the capability to distinguish between $P$ and $P'$ for both cases, due to its ability to embed node numbers and all distance lengths. Consequently, DisGNN successfully identifies $P$ (i.e., $P$ is not in the unidentifiable set). However, it's easy to check that $P$ is $\mathcal{D}$-symmetric.

\end{proof}

\subsection{Comparison to~\citet{hordan2024complete}}
\label{sec:comparsion_with_hordan}
We first show that our established symmetric point cloud set, specifically the $\mathcal{D}$-symmetric point cloud set, denoted as $\mathbb{R}^{n\times 3}_{\mathcal{D}}$, is strictly \textit{smaller} than the symmetric point cloud set 
$\mathbb{R}^{n\times 3} \setminus \mathbb{R}^{n\times 3}_{\text{distinct}}$ defined in~\citet{hordan2024complete} (and restated in~\Cref{def: distinct}), as illustrated in~\Cref{fig: relation_with_prior}.

\begin{figure}[h]
    \centering
    \includegraphics[width=0.5\columnwidth]{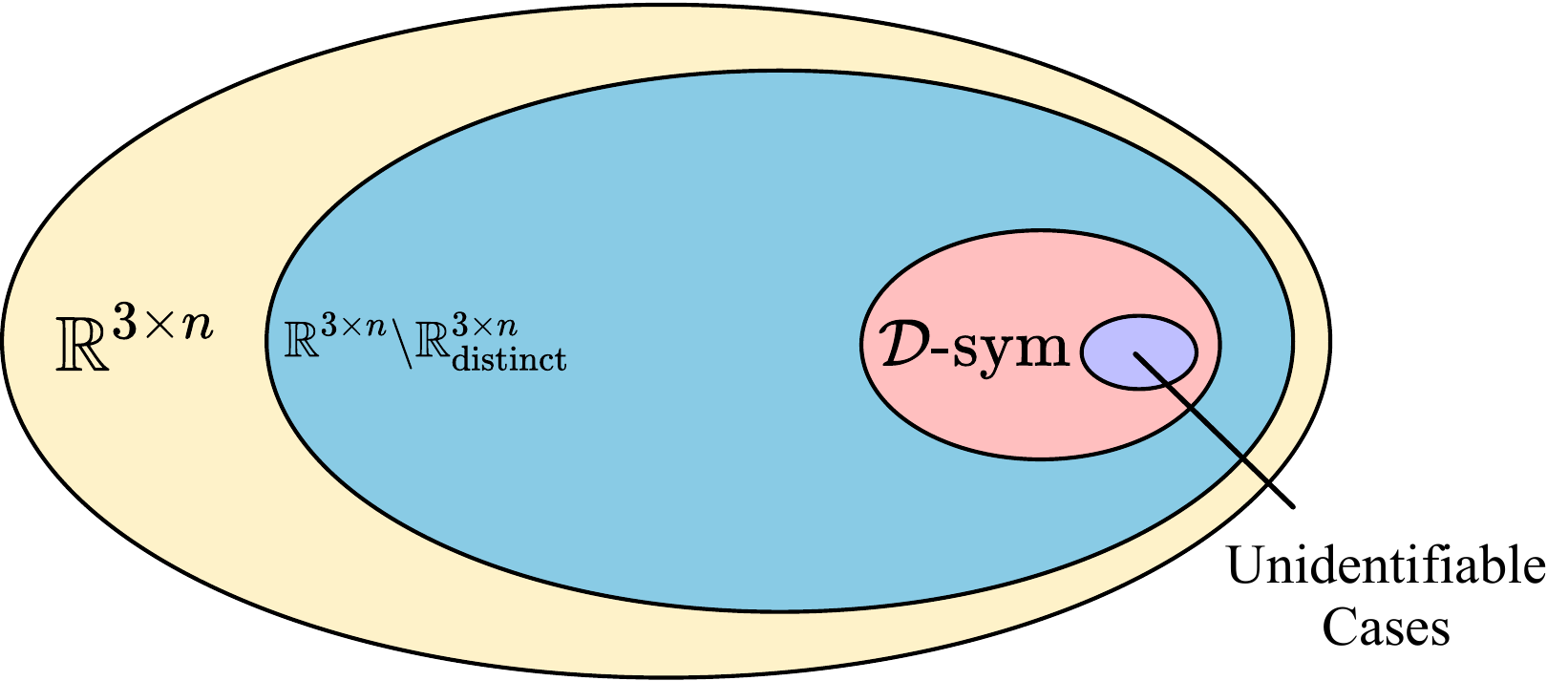}
    \caption{The relationship between our proposed symmetric point cloud sets and the symmetric set $\mathbb{R}^{n\times 3} \setminus \mathbb{R}^{n\times 3}_{\text{distinct}}$ characterized in prior work~\citep{hordan2024complete}. The $\mathcal{D}$-symmetric point cloud set is a proper subset of $\mathbb{R}^{n\times 3} \setminus \mathbb{R}^{n\times 3}_{\text{distinct}}$, as stated in Proposition~\ref{proposition: proper relation of distinct and not c}.}
    \label{fig: relation_with_prior}
\end{figure}

More precisely, if we denote the complement of the $\mathcal{D}$-symmetric point cloud set as $\mathbb{R}^{n\times 3}_{\text{not }\mathcal{D}}$, we arrive at the conclusion stated in~\Cref{proposition: proper relation of distinct and not c}. 

\begin{proposition} 
    $\mathbb{R}^{n\times 3}_{\text{distinct}} \subset \mathbb{R}^{n\times 3}_{\text{not }\mathcal{D}}$.
    \label{proposition: proper relation of distinct and not c}
\end{proposition}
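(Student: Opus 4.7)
The plan is to establish the inclusion $\mathbb{R}^{n\times 3}_{\text{distinct}} \subseteq \mathbb{R}^{n\times 3}_{\text{not }\mathcal{D}}$ first, then exhibit a concrete separator to upgrade it to strict containment. For the inclusion, I would first unpack~\Cref{def: distinct} and note that membership in $\mathbb{R}^{n\times 3}_{\text{distinct}}$ guarantees the nodes of $P$ are separated by an invariant encoding no finer than $\mathcal{D}$. Since the stabilized DisGNN partition refines every coarser permutation-equivariant invariant labeling (the same refinement argument invoked in~\Cref{prop: VD to C} and~\citep{delle2023three, li2024distance}), each $P \in \mathbb{R}^{n\times 3}_{\text{distinct}}$ must in fact have pairwise distinct $\mathcal{D}$-features $\{x_1^{\mathcal{D}},\ldots,x_n^{\mathcal{D}}\}$.

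Given this pairwise distinctness, for any index $i$ I would take an indicator-type mass function $m_i:\mathbb{R}^d\to\mathbb{R}$ with $m_i(x_i^{\mathcal{D}})=1$ and $m_i(x_j^{\mathcal{D}})=0$ for $j\neq i$; substituting into~\Cref{def: symmetry} collapses the weighted barycenter to the coordinate $p_i$ itself, so $\{p_1,\ldots,p_n\}\subseteq \mathcal{D}^{\text{set}}(P)$. Since any point cloud of size $n\geq 2$ under consideration contains at least two distinct coordinates, this forces $|\mathcal{D}^{\text{set}}(P)|\geq 2$, i.e., $P\in\mathbb{R}^{n\times 3}_{\text{not }\mathcal{D}}$.

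For strictness, I would display an explicit $P$ that is $\mathcal{D}$-asymmetric but fails the $\mathcal{D}$-distinctness property. A clean choice is a four-point configuration: an equilateral triangle centered at the origin in the $xy$-plane together with a single extra point on the $z$-axis at height $h\neq 0$. The $C_3$ rotational symmetry forces the three triangle nodes to share a common $\mathcal{D}$-feature, so $P\notin\mathbb{R}^{n\times 3}_{\text{distinct}}$. Nevertheless, the two $\mathcal{D}$-equivalence classes partition $P$ into the triangle sub-cloud (with centroid at the origin) and the singleton sub-cloud (at $(0,0,h)$); weighting each class separately via an indicator mass function yields two distinct elements of $\mathcal{D}^{\text{set}}(P)$, giving $|\mathcal{D}^{\text{set}}(P)|\geq 2$ and therefore $P\in\mathbb{R}^{n\times 3}_{\text{not }\mathcal{D}}\setminus\mathbb{R}^{n\times 3}_{\text{distinct}}$.

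The main obstacle is aligning the formal definition of $\mathbb{R}^{n\times 3}_{\text{distinct}}$ from~\citep{hordan2024complete} with the $\mathcal{D}$-partition; once it is verified that $\mathcal{D}$ refines Hordan's separating encoding — a consequence of the iterated-injective design of DisGNN together with the refinement lemmas already established in~\citep{delle2023three, li2024distance} — the rest of both directions is mechanical. The counterexample itself requires only checking two barycenters, which is immediate.
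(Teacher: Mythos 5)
Your argument is correct and follows essentially the same route as the paper: the inclusion comes from the fact that DisGNN's injective aggregation determines each node's geometric degree, so distinct degrees force pairwise distinct $\mathcal{D}$-features and hence $|\mathcal{D}^{\text{set}}(P)|\geq 2$ via indicator mass functions, and strictness is shown by a concrete symmetric configuration (the paper points to the example of \Cref{fig: CVD} or an isosceles triangle; you use an equilateral triangle plus an apex on its axis). Two small nits: the blanket claim that the stabilized $\mathcal{D}$-partition refines \emph{every} permutation-equivariant invariant labeling is false in general, though the specific instance you actually use (that $\mathcal{D}$ refines the geometric-degree partition) is exactly right and is all the paper uses; and ``any height $h\neq 0$'' is too generous, since $h=\pm\sqrt{2}\,r$ (with $r$ the circumradius) makes your example a regular tetrahedron, which is $\mathcal{D}$-symmetric, so one should take a generic $h$ (equivalently, apex--vertex distance different from the side length) so that the apex genuinely forms its own feature class.
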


\begin{proof}
To establish this result, we first revisit the findings of~\citet{hordan2024complete}, which define an asymmetric point cloud subset \(\mathbb{R}^{n \times 3}_{\text{distinct}}\) as follows:

\begin{definition}[$\mathbb{R}^{n \times 3}_{\text{distinct}}$ set~\citep{hordan2024complete}] 
\label{def: distinct}
    We define $\mathbb{R}^{n\times 3}_{\text{distinct}} \subset \mathbb{R}^{n\times 3}$ as 
    \[
    \mathbb{R}^{n \times 3}_{\text{distinct}} := \{ P \in \mathbb{R}^{n \times 3} \mid d(i, P) \neq d(j, P) \ \forall i, j \in [n], i \neq j \},
    \]
    where the geometric degree of $i$ is defined as 
    \[
    d(i,P) = \{\|p_1 - p_i\|,\ldots,\|p_n - p_i\|\}.
    \]
\end{definition}

It is evident that $\mathbb{R}^{n\times 3}_{\text{distinct}} \subseteq \mathbb{R}^{n\times 3}_{\text{not }\mathcal{D}}$, since DisGNN embeds nodes based on their geometric degrees. According to~\Cref{def: distinct}, all nodes' geometric degrees are distinct, leading to unique embedded node features. Thus, these point clouds are $\mathcal{D}$-asymmetric, meaning that $\mathbb{R}^{n\times 3}_{\text{distinct}} \subseteq \mathbb{R}^{n\times 3}_{\text{not }\mathcal{D}}$. 

Furthermore, there exist many cases in $\mathbb{R}^{n\times 3}_{\text{not }\mathcal{D}}$ that do not belong to $\mathbb{R}^{n\times 3}_{\text{distinct}}$. For example, see the case in~\Cref{fig: CVD}. Other examples can be easily constructed, such as isosceles triangles where the leg length differs from the base length.

\end{proof}

The key reason why $\mathbb{R}^{n\times 3}_{\text{not }\mathcal{D}}$ contains significantly more point clouds than $\mathbb{R}^{n\times 3}_{\text{distinct}}$ is that the latter imposes overly strict constraints on asymmetry, requiring \textbf{all pairs} of nodes to be distinct. In contrast, $\mathbb{R}^{n\times 3}_{\text{not }\mathcal{D}}$ emphasizes the \textbf{global} asymmetry of the entire point cloud.

In Theorem 2 of~\citet{hordan2024complete}, the authors showed that given any two point clouds $P_1, P_2 \in \mathbb{R}^{n\times 3}_{\text{distinct}}$, DisGNN can distinguish them. Our result is strictly stronger than that of~\citet{hordan2024complete} in the sense that our Theorem~\ref{Theorem: Asymmetric point clouds are identifiable} identifies many more distinguishable pairs of point clouds (by DisGNN) than their result, in the following sense:

\begin{itemize}
    \item Theorem~\ref{Theorem: Asymmetric point clouds are identifiable} shows that arbitrary pairs of point clouds $P_1, P_2$ from $\mathbb{R}^{n\times 3}_{\text{not }\mathcal{D}}$ (which is strictly larger than $\mbb R_{\text {distinct}}^{n\times 3}$ as shown in~\Cref{proposition: proper relation of distinct and not c}) can be distinguished by DisGNN.
    \item Theorem~\ref{Theorem: Asymmetric point clouds are identifiable} further establishes \textbf{identifiability}, meaning that for any $P_1 \in \mathbb{R}^{n\times 3}_{\text{not }\mathcal{D}}$ and any $P_2$ from the \textbf{entire} point cloud set $\mathbb{R}^{n\times 3}$, DisGNN can still distinguish them.
\end{itemize}

\subsection{Assessment of Unidentifiable Cases of DisGNN}
\label{sec: assessment_on_unidentifiable_case_details}

Here, we provide further details about the experiment in~\Cref{sec:assessment_on_unidentifiable_case}.

We first give a simple proposition, which can determine whether a given point cloud is $\mathcal{A}$-symmetric without the need to consider ``mass'' functions $m$, and subsequentially facilitate the noise tolerance setting.

\begin{proposition} 
(Equivalent definition for $\mathcal{A}$-symmetry)
Given an arbitrary point cloud $P \in \RRt$ and a E(3)-invaraint and permutation-equivariant algorithm $\mathcal{A}$, let $K$ denote the number of distinct node features in $\mc A(P)$, and we consider $K$ sub-point clouds each only contain nodes from $P$ with the same node feature. Then we have: $P$ is $\mathcal{A}$-symmetric $\iff$ all these $K$ sub-point clouds' geometric centers coincide.
\label{proposition: alternative}
\end{proposition}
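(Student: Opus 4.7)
The plan is to establish the equivalence by a direct partition argument, exploiting the fact that any ``mass'' function $m:\mathbb{R}^d\to\mathbb{R}$ acts identically on nodes sharing the same $\mathcal{A}$-feature, so the weighted barycenter can always be re-expressed in terms of the $K$ sub-point cloud geometric centers.

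For the forward direction ($\Rightarrow$), I would contrapose: assume the $K$ sub-point clouds' geometric centers do not all coincide, and exhibit two distinct elements of $\mathcal{A}^{\text{set}}(P)$. Let $\{y_1,\dots,y_K\}$ be the distinct feature values appearing in $\mathcal{A}(P)$, and let $S_k=\{i\in[n]\mid x^{\mathcal{A}}_i=y_k\}$. Choose $m_k$ to be the indicator $m_k(y)=\mathds{1}_{y=y_k}$; then $m_k(x^{\mathcal{A}}_i)=\mathds{1}_{i\in S_k}$ and the corresponding element of $\mathcal{A}^{\text{set}}(P)$ is precisely $\tfrac{1}{|S_k|}\sum_{i\in S_k}p_i$, i.e.\ the geometric center of the $k$-th sub-point cloud. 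If these centers are not all equal, then $\mathcal{A}^{\text{set}}(P)$ has at least two distinct elements, contradicting $\mathcal{A}$-symmetry.

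For the backward direction ($\Leftarrow$), assume the $K$ sub-point cloud centers all coincide at a common point $c\in\mathbb{R}^3$. For any admissible mass function $m$, define $\mu_k:=m(y_k)$, so $m(x^{\mathcal{A}}_i)=\mu_k$ whenever $i\in S_k$. The weighted barycenter becomes
\begin{equation*}
\frac{\sum_{i\in[n]}m(x^{\mathcal{A}}_i)\,p_i}{\sum_{i\in[n]}m(x^{\mathcal{A}}_i)}
= \frac{\sum_{k=1}^{K}\mu_k\sum_{i\in S_k}p_i}{\sum_{k=1}^{K}\mu_k|S_k|}
= \frac{\sum_{k=1}^{K}\mu_k|S_k|\,c}{\sum_{k=1}^{K}\mu_k|S_k|}=c,
\end{equation*}
using $\sum_{i\in S_k}p_i=|S_k|\,c$ by hypothesis. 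Since this holds for every $m$ with nonzero total mass, $\mathcal{A}^{\text{set}}(P)=\{c\}$ and $P$ is $\mathcal{A}$-symmetric.

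There is no real obstacle here: the argument is essentially bookkeeping, with the only subtle point being that the admissibility condition $\sum_i m(x^{\mathcal{A}}_i)\neq 0$ is automatically satisfied by the indicator functions used in the forward direction (since each $S_k$ is nonempty by construction of $K$). I would write the proof compactly with the two directions presented in sequence, emphasizing that the partition $\{S_1,\dots,S_K\}$ is the canonical structure that reduces the infinite family of mass-weighted centers to a finite combinatorial check.
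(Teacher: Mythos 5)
Your proposal is correct and follows essentially the same route as the paper's proof: the forward direction rests on the observation that each sub-point cloud's geometric center lies in $\mathcal{A}^{\text{set}}(P)$ via an indicator mass function (you merely phrase it contrapositively), and the backward direction is the same decomposition of the weighted barycenter over the feature-partition, collapsing to the common center $c$. The admissibility remark you add ($\sum_i m(x^{\mathcal{A}}_i)\neq 0$ holds for the indicators since each class is nonempty) is a harmless and correct clarification.
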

\begin{proof}

    We first prove that $P$ is $\mathcal{A}$-symmetric $\implies$ all these $K$ sub-point clouds' geometric centers coincide. This is actually a direct consequence of the original definition of $\mathcal{A}$-symmetry: each geometric center of the $K$ point clouds is in $\mc{A}^{\text{set}}(P)$, and $P$ is $\mathcal{A}$-symmetric means that $\mc{A}^{\text{set}}(P)$ contains only one element, therefore all $K$ sub-point clouds' geometric centers coincide.

    We then prove the reverse direction. We denote $I_k$ as the index set of the nodes with the $k$-th kind of node features in $\mc A(P)$. For an arbitrary element $\mc{A}(P)^m$ from collections $\mc{A}^{\text{set}}(P)$ calculated by function $m$, we have $\mc{A}(P)^m = \frac{\sum_{i\in [n]}m(x_i^{\mc A}) p_i} { \sum_{i\in[n]}m(x_i^{\mc A})}$ by definition, which can be decomposed as follows:
    \begin{align}
        \mc{A}(P)^m &= \frac{\sum_{i\in [n]}m(x_i^{\mc A}) p_i} { \sum_{i\in[n]}m(x_i^{\mc A})}\notag \\
        &= \frac{\sum_{k\in[K]} \sum_{i\in I_k}m(x_i^{\mc A}) p_i} { \sum_{k\in[K]} \sum_{i\in I_k}m(x_i^{\mc A})}\notag\\
        &= \frac{\sum_{k\in[K]}  M_kc_k } { \sum_{k\in[K]} M_k},
    \end{align}
    where $c_k\in \RR^3$ denotes the geometric center of the $k$-th point cloud, $M_k = \sum_{i\in I_k}m(x_i^{\mc A})$ denotes the sum of ``masses'' associated with the $k$-th sub-point cloud. Since all $c_k$ coincide, i.e., $c_1 = c_2 = \ldots = c_K$, we have: $\mc{A}(P)^m = \frac{c_1 \sum_{k\in[K]} M_k} { \sum_{k\in[K]} M_k} = c_1$. Thus, all the possible elements from $\mc{A}^{\text{set}}(P)$ coincide with $c_1$. Obviously, since the geometric center $c$ of the whole point cloud is also in $\mc{A}^{\text{set}}(P)$, $c_1 = c$. Therefore, $P$ is $\mathcal{A}$-symmetric.
\end{proof}

\paragraph{Rescaling and criteria settings} Since the two datasets under consideration exhibit different scales, the use of fixed tolerance errors is inapplicable. To address this, a preprocessing step is performed on both datasets by rescaling all point clouds to ensure that the distance between the geometric center and the farthest node is $1$. For ModelNet40, we preprocess the data using farthest point sampling with 256 nodes. We then fix the rounding number $r$ to $2$ for QM9 and $1$ for ModelNet40 (which is thus quite robust against noise in this scale), and conduct the assessment with the deviation error ranging from 1e-6 to 1e-1, as shown in~\Cref{fig: assessment}. Note that a deviation error of 1e-1 is quite large for a point cloud located within a unit sphere. As a result, many asymmetric clouds may still be determined as symmetric under such criteria. We use this as a rough upper bound only for reference.

\section{Extended Analysis of GeoNGNN}

\subsection{Complexity Analysis}
\label{sec: complexity analysis}

For a $n$-sized point cloud, without considering the complexity of achieving injective intermediate functions, GeoNGNN achieves theoretical completeness with an asymptotic time complexity of $O(n^3)$. This complexity arises from the fact that there are $n$ subgraphs, each of which undergoes the complete-version DisGNN operation, resulting in an overall complexity of $O(n^2)$ per subgraph. Importantly, this time complexity is consistent with that of 2-F-DisGNN~\citep{li2024distance}, DimeNet~\citep{dimenet}, and SphereNet~\citep{liu2021spherical}, when their conditions for achieving E(3)-completeness are met.

We leverage the findings introduced by \citet{amir2024neural} to analyze the complexity involving the realization of \textit{injective} neural functions.\footnote{We note that for tractability, “E(3)-completeness” now should refer to the model’s ability to distinguish between any pairs of non-isomorphic point clouds of size \textit{less than or equal to $n$}.} Within each aggregation layer, GeoNGNN embeds the multiset $\msl (h_{ij}^{(l)}, d_{ij}) \mid j \in [n] \msr$ to update the representation of $h_{ij}^{(l)}$. As per the study conducted by \citet{amir2024neural}, an embedding dimension of $O(kn)$ is sufficient for injectively embedding such a multiset, with $k$ representing the embedding dimension of $(h_{ij}^{(l)}, d_{ij})$. In the initial layer, $(h_{ij}^{(0)}, d_{ij})$ possesses a constant dimension independent of $n$. Therefore, the sufficient embedding dimension for $h^{(1)}_{ij}$ is $O(n)$. Though it seems that the sufficient embedding dimension will grow exponentially with respect to the layer number $l$, \citet{hordan2024complete} has demonstrated that the crucial dimension is the \textit{intrinsic dimension} of the multiset, which maintains $O(n)$ throughout all layers, rather than $kn$, the ambient dimension. Consequently, the sufficient embedding dimension for any given layer is $O(n)$. By following the neural function form proposed by \citet{amir2024neural}, we apply a shallow MLP individually to each element within the multiset and sum them up to obtain the multiset embedding. This leads to a complexity of $O(n^2) \times n = O(n^3)$, where $O(n^2)$ represents the complexity of forwardness of MLP whose input and output dimension are both $O(n)$, and $n$ represents the $n$ elements in the multiset. Considering the updating of all $h_{ij}^{(l)}, i,j\in[n]$, the complexity in each layer becomes $O(n^3) \times n^2 = O(n^5)$. During the final pooling stage, the nodes are initially pooled into subgraph representations, resulting in $n\times O(n^2) \times n = O(n^4)$, where the first $n$ denotes $n$ subgraphs, and the last $n$ denotes $n$ nodes within each subgraph. Subsequently, all subgraph representations are further pooled into a graph-level representation, culminating in a complexity of $O(n^2) \times n=O(n^3)$, where $n$ represents $n$ nodes. Consequently, the overall complexity amounts to $O(n^5)$.

\subsection{Theoretical Expressiveness with Finite Subgraph Radius and Distance Cutoff}
\label{sec: finite size}
While Theorem~\ref{theorem: completeness of GeoNGNN} establishes that infinite subgraph radius and distance cutoff guarantees completeness over all point clouds, this section explores how finite values can also enhance geometric expressiveness compared to DisGNN. We demonstrate this through an example.

Take the left pair of point clouds in Figure~\ref{fig: ce} for example, and note that this pair of point clouds cannot be distinguished by DisGNN even when taking fully-connected point cloud as input. Assuming that the subgraph radius and distance cutoff for each node are finite, only covering nodes' one-hop neighbors, as illustrated in Figure~\ref{fig: VD_separation}. The representation of the green node produced by inner DisGNN will differ between the two point clouds, due to the presence of long-distance information on the left point cloud, whereas it is absent on the right point cloud. Therefore, GeoNGNN with such a small subgraph radius and distance cutoff can easily distinguish the two point clouds.
\begin{figure}[h]
    \centering
    \includegraphics[width=0.4\columnwidth]{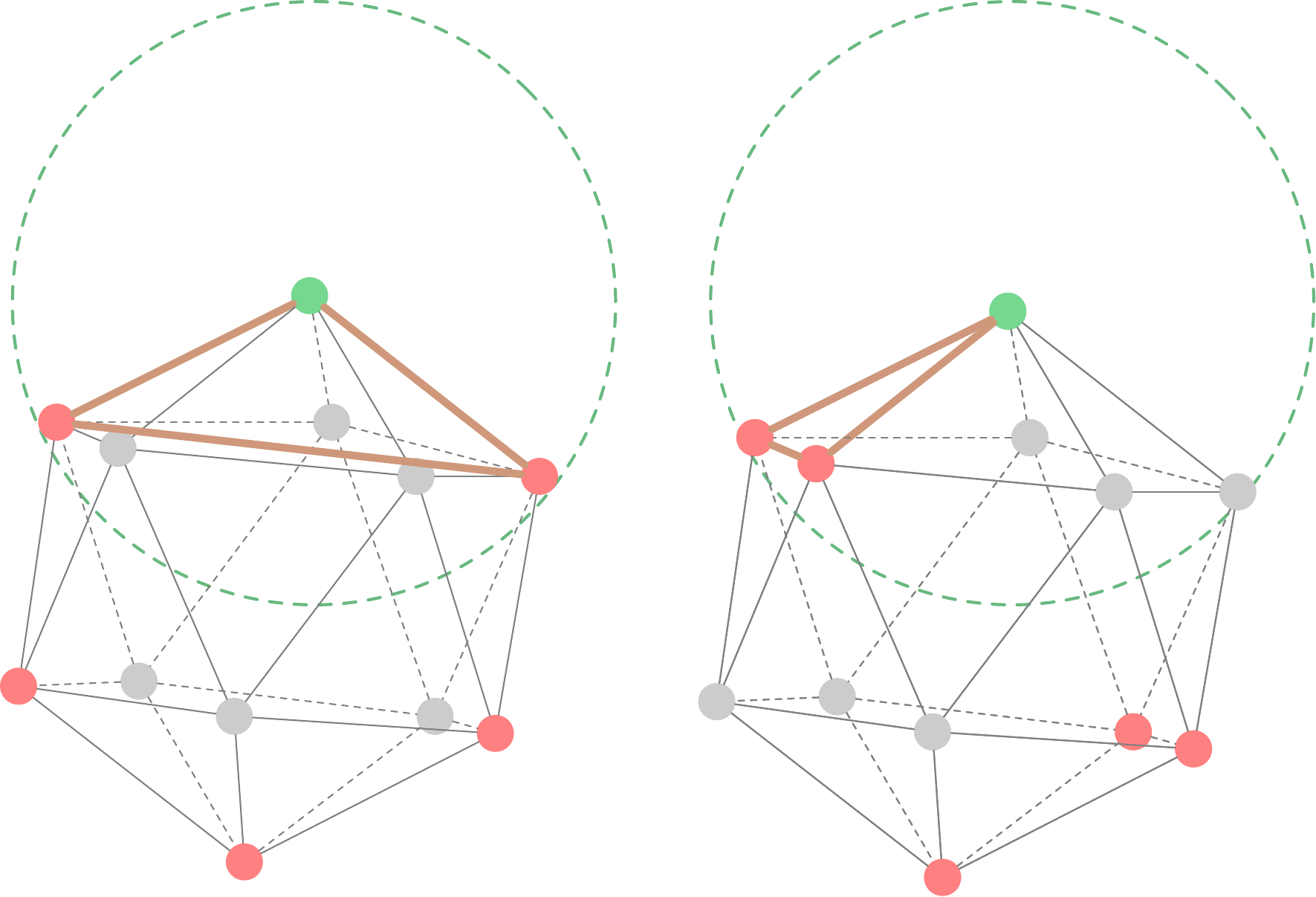}
\caption{
    An example that illustrates the separation power of finite-subgraph-size GeoNGNN. The green node represents the central node, while the green \textit{sphere} depicts the subgraph environment surrounding the central node. The brown line signifies the distance information that will be aggregated during the message passing.
}
    \label{fig: VD_separation}
\end{figure}

\section{Geometric Subgraph Graph Neural Networks}
\label{sec: geometric subgraph gnns}
In this section, we provide self-contained definitions of general geometric subgraph GNNs as delineated in Definition~\ref{def: canonical geometric subgraph GNN} as well as definitions of the geometric counterparts of well-established traditional subgraph GNNs, all of which are proven to be E(3)-complete under specific conditions.

\subsection{Basic Definitions}

We first notice that the main definition of general geometric subgraph GNN is based on the general definitions of subgraph GNNs in traditional graph settings from~\citet{zhang2023complete}, where the input graphs are unweighted graphs. The difference is that the geometric subgraph GNN is applied to point clouds, treating it as distance graphs with distance cutoff $r_{\text{cutoff}}$, and applies geometric aggregations. 

We separately introduce the main components in Definition~\ref{def: canonical geometric subgraph GNN}. From now on, we follow the notation style of~\citep{zhang2023complete}, but for geometric settings. 

We denote the original distance graph as $G = (\mc V_G, \mc E_G)$, where $\mc V_G = [n]$ contains all nodes in the distance graph, and $\mc E_G = \{ (u, v) \mid u, v \in \mc V_G, d_{uv} \leq r_{\text{cutoff}} \}$ consists of all weighted edges with weight (distance) less than $r_{\text{cutoff}}$, by definition of the distance graph. We denote the subgraph of node $u$ as $G^u = (\mc V_{G^u}, \mc E_{G^u})$. We use $\mc N_{G^v}(u)$ to denote the set of neighbors of $u$ in $v$'s subgraph, i.e., $
\mc N_{G^v}(u) = \{ i  \mid i \in \mc V_{G^v}, (i, u) \in \mc E_{G^v}  \} $. We use $h_{uv}^{(l)}$ to denote the node embedding for node $v$ in node $u$'s subgraph at the $l$-th layer of the subgraph GNN.

\paragraph{Subgraph generation} As defined in~\Cref{def: canonical geometric subgraph GNN},
general geometric subgraph GNNs adopt node marking with $r_{\text {sub}}$-size ego subgraph as the subgraph generation policy. This essentially means that node $u$'s subgraph, $G^u = (\mc V_{G^u}, \mc E_{G^u} )$, contains all the nodes and edges within Euclidean distance $r_{\text {sub}}$, i.e., $\mc V_{G^u} = \{i \mid i \in \mc V_{G}, d_{iu} \leq  r_{\text {sub}} \}$, $\mc E_{G^u} = \{(i, j) \mid (i,j) \in \mc E_G, i, j  \in V_{G^u}  \}$. And node $v$ in $u$'s subgraph's initial representation is $h^{(0)}_{uv} = f^{\text{init}}(\mathds 1_{u=v}, x_u, x_v)$, where $\mathds 1_{u=v}$ represents indicator function that equals $1$ when $u=v$ and $0$ otherwise, and $x_u, x_v$ are the potential node features.

\paragraph{Geometric aggregation schemes} As defined in Definition~\ref{def: canonical geometric subgraph GNN}, geometric subgraph GNN adopts \textit{general geometric subgraph GNN layers} as the basic aggregation layer, which is formally defined as follows:

\begin{definition}
        \label{def:layer}
            A general geometric subgraph GNN layer has the form
            \begin{equation*}
                h^{(l+1)}_{uv} = f^{(l+1)}(h^{(l)}_{uv}, \op_1(u, v, h^{(l)}, G),\cdots ,\op_r(u, v, h^{(l)}, G)),
            \end{equation*}
            where $f^{(l+1)}$ is an arbitrary parameterized continuous function, and each atomic operation $\op_i(u,v,h, G)$ can take any of the following expressions:
            \begin{itemize}[topsep=0pt]
            \raggedright
            \setlength{\itemsep}{0pt}
                \item Single-point: $h_{vu}$, $h_{uu}$, or $h_{vv}$;
                \item Global: $\msl h_{uw} \mid  w\in\gV_G \msr$ or $\msl h_{wv} \mid  w\in\gV_G \msr$;
                \item Local: $\msl (h_{uw}, d_{vw}) \mid w\in\gN_{G^u}(v) \msr$ or $\msl (h_{wv}, d_{uw}) \mid w\in\gN_{G^v}(u) \msr$
            \end{itemize}
            Notice that $h_{uv}^{(l)}$ is always in the input to ensure that geometric subgraph GNN can always \textit{refine} the node feature partition. These operations are denoted as $\agg^\mathsf{P}_{\mathsf{uu}}, \agg^\mathsf{P}_{\mathsf{vv}}, \agg^\mathsf{P}_{\mathsf{vu}}, \agg^\mathsf{G}_{\mathsf{u}}, \agg^\mathsf{G}_{\mathsf{v}}, \textcolor{blue}{\gagg^\mathsf{L}_{\mathsf{u}}}, \textcolor{blue}{\gagg^\mathsf{L}_{\mathsf{v}}}$, respectively. Here, we color the local operations with \textcolor{blue}{blue} and name it with an additional ``Geo'' prefix since all the aggregation schemes do not incorporate geometric information and are the same as those in~\citet{zhang2023complete}, except for the local operations, which additionally incorporates distance information.
    \end{definition}

\paragraph{Pooling layer} After $L$ geometric aggregation layers, geometric subgraph GNN outputs a graph-level representation $f(G)$ through a two-level pooling of the collected features $\ldblbrace h^{(L)}_{uv}:u,v\in\gV_G\rdblbrace$ the same as~\citet{zhang2023complete} do. Specifically, there are two approaches, named \textit{VS} (vertex-subgraph pooling) and \textit{SV} (subgraph-vertex pooling). The first approach first pools all node features in each subgraph $G^u$ to obtain the subgraph representation, i.e., $h_u = f^\mathsf{S}\left(\msl h^{(L)}_{uv}\mid v \in \mc V_{G}  \msr \right)$, and then pools all subgraph representations to obtain the final output $h_G=f^\mathsf{G}(\msl h_u \mid u \in \mc V_G  \msr )$. Here, $f^\mathsf{S}$ and $f^\mathsf{G}$ can be any parameterized function. The second approach first generates node representations $h_v = f^\mathsf{V}\left(\msl h^{(L)}_{uv}\mid u \in \mc V_G  \msr \right)$, and then pools all these node representations to obtain the graph representation, i.e., $h_G=f^\mathsf{G}(\msl h_v \mid v\in \mc V_G  \msr)$.

We denote a general geometric subgraph GNN with $L$ layers, each with aggregation scheme $\gA = \mc B \cup \agg^\mathsf{P}_{\mathsf{uv}}$, and with parameters $r_{\text {sub}},  r_{\text {cutoff}}$ as $\ggA(\gA,\pool, L, r_{\text {sub}},  r_{\text {cutoff}})$, where $\pool\in\{\mathsf{VS},\mathsf{SV}\}$, and
\begin{equation*}
\setlength{\abovedisplayskip}{3pt}
\setlength{\belowdisplayskip}{3pt}
    \mc B \subset\{\agg^\mathsf{P}_{\mathsf{uu}}, \agg^\mathsf{P}_{\mathsf{vv}}, \agg^\mathsf{P}_{\mathsf{vu}}, \agg^\mathsf{G}_{\mathsf{u}}, \agg^\mathsf{G}_{\mathsf{v}}, \textcolor{blue}{\gagg^\mathsf{L}_{\mathsf{u}}}, \textcolor{blue}{\gagg^\mathsf{L}_{\mathsf{v}}}\}.
\end{equation*}

It is obvious that GeoNGNN without outer GNN, is a specific kind of general subgraph GNN, which can be denoted as $\ggA(\{ \agg^\mathsf{P}_{\mathsf{uv}},\gagg^\mathsf{L}_{\mathsf{u}}\}, VS, N_{\text{inner}}, r_{\text {sub}},  r_{\text {cutoff}})$.

\subsection{Geometric Counterparts of Well-Known Traditional Subgraph GNNs}
\label{sec: Geometric Counterparts of Well-Known Traditional Subgraph GNNs}

In the main body, we have extended one of the simplest subgraph GNNs, NGNN~\citep{zhang2021nested}, to geometric scenarios, and shown that GeoNGNN is already E(3)-complete. Similar efforts can be made to other well known subgraph GNNs, including DS-GNN~\citep{bevilacqua2021equivariant}, DSS-GNN~\citep{bevilacqua2021equivariant}, GNN-AK~\citep{zhao2021stars}, OSAN~\citep{qian2022ordered} and so on. We now give a general definition of the geometric counterparts of these subgraph GNNs, and show that they, even though does not exactly match the general forms in Definition~\ref{def: canonical geometric subgraph GNN}, are also E(3)-complete under exactly the same conditions as Theorem~\ref{theorem: completeness of general geometric subgraph GNNs}.

Similar to~\citet{zhang2023complete}, we consider the following well-known subgraph GNNs: IDGNN~\citep{you2021identity}, DS-GNN~\citep{bevilacqua2021equivariant}, OSAN~\citep{qian2022ordered}, GNN-AK~\citep{zhao2021stars}, DSS-GNN (ESAN)~\citep{bevilacqua2021equivariant}, GNN-AK-ctx~\citep{zhao2021stars}, SUN~\citep{frasca2022understanding}. Generally, all these traditional subgraph GNNs involve local aggregations. In a manner akin to the modification detailed in Definition~\ref{def: canonical geometric subgraph GNN}, to get the geometric counterpart of these models, we replace the node representations aggregated by local operations with \textit{node representations that integrate the distances between the aggregated node and the base node} (i.e., the node performing the aggregation). 

Notably, GNN-AK, GNN-AK-ctx, IDGNN, OSAN, DS-GNN, and DSS-GNN all employ base GNNs, which can operate on graphs, subgraphs, or a combination of subgraphs (as in DSS-GNN). In these cases, the proposed adjustment simply involves replacing these base GNNs with DisGNN. In certain architectures like SUN, they may have more complicated operations, such as aggregating $\msl h_{ww'} \mid w'\in \mc N(v), w\in \mc V_G\msr$ to update $h_{uv}$, which involve a combination of global ($w$) and local ($w'$) aggregation. Consistent with the modification rule, we selectively incorporate distance information solely for these local operations. Therefore, the featured geometric counterpart aggregates $\msl (h_{w, w'}, d_{w'v}) \mid w'\in \mc N(v), w\in \mc V_G\msr$ instead. One can also incorporate distance information into those global operations. However, they are unnecessary in terms of boosting expressiveness, as current modifications are enough for them to achieve completeness, as shown in the following. We name all the geometric counterparts with a \textcolor{blue}{\textit{Geo}} prefix, such as \textcolor{blue}{Geo}Sun.

\begin{theorem}(Completeness for geometric counterparts of well-known subgraph GNNs) Under the node marking with $r_{\text{sub}}$-size ego subgraph policy, when the conditions described in Theorem~\ref{theorem: completeness of general geometric subgraph GNNs} are met, GeoIDGNN, GeoDS-GNN, GeoOSAN, GeoGNN-AK, GeoDSS-GNN (ESAN), GeoGNN-AK-ctx, GeoSUN are all E(3)-complete.
\label{theorem: Completeness for geometric counterparts of well-known subgraph GNNs}
\end{theorem}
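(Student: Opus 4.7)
The plan is to reduce each named model to GeoNGNN by exhibiting an implementation, and then invoke \cref{theorem: completeness of GeoNGNN}. First I would partition the seven models into two groups according to how tightly they fit \cref{def: canonical geometric subgraph GNN}. Group (a) consists of those whose aggregation scheme is literally a subset $\mathcal B\cup\agg^{\mathsf P}_{\mathsf{uv}}$ of the atomic operations allowed by the general definition and contains at least one local operation, together with either VS or SV pooling; this includes GeoIDGNN, GeoDS-GNN, and GeoOSAN (IDGNN just marks the root of each ego subgraph and runs an MPNN, DS-GNN runs independent inner MPNNs per marked subgraph, and OSAN is an ordered version of the same template). For these, completeness follows at once from \cref{theorem: completeness of general geometric subgraph GNNs} with no further work, provided the conditions of \cref{theorem: completeness of GeoNGNN} (infinite $r_{\text{sub}}$ and $r_{\text{cutoff}}$, enough layers) are met.

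Group (b) consists of GeoGNN-AK, GeoGNN-AK-ctx, GeoDSS-GNN (ESAN), and GeoSUN, which include additional operations (context readouts, Siamese cross-subgraph sharing, mixed global/local aggregations) that are not literally among $\{\agg^{\mathsf P}_{\mathsf{uu}},\agg^{\mathsf P}_{\mathsf{vv}},\agg^{\mathsf P}_{\mathsf{vu}},\agg^{\mathsf G}_{\mathsf u},\agg^{\mathsf G}_{\mathsf v},\gagg^{\mathsf L}_{\mathsf u},\gagg^{\mathsf L}_{\mathsf v}\}$. For each of these I would build a direct simulation of GeoNGNN's inner DisGNN by degenerating the extra operations inside the learnable aggregators. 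Concretely: in GeoGNN-AK the base MPNN, once distance-augmented, is literally GeoNGNN's inner DisGNN applied to each marked subgraph, so GeoGNN-AK already contains GeoNGNN as a special case. GeoGNN-AK-ctx merely concatenates a context vector (a pooled version of other subgraphs' embeddings); since the update function is learnable and universal, it can ignore the context coordinate and recover GeoGNN-AK. GeoDSS-GNN has two parallel channels (a subgraph-local channel and a Siamese channel aggregating across subgraphs); by zeroing the Siamese channel in the learnable combiner, the layer reduces to the DS channel, which in turn implements GeoGNN-AK. Finally, for GeoSUN the distance-aware combined aggregation $\msl (h_{ww'},d_{w'v})\mid w'\in\mathcal N(v),\,w\in\mathcal V_G\msr$ and its symmetric counterpart can be restricted to the diagonal terms $w=v$ and $w=u$; using injectivity of the learned functions, one can select out the subsets $\msl(h_{vw'},d_{w'v})\mid w'\in\mathcal N(v)\msr$ and $\msl(h_{uw'},d_{w'u})\mid w'\in\mathcal N(u)\msr$, which are precisely $\gagg^{\mathsf L}_{\mathsf v}$ and $\gagg^{\mathsf L}_{\mathsf u}$ on the marked subgraph $G^u$. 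Hence GeoSUN implements a layer that falls into Definition of the general geometric subgraph GNN with a local operation, and \cref{theorem: completeness of general geometric subgraph GNNs} applies.

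Combining the two groups, each listed model implements GeoNGNN (with $N_{\text{in}}\geq 5$) up to a constant number of extra layers $C$ independent of $n$, so E(3)-completeness transfers from \cref{theorem: completeness of GeoNGNN}. The main obstacle is the GeoSUN step: isolating the purely local, distance-aware contribution from a multiset that mixes global and local indices requires that the inner function can distinguish the diagonal summands $w=v$ (respectively $w=u$) from the remaining ones. Under the standing injectivity assumption on intermediate functions this is straightforward, because $d_{vv}=0$ (respectively $d_{uu}=0$) uniquely tags the diagonal summand in the augmented tuples, just as in the analogous argument used earlier to align DimeNet with GeoNGNN; I would therefore reuse exactly that filtering trick. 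The remaining bookkeeping—matching pooling schemes via \cref{proposition: geometric symmetry} when a model uses SV instead of VS, and absorbing the $+1$ layer overhead into the constant $C$—is identical to the proof of \cref{theorem: completeness of general geometric subgraph GNNs}.
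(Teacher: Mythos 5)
Your overall strategy coincides with the paper's: split the seven models into those that literally instantiate Definition~\ref{def: canonical geometric subgraph GNN} (GeoIDGNN, GeoDS-GNN, GeoOSAN), which are complete immediately by Theorem~\ref{theorem: completeness of general geometric subgraph GNNs}, and show the remaining four can implement GeoNGNN by having their learnable update functions discard the extra arguments, with Proposition~\ref{proposition: geometric symmetry} and the extra-layer trick handling SV versus VS pooling. Your treatment of GeoGNN-AK, GeoGNN-AK-ctx and GeoDSS-GNN matches the paper's argument. The problem is the step you yourself flag as the main obstacle, the GeoSUN reduction, which as written does not work.

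From the mixed multiset $\msl (h^{(l)}_{ww'}, d_{w'v}) \mid w\in\gV_G,\, w'\in\gN_{G}(v)\msr$ you cannot ``restrict to the diagonal terms $w=v$'' (or $w=u$): the distance tag $d_{w'v}$ depends only on the \emph{second} index $w'$, so the zero-distance filter singles out the elements with $w'=v$, which yields the global multiset $\msl h^{(l)}_{wv}\mid w\in\gV_G\msr$, not the local one $\msl (h^{(l)}_{vw'}, d_{w'v})\mid w'\in\gN_G(v)\msr$ you want. Nothing in $h^{(l)}_{ww'}$ identifies whether its subgraph index $w$ equals the particular node $v$ currently being updated (node marking only flags $w=w'$), and on symmetric point clouds no geometric information can make that identification either; the DimeNet filtering trick you invoke works precisely because there the distance $d_{kj}$ refers to the index being filtered, which is not the case here. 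The gap is easily repaired, and the paper avoids it entirely: GeoSUN, like SUN, passes the purely local distance-aware argument $\msl (h^{(l)}_{uw}, d_{vw})\mid w\in\gN_{G}(v)\msr$, i.e.\ $\gagg^\mathsf{L}_{\mathsf{u}}$, \emph{directly} to its update function (as do all four group-(b) models in the paper's abstraction), so a single layer implements GeoNGNN's inner layer by ignoring every other argument, and the pooling bookkeeping is inherited verbatim from the proof of Theorem~\ref{theorem: completeness of general geometric subgraph GNNs}. You should replace the diagonal-extraction argument with this direct observation.
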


\begin{proof}

\begin{itemize}
\item Among these models, \textcolor{blue}{Geo}IDGNN, \textcolor{blue}{Geo}DS-GNN, \textcolor{blue}{Geo}OSAN fall under the general definition of geometric subgraph GNNs (see Definition~\ref{def: canonical geometric subgraph GNN}), and they all incorporate at least one local operation, therefore they are inherently complete given these conditions according to Theorem~\ref{theorem: completeness of general geometric subgraph GNNs} (See proof in~\Cref{proof: completeness of general geometric subgraph GNNs}).

\item Under node marking with original graph ($r_{\text{sub}} = \infty$) policy, the other models can all be well abstracted and summarized, which have been done by~\citet{zhang2023complete}. We list here for convenience (non-blue parts represent the geometric modification):

\textcolor{blue}{Geo}GNN-AK~\citep{zhao2021stars}.

\begin{align*}
    h^{(l+1)}_{uv} =\left\{\begin{array}{ll}
        \begin{aligned}
            f^{(l)}(&h^{(l)}_{uv},\\
                &h^{(l)}_{vv},\\
                &\ldblbrace \textcolor{blue}{(}h^{(l)}_{uw}\textcolor{blue}{, d_{vw})}:w\in\gN_{G}(v)\rdblbrace)
        \end{aligned}
         & \text{if }u\neq v, \\
        \begin{aligned}
            f^{(l)}(&h^{(l)}_{vv},\\
                &\ldblbrace \textcolor{blue}{(}h^{(l)}_{uw}\textcolor{blue}{, d_{vw})}:w\in\gN_{G}(v)\rdblbrace,\\
                &\ldblbrace h^{(l)}_{uw}:w\in\gV_G\rdblbrace)
        \end{aligned}
         & \text{if }u= v.
    \end{array}\right. 
\end{align*}

It adopts VS pooling.

\textcolor{blue}{Geo}$\mathsf{GNN}\text{-}\mathsf{AK}\text{-}\mathsf{ctx}$~\citep{zhao2021stars}. The GNN aggregation scheme can be written as
\begin{align*}
    h^{(l+1)}_{uv} =\left\{\begin{array}{ll}
        \begin{aligned}
            f^{(l)}(&h^{(l)}_{uv},\\
                &h^{(l)}_{vv},\\
                &\ldblbrace \textcolor{blue}{(}h^{(l)}_{uw}\textcolor{blue}{, d_{vw})}:w\in\gN_{G}(v)\rdblbrace)
        \end{aligned}
         & \text{if }u\neq v, \\
        \begin{aligned}
            f^{(l)}(&h^{(l)}_{vv},\\
                &\ldblbrace \textcolor{blue}{(}h^{(l)}_{uw}\textcolor{blue}{, d_{vw})}:w\in\gN_{G}(v)\rdblbrace,\\
                &\ldblbrace h^{(l)}_{uw}:w\in\gV_G\rdblbrace,\\
                &\ldblbrace h^{(l)}_{wv}:w\in\gV_G\rdblbrace)
        \end{aligned}
         & \text{if }u= v.
    \end{array}\right. 
\end{align*}

It adopts VS pooling.

\textcolor{blue}{Geo}DSS-GNN~\citep{bevilacqua2021equivariant}. The aggregation scheme of DSS-GNN can be written as
\begin{align*}
    h^{(l+1)}_{uv} = f^{(l)}(&h^{(l)}_{uv},\\
    &\ldblbrace \textcolor{blue}{(}h^{(l)}_{uw}\textcolor{blue}{, d_{vw})}:w\in\gN_{G}(v)\rdblbrace,\\
    &\ldblbrace h^{(l)}_{wv}:w\in\gV_G\rdblbrace,\\
    &\ldblbrace \textcolor{blue}{(}h^{(l)}_{w,w'}\textcolor{blue}{, d_{w'v})}:w\in\gV_G,w'\in\gN_{G}(v)\rdblbrace).
\end{align*}

It adopts VS pooling.

\textcolor{blue}{Geo}SUN~\citep{frasca2022understanding}. The WL aggregation scheme can be written as
\begin{align*}
    h^{(l+1)}_{uv} =f^{(l)}(&h^{(l)}_{uv},h^{(l)}_{uu},h^{(l)}_{vv},\\
                &\ldblbrace \textcolor{blue}{(}h^{(l)}_{uw}\textcolor{blue}{, d_{vw})}:w\in\gN_{G}(v)\rdblbrace,\\
                &\ldblbrace h^{(l)}_{uw}:w\in\gV_G\rdblbrace,\\
                &\ldblbrace h^{(l)}_{wv}:w\in\gV_G\rdblbrace,\\
                &\ldblbrace \textcolor{blue}{(}h^{(l)}_{w,w'}\textcolor{blue}{, d_{w'v})}:w\in\gV_G,w'\in\gN_{G}(v)\rdblbrace).
\end{align*}
It can adopt VS or SV pooling.

\end{itemize}

    Regarding these models, it is observed that they all exhibit the local operation $\gagg^L_u$, thus enabling them to implement the GeoNGNN inner layer with a single aggregation layer, as demonstrated in the proof of Theorem~\ref{theorem: completeness of general geometric subgraph GNNs}. Moreover, irrespective of whether they employ SV or VS pooling, they are all capable of implementing VS pooling, which is adopted by GeoNGNN, as evidenced in the proof of Theorem~\ref{theorem: completeness of general geometric subgraph GNNs}. Hence, under the given conditions, these models can also implement GeoNGNN under the given conditions and consequently achieve completeness, as GeoNGNN itself is complete.

\end{proof}

\section{Extended Evaluation of GeoNGNN}
\label{sec: supplementary experiment information}

In this section, we further evaluate GeoNGNN on various datasets, including rMD17~\citep{MD17}, which assesses the model’s ability to predict high-precision energy and forces based on molecular conformations; MD22~\citep{chmiela2023accurate}, which requires models to efficiently handle large point clouds; 3BPA~\citep{kovacs2021linear}, which assesses the model's ability to generalize well on out-of-domain datasets; and QM9~\citep{qm9-1, qm9-2}, which contains diverse properties for evaluating the models' universal learning capabilities. Please refer to~\Cref{Sec: Detailed Experimental Settings} for further model architecture and experimental settings.

Since GeoNGNN is built upon DisGNN, a direct comparison is made to DisGNN, which we implement as an enhanced SchNet~\citep{schnet} with additional residual layers. Additionally, we include two other complete models: DimeNet~\citep{dimenet} and GemNet~\citep{gemnet}, as well as the recent advanced invariant model 2-F-DisGNN~\citep{li2024distance}. Furthermore, we incorporate equivariant models that leverage vectors or higher-order tensors, including PaiNN~\citep{PaiNN}, NequIP~\citep{NequIP}, and MACE~\citep{batatia2022mace}.

Overall, GeoNGNN demonstrates good experimental results, and in some cases, performs comparably to advanced models that incorporate equivariant high-order tensors, such as Allegro~\citep{musaelian2023learning}, MACE~\citep{batatia2022mace}, and PaiNN~\citep{PaiNN}, on specific tasks. This validates the effectiveness of capturing \textit{subgraph representations} in molecular-relevant tasks, which may provide even better inductive bias or generalization ability in this scenario than the design like directional message passing in DimeNet~\citep{dimenet}, which is also complete in our proof yet did not catch up with GeoNGNN's downstream performance.

However, it does not consistently achieve the best performance, which may be attributed to its simple design, relying solely on distance features and the weak base GNN it employs. Therefore, future improvements could focus on integrating vectors~\citep{PaiNN, TorchMD} or higher-order tensors~\citep{TFN} into GeoNGNN’s base model, as these have been shown to enhance model generalization~\citep{du2024new,musaelian2023learning}.

Nevertheless, GeoNGNN shows promise, leaving open opportunities for future research by incorporating more powerful base architectures within its nested framework to improve empirical performance, or developing theoretically more powerful models analogous to its underlying principles.

\subsection{Molecule Structure Learning: revised MD17}
\label{sec: md17}
We first evaluate on the rMD17 dataset~\citep{MD17}, which poses a substantial challenge to models' geometric learning ability. This dataset encompasses trajectories from molecular dynamics simulations of several small molecules, and the objective is to predict the energy and atomic forces of a given molecule conformation, which contains all atoms' positions and atomic numbers. 

\begin{table*}[t]
\centering
\caption{MAE loss on revised MD17. Energy (E) is in kcal/mol, and force (F) is in kcal/mol/\r{A}. The best and second-best results are shown in \textbf{bold} and {\ul underline}. Results for GeoNGNN are \textcolor[HTML]{ADCA91}{highlighted in green} if they outperform their base model, DisGNN, while models are \textcolor{gray!70!white}{grayed} if they are our characterized powerful models. The average rank is computed as the mean rank across all rows.}
\label{tab: MD17}
\resizebox{\textwidth}{!}{

\begin{tabular}{lc|ccc|ccccc}

\hline

\multicolumn{1}{c}{}             &        & \multicolumn{3}{c|}{Equivariant models}     & \multicolumn{5}{c}{Invariant models}                                                          \\ \hline
Molecule                         & Target & PaiNN  & NequIP          & MACE            & DisGNN & 2F-Dis. & \cellcolor{gray!30!white}DimeNet         & \cellcolor{gray!30!white}GemNet & \cellcolor{gray!30!white}GeoNGNN                                 \\\hline
                                 & E      & 0.1591 & 0.0530          & 0.0507          & 0.1565  & \textbf{0.0465} & 0.1321 & -         & \cellcolor[HTML]{E2EFDA}{\ul 0.0502}    \\
\multirow{-2}{*}{Aspirin}        & F      & 0.3713 & 0.1891          & {\ul 0.1522}    & 0.4855  & \textbf{0.1515} & 0.3549 & 0.2191         & \cellcolor[HTML]{E2EFDA}0.1720          \\
                                 & E      & -      & \textbf{0.0161} & {\ul 0.0277}    & 0.2312  & 0.0315 & 0.1063 & -         & \cellcolor[HTML]{E2EFDA}0.0315          \\
\multirow{-2}{*}{Azobenzene}     & F      & -      & \textbf{0.0669} & {\ul 0.0692}    & 0.5050  & 0.1121 & 0.2174 & -         & \cellcolor[HTML]{E2EFDA}0.1157          \\
                                 & E      & -      & \textbf{0.0009} & 0.0092          & 0.0308  & {\ul 0.0013} & 0.0061 & -         & \cellcolor[HTML]{E2EFDA}0.0014          \\
\multirow{-2}{*}{Benzene}        & F      & -      & 0.0069          & {\ul 0.0069}    & 0.2209  & 0.0085 & 0.0170 & 0.0115         & \cellcolor[HTML]{E2EFDA}\textbf{0.0065} \\
                                 & E      & 0.0623 & 0.0092          & 0.0092          & 0.0117  & \textbf{0.0065} & 0.0345 & -         & \cellcolor[HTML]{E2EFDA}{\ul 0.0074}    \\
\multirow{-2}{*}{Ethanol}        & F      & 0.2306 & 0.0646          & 0.0484          & 0.0774  & \textbf{0.0379} & 0.1859 & 0.0830         & \cellcolor[HTML]{E2EFDA}{\ul 0.0482}    \\
                                 & E      & 0.0899 & 0.0184          & 0.0184          & 0.2814  & \textbf{0.0129} & 0.0507 & -         & \cellcolor[HTML]{E2EFDA}{\ul 0.0143}    \\
\multirow{-2}{*}{Malonaldehyde}  & F      & 0.3182 & 0.1176          & 0.0945          & 0.1661  & \textbf{0.0782} & 0.2743 & 0.1522         & \cellcolor[HTML]{E2EFDA}{\ul 0.0875}    \\
                                 & E      & 0.1176 & 0.0208          & 0.0115          & 0.1269  & {\ul 0.0103} & 0.0445 & -         & \cellcolor[HTML]{E2EFDA}\textbf{0.0069} \\
\multirow{-2}{*}{Naphthalene}    & F      & 0.0830 & \textbf{0.0300} & {\ul 0.0369}    & 0.4144  & 0.0478 & 0.1105 & 0.0438         & \cellcolor[HTML]{E2EFDA}0.0377          \\
                                 & E      & -      & 0.0323          & \textbf{0.0300} & 0.1534  & {\ul 0.0310} & 0.1176 & -         & \cellcolor[HTML]{E2EFDA}0.0352          \\
\multirow{-2}{*}{Paracetamol}    & F      & -      & 0.1361          & \textbf{0.1107} & 0.4698  & {\ul 0.1178} & 0.3028 & -         & \cellcolor[HTML]{E2EFDA}0.1385          \\
                                 & E      & 0.1130 & \textbf{0.0161} & 0.0208          & 0.0791  & 0.0174 & 0.0590 & -         & \cellcolor[HTML]{E2EFDA}{\ul 0.0168}    \\
\multirow{-2}{*}{Salicylic acid} & F      & 0.2099 & 0.0922          & \textbf{0.0715} & 0.3481  & {\ul 0.0860} & 0.2428 & 0.1222         & \cellcolor[HTML]{E2EFDA}0.0881          \\
                                 & E      & 0.0969 & 0.0069          & 0.0115          & 0.0918  & \textbf{0.0051} & 0.0228 & -         & \cellcolor[HTML]{E2EFDA}{\ul 0.0069}    \\
\multirow{-2}{*}{Toluene}        & F      & 0.1015 & 0.0369          & {\ul 0.0346}    & 0.3070  & \textbf{0.0284} & 0.1085 & 0.0507         & \cellcolor[HTML]{E2EFDA}0.0375          \\
                                 & E      & 0.1038 & \textbf{0.0092} & 0.0115          & 0.0363  & 0.0139 & 0.0338 & -         & \cellcolor[HTML]{E2EFDA}{\ul 0.0096}    \\
\multirow{-2}{*}{Uracil}         & F      & 0.1407 & 0.0715          & \textbf{0.0484} & 0.1973  & 0.0828 & 0.1634 & 0.0876         & \cellcolor[HTML]{E2EFDA}{\ul 0.0577}    \\\hline
\multicolumn{2}{c}{AVG RANK}              & 6.71   & 2.80            & 2.50            & 6.55    & 2.25   & 5.70            & 5.00           & 2.55                                   

\\\hline
\end{tabular}
}
\end{table*}

The comprehensive results and comparison is shown in Table~\ref{tab: MD17}. There are several important observations: 1) GeoNGNN exhibits substantial improvements over DisGNN, demonstrating the effectiveness of higher expressiveness and the significance of capturing subgraph representations on practical tasks. 2) Though equally being E(3)-complete, with practical implements, DimeNet and GemNet show worse performance than GeoNGNN. This finding suggests that different inductive biases of different model designs can impact significantly on \textit{generalization ability}, while learning more refined \textit{subgraph representations} like GeoNGNN may be superior to directional message passing in certain molecule-relevant tasks.
3) GeoNGNN shows competitive performance in comparison to other well-designed advanced models. However, it does not consistently achieve the best performance.

\subsection{Scaling to Large Graphs: MD22}
\label{exper: md22}

Geometric models face challenges in scaling to larger point clouds. Here, we evaluate GeoNGNN on MD22~\citep{chmiela2023accurate}, a dataset of molecules with up to 370 atoms. To enhance efficiency, we apply GeoNGNN with a \textit{finite} subgraph cutoff \(r_{\text{sub}}\) (5 \r{A}) and message passing cutoff \(r_{\text{cutoff}}\) (5 \r{A}), and assess its practical performance.

\begin{table}[!h]
\centering
\caption{MAE loss on MD22. Energy (E) in meV/atom, force (F) in meV/\r{A}. The best and the second best results are shown in \textbf{bold} and {\ul underline}. We \textcolor[HTML]{ADCA91}{color} the cell if GeoNGNN outperforms DisGNN. The average rank is the average of the rank of each row.}
\label{tab: Md22}
\resizebox{\columnwidth}{!}{
\begin{tabular}{lcc|c|cccccc|cc}
                             \hline
                             & \multicolumn{1}{l}{} & \multicolumn{1}{l|}{} & \multicolumn{1}{l|}{} & \multicolumn{6}{c|}{Equivariant models}                                           & \multicolumn{2}{c}{Invariant models}            \\
                             \hline
\multicolumn{1}{c}{Mol}      & \# atoms             & Target               & sGDML                & PaiNN & TorchMD-NET & Allegro   & Equiformer   & MACE           & VisNet-LSRM    & DisGNN & GeoNGNN                                \\
                             \hline
Ac-Ala3-NHMe                 & 42                   & E                    & 0.4                  & 0.121 & 0.116       & 0.105     & 0.106        & \textbf{0.064} & {\ul 0.070}    & 0.153  & \cellcolor[HTML]{E2EFDA}0.093          \\
                             &                      & F                    & 34                   & 10.0  & 8.1         & 4.6       & 3.9          & {\ul 3.8}      & 3.9            & 9.2    & \cellcolor[HTML]{E2EFDA}\textbf{3.6}   \\
Docosahexaenoic acid         & 56                   & E                    & 1                    & 0.089 & 0.093       & 0.089     & 0.214        & 0.102          & \textbf{0.070} & 0.281  & \cellcolor[HTML]{E2EFDA}{\ul 0.072}    \\
                             &                      & F                    & 33                   & 5.9   & 5.2         & 3.2       & \textbf{2.5} & 2.8            & {\ul 2.6}      & 11.3   & \cellcolor[HTML]{E2EFDA}\textbf{2.5}   \\
Stachyose                    & 87                   & E                    & 2                    & 0.076 & 0.069       & 0.124     & 0.078        & 0.062          & \textbf{0.050} & 0.077  & \cellcolor[HTML]{E2EFDA}{\ul 0.057}    \\
                             &                      & F                    & 29                   & 10.1  & 8.3         & 4.2       & {\ul 3.0}    & 3.8            & 3.3            & 4.1    & \cellcolor[HTML]{E2EFDA}\textbf{2.4}   \\
AT-AT                        & 60                   & E                    & 0.52                 & 0.121 & 0.139       & 0.103     & 0.109        & 0.079          & \textbf{0.060} & 0.109  & \cellcolor[HTML]{E2EFDA}{\ul 0.078}    \\
                             &                      & F                    & 30                   & 10.3  & 8.8         & {\ul 4.1} & 4.3          & 4.3            & \textbf{3.4}   & 8.221  & \cellcolor[HTML]{E2EFDA}5.0            \\
AT-AT-CG-CG                  & 118                  & E                    & 0.52                 & 0.097 & 0.193       & 0.145     & {\ul 0.055}  & 0.058          & \textbf{0.040} & 0.098  & \cellcolor[HTML]{E2EFDA}0.081          \\
                             &                      & F                    & 31                   & 16.0  & 20.4        & 5.6       & 5.4          & {\ul 5}        & \textbf{4.6}   & 9.8    & \cellcolor[HTML]{E2EFDA}6.4            \\
Buckyball catcher            & 148                  & E                    & 0.34                 & -     & -           & -         & -            & {\ul 0.141}    & -              & 0.157  & \cellcolor[HTML]{E2EFDA}\textbf{0.112} \\
                             &                      & F                    & 29                   & -     & -           & -         & -            & \textbf{3.7}   & -              & 17.6   & \cellcolor[HTML]{E2EFDA}{\ul 4.3}      \\
Double-walled nanotube       & 370                  & E                    & 0.47                 & -     & -           & -         & -            & \textbf{0.194} & -              & 0.387  & \cellcolor[HTML]{E2EFDA}{\ul 0.219}    \\
                             &                      & F                    & 23                   & -     & -           & -         & -            & {\ul 12}       & -              & 12.3   & \cellcolor[HTML]{E2EFDA}\textbf{10.6}  \\
                             \hline
\multicolumn{3}{c|}{AVG RANK}   & 7.57                 & 6.50  & 6.50        & 4.80      & 3.80         & 2.64           & \textbf{1.80}  & 5.50   & {\ul 2.29}                            \\
                             \hline
\end{tabular}
}
\end{table}

In this table, we further include a recent SOTA model VisNet-LSRM~\citep{li2023long}. As shown in Table~\ref{tab: Md22}, GeoNGNN consistently outperforms DisGNN across all targets. It also delivers \textit{competitive results} compared to other SOTA models, including MACE and VisNet-LSRM, ranking 2nd on average. These results demonstrate the \textit{effectiveness of subgraph enhancement} in practical GeoNGNN with finite configurations.

To explore how subgraph size affects experimental performance, we selected three molecules of different representative scales and observed the effects, as depicted in Figure~\ref{fig: eff of subgraph size}. As anticipated, performance generally improves with increased subgraph size, roughly matching the improved theoretical expressiveness. However, the rule does not always hold: there are cases where increasing the subgraph size leads to degraded performance, which can be attributed to the potential noise and redundant information. It implies that practically, there could exist a conflict between theoretical expressiveness and experimental performance, and striking a balance between them is important.

\begin{figure}[!h]
    \centering
    \begin{subfigure}
        \centering
    \hspace{-0.7cm}
        \includegraphics[width=\columnwidth,height=5.5cm,keepaspectratio]{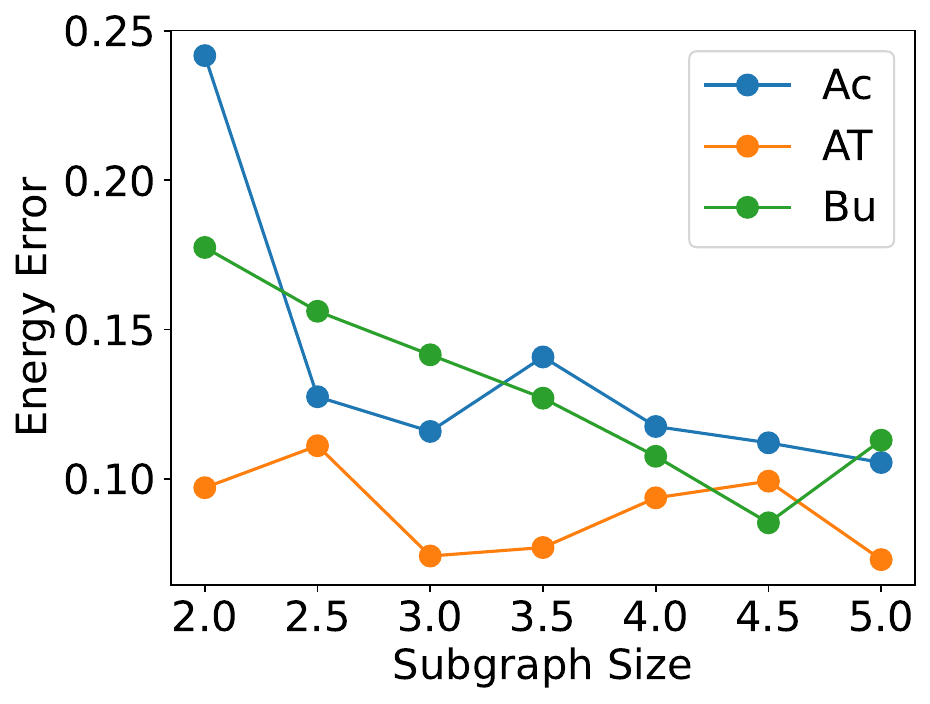}
        \label{fig:first_sub}
    \end{subfigure}
    \begin{subfigure}
        \centering
        \includegraphics[width=\columnwidth,height=5.5cm,keepaspectratio]{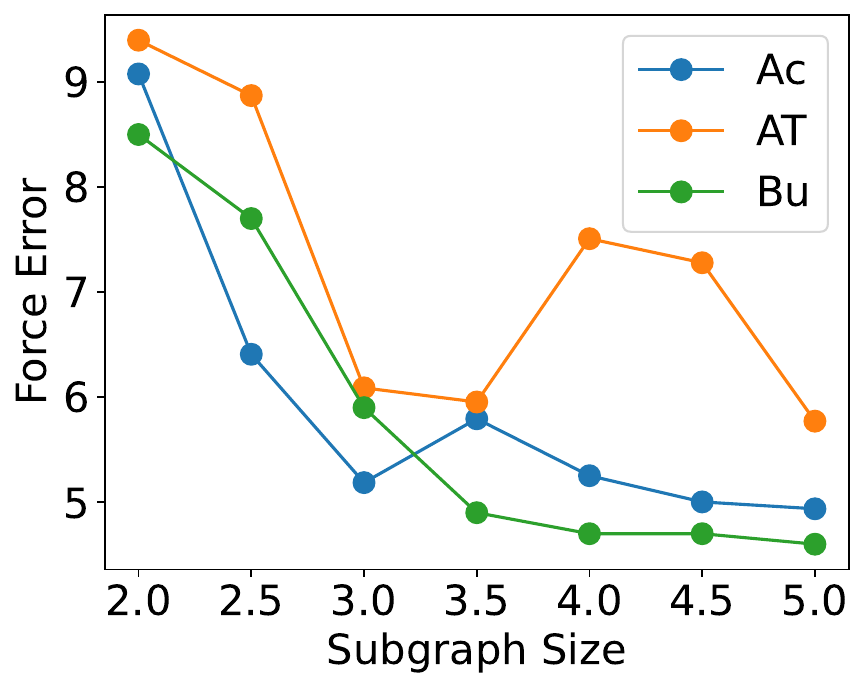}
        \label{fig:second_sub}
    \end{subfigure}
    \caption{Effect of subgraph size for energy and force prediction on Ac-Ala3-NHMe, AT-AT and 
 Buckyball catcher in MD22. Subgraph size in \r{A}, energy (E) in meV/atom, force (F) in meV/\r{A}.}
    \label{fig: eff of subgraph size}
    
    \end{figure}

We further evaluated the inference time and GPU memory consumption of GeoNGNN and several high-performance models. As shown in Table~\ref{tab: eff}, GeoNGNN is more than 2x \textit{faster} than MACE and more than 8x faster than Equiformer, which both utilize high-order equivariant representations and perform tensor products. Additionally, GeoNGNN demonstrates \textit{efficient memory usage}, particularly when compared to Equiformer. This could be attributed to the invariant representation and the simple architecture GeoNGNN adopts.

\begin{table}[!h]
\centering
\caption{Efficiency Analysis of Models. The batch size for all molecules is set to 16, except for Bu and Do where it is 4. Models marked with an asterisk (*) utilize a batch size of 8, except for Bu and Do, which use a batch size of 2. The evaluation is conducted on Nvidia A100.}
\label{tab: eff}
\resizebox{0.7\columnwidth}{!}{
\begin{tabular}{lc|ccc}
\hline
Molecule                   & \# atoms                 & MACE  & Equiformer* & GeoNGNN \\
\hline

\multirow{2}{*}{Ac-Ala3-NHMe}   & \multirow{2}{*}{42}  & 0.166 & 0.305       & \textbf{0.070}   \\
                      &                      & 13400 & 38895       & \textbf{12997}   \\
\multirow{2}{*}{Docosahexaenoic acid}  & \multirow{2}{*}{56}  & 0.209 & 0.397       & \textbf{0.088}   \\
                      &                      & 17875 & 51565       & \textbf{16980}   \\
\multirow{2}{*}{Stachyose}   & \multirow{2}{*}{87}  & 0.388 & OOM         & \textbf{0.161}   \\
                      &                      & 33146 & OOM         & \textbf{32430}   \\
\multirow{2}{*}{AT-AT}   & \multirow{2}{*}{60}  & 0.195 & 0.361       & \textbf{0.083}   \\
                      &                      & 17310 & 50032       & \textbf{15864}   \\
\multirow{2}{*}{AT-AT-CG-CG} & \multirow{2}{*}{118} & 0.437 & OOM         & \textbf{0.175}   \\
                      &                      & 38090 & OOM         & \textbf{34951}   \\

                      \hline
\multirow{2}{*}{Buckyball catcher}   & \multirow{2}{*}{148} & 0.183 & 0.382       & \textbf{0.074}   \\
                      &                      & 14466 & 47702       & \textbf{13891}   \\
\multirow{2}{*}{Double-walled nanotube}   & \multirow{2}{*}{370} & 0.545 & OOM         & \textbf{0.180}   \\
                      &                      & 44923 & OOM         & \textbf{36849} 
                      \\ \hline
\end{tabular}
}
\end{table}

\subsection{Generalization on Out-of-Domain Data: 3BPA}
\label{sec: Generalization on out-of-domain data: 3BPA}

GeoNGNN has demonstrated remarkable expressiveness. However, higher expressiveness only implies a better capability to fit data, and whether this leads to improved inductive bias and generalization power remains uncertain. Therefore, we conducted further evaluations of GeoNGNN on the 3BPA dataset \citep{kovacs2021linear}, which includes lots of \textit{out-of-domain} data in its test set, providing a good assessment of the generalization power of geometric models. The dataset comprises molecular dynamic simulations of the molecule 3-(benzyloxy)pyridin-2-amine, with the training set consisting of samples from the trajectory at 300K, and the test sets containing samples from trajectories at 300K, 600K, and 1200K.

\begin{table}[htbp]
\centering
\caption{Root-mean-square errors (RMSE) on 3BPA. Energy (E) in meV, force (F) in meV/\r{A}. Standard deviations are computed over three runs.}
\label{tab: 3bpa}
\resizebox{0.6\columnwidth}{!}{
\begin{tabular}{cc|cccc}
\hline
                       &   & NequIP    & Allegro    & MACE     & GeoNGNN    \\\hline
\multirow{2}{*}{300K}  & E & 3.1$\pm$0.1   & 3.84$\pm$0.08  & 3$\pm$0.2    & 3.76$\pm$0.29  \\
                       & F & 11.3$\pm$0.2  & 12.98$\pm$0.7  & 8.8$\pm$0.3  & 11.77$\pm$0.34 \\
\multirow{2}{*}{600K}  & E & 11.3$\pm$0.31 & 12.07$\pm$0.45 & 9.7$\pm$0.5  & 12.49$\pm$0.38 \\
                       & F & 27.3$\pm$0.3  & 29.17$\pm$0.22 & 21.8$\pm$0.6 & 28.91$\pm$0.71 \\
\multirow{2}{*}{1200K} & E & 40.8$\pm$1.3  & 42.57$\pm$1.46 & 29.8$\pm$1.0 & 44.82$\pm$0.88 \\
                       & F & 86.4$\pm$1.5  & 82.96$\pm$1.77 & 62$\pm$0.7   & 89.62$\pm$1.58
\\\hline
\end{tabular}
}
\end{table}

Consistent with prior studies, we present the Root Mean Square Error for energy and force predictions, comparing GeoNGNN with MACE~\citep{batatia2022mace}, NequIP~\citep{NequIP}, and Allegro~\citep{musaelian2023learning}. The results are detailed in Table~\ref{tab: 3bpa}. GeoNGNN demonstrates competitiveness with Allegro, an equivariant model equipped with high-order tensors for learning local equivariant representations. However, when compared to the SOTA method MACE, GeoNGNN exhibits relatively worse performance, especially under higher temperatures, which signifies a greater distribution shift between the testing and training sets. This underscores that despite GeoNGNN's strong theoretical expressiveness, its generalization power may lag behind that of those well-designed equivariant models. As stated in the main text~\Cref{sec: exper}, this discrepancy could potentially be attributed to the fact that GeoNGNN only adopts invariant representations, whereas equivariant representations could offer superior generalization abilities, particularly in sparse graph cases, as demonstrated in~\citet{du2024new}.

\subsection{Handling Various Property Predictions: QM9}

Finally, we evaluate GeoNGNN on QM9~\citep{qm9-1, qm9-2}, which contains approximately 13k well-annotated small molecules, each paired with 12 properties to predict. The results, shown in~\Cref{tab:qm9}, indicate that GeoNGNN performs particularly well in energy-related predictions, such as $U_0$ and $H$.

\begin{table}[t]
    \centering
        \caption{MAE loss on QM9. The best and the second best results are shown in \textbf{bold} and {\ul underline}. We \textcolor[HTML]{ADCA91}{color} the cell if GeoNGNN outperforms DisGNN. GeoNGNN demonstrates competitive performance across all models and ranks first in terms of average ranking.}
\label{tab:qm9}
\resizebox{\columnwidth}{!}{

\begin{tabular}{cc|ccccccc|cc}
\hline
Target                & Unit            & SchNet & PhysNet & ComENet    & SphereNet      & 2F-Dis.     & PaiNN      & Torchmd        & DisGNN & GeoNGNN       \\
\hline
$\mu$                 & D               & 0.033  & 0.053   & 0.025      & 0.025          & {\ul 0.010} & 0.012      & \textbf{0.002} & 0.015  & \cellcolor[HTML]{E2EFDA}0.012         \\
$\alpha$              & $a_0^3$         & 0.235  & 0.062   & 0.045      & 0.045          & {\ul 0.043} & 0.045      & \textbf{0.010} & 0.057  & \cellcolor[HTML]{E2EFDA}0.044         \\
$\epsilon_{HOMO}$     & $\rm meV$       & 41.0   & 32.9    & 23.1       & 22.8           & {\ul 21.8}  & 27.6       & \textbf{21.2}  & 34.0   & \cellcolor[HTML]{E2EFDA}23.8          \\
$\epsilon_{LUMO}$     & $\rm meV$       & 34.0   & 24.7    & 19.8       & {\ul 18.9}     & 21.2        & 20.4       & \textbf{17.8}  & 28.3   & \cellcolor[HTML]{E2EFDA}21.5          \\
$\Delta \epsilon$     & $\rm meV$       & 63.0   & 42.5    & 32.4       & \textbf{31.1}  & {\ul 31.3}  & 45.7       & 38.0           & 45.2   & \cellcolor[HTML]{E2EFDA}33.3          \\
$\langle R^2 \rangle$ & $a_0^2$         & 0.073  & 0.765   & 0.259      & 0.268          & {\ul 0.030} & 0.066      & \textbf{0.015} & 0.160  & \cellcolor[HTML]{E2EFDA}0.046         \\
ZPVE                  & $\rm meV$       & 1.7    & 1.39    & {\ul 1.20} & \textbf{1.12}  & 1.26        & 1.28       & 2.12           & 1.88   & \cellcolor[HTML]{E2EFDA}1.4           \\
$U_0$                 & $\rm meV$       & 14     & 8.15    & 6.59       & 6.26           & 7.33        & {\ul 5.85} & 6.24           & 9.60   & \cellcolor[HTML]{E2EFDA}\textbf{5.29} \\
$U$                   & $\rm meV$       & 19     & 8.34    & 6.82       & 6.36           & 7.37        & {\ul 5.83} & 6.30           & 9.88   & \cellcolor[HTML]{E2EFDA}\textbf{5.35} \\
$H$                   & $\rm meV$       & 14     & 8.42    & 6.86       & 6.33           & 7.36        & {\ul 5.98} & 6.48           & 9.82   & \cellcolor[HTML]{E2EFDA}\textbf{5.35} \\
$G$                   & $\rm meV$       & 14     & 9.40    & 7.98       & 7.78           & 8.56        & {\ul 7.35} & 7.64           & 11.06  & \cellcolor[HTML]{E2EFDA}\textbf{7.04} \\
$c_v$                 & $\rm cal/mol/K$ & 0.033  & 0.028   & 0.024      & \textbf{0.022} & {\ul 0.023} & 0.024      & 0.026          & 0.031  & \cellcolor[HTML]{E2EFDA}{\ul 0.023}   \\ \hline
\multicolumn{2}{c|}{AVG RANK}              & 8.42   & 7.25    & 4.58       & 3.42           & 3.67        & 3.92       & 3.17           & 7.42   & 2.92         
 \\
                      \hline
\end{tabular}
}

\end{table}

\section{Experiment Settings}
\label{Sec: Detailed Experimental Settings}

\subsection{Dataset Settings}
\label{Sec: Dataset Settings}
\paragraph{Synthetic dataset} We generate the synthetic dataset by implementing the counterexamples proposed by~\citet{li2024distance}.

The dataset includes 10 isolated cases and 7 combinatorial cases, where each case consists of a pair of symmetric point clouds that cannot be distinguished by DisGNN. Figure~\ref{fig: ce} shows two pairs of selected counterexamples, \revised{and more counterexamples are provided in Figure~\ref{fig:ce_all}.}

\begin{figure}[h]
    \centering
    \includegraphics[width=0.8\columnwidth]{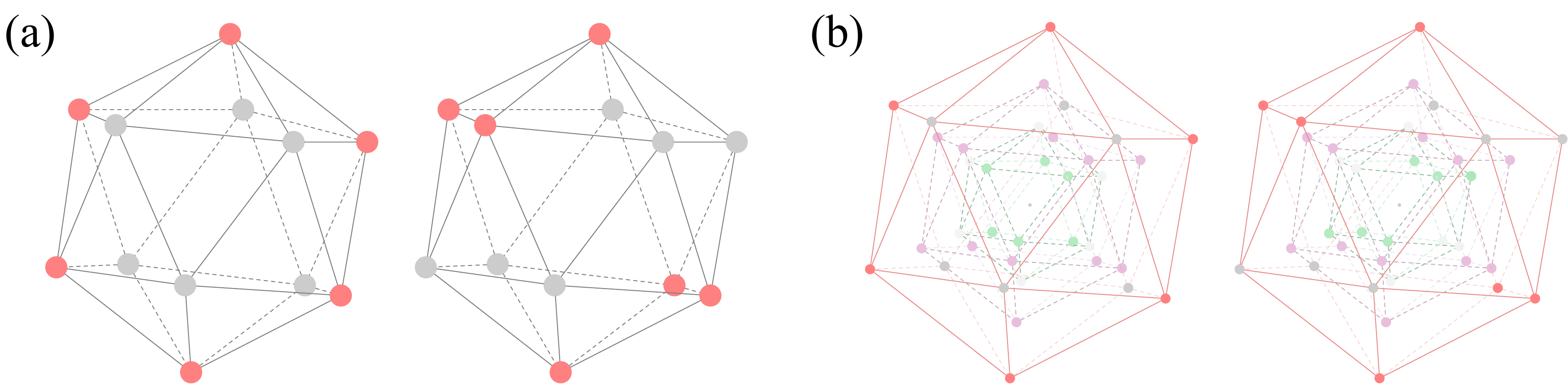}
    \caption{Selected counterexamples in the synthetic dataset. Only colored nodes belong to the point clouds. The grey nodes and ``edges'' are for visualization purposes only. (a) One isolated case. (b) One combinatorial case.}
    \label{fig: ce}
\end{figure}

\begin{figure}[h]
    \centering
    \includegraphics[width=0.8\columnwidth]{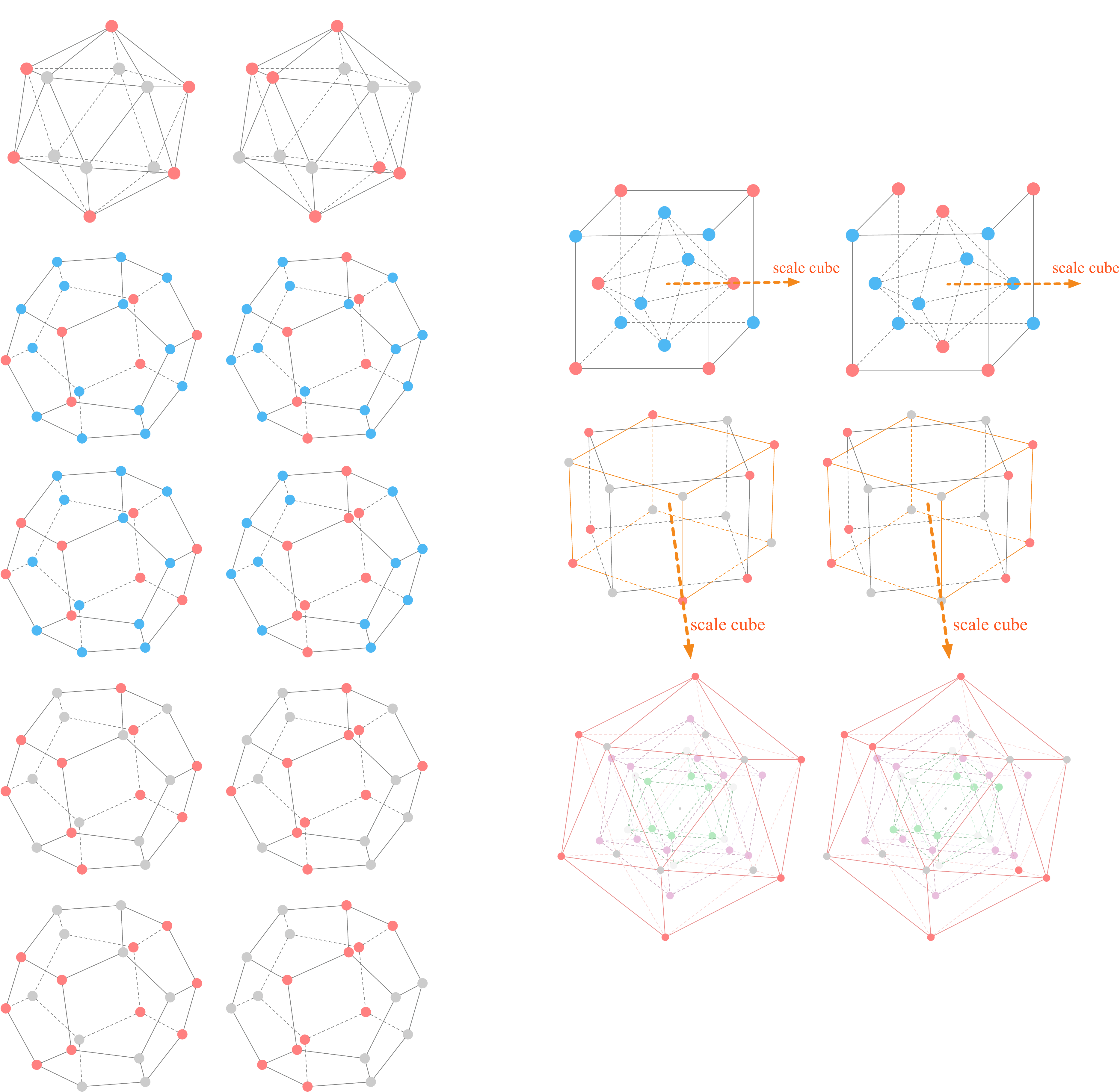}
    \revised{\caption{Pairs of point clouds that cannot be distinguished by DisGNN, taken from~\citet{li2024distance}. The nodes are arranged on the surfaces of regular polyhedra, only red or blue
nodes are part of point clouds; the grey nodes and the
“edges” are included solely for vi-
sualization purposes. For those subfigures with two node colors, nodes of the same color in the left and right point clouds represent one specific pair of indistinguishable point clouds by GeoNGNN. The label "Scale cube" signifies that the relative sizes of the two regular polyhedra can vary arbitrarily. Each subfigure, except the bottom-rightmost one, represents an isolated counterexample. The bottom-rightmost case illustrates a combinatorial counterexample, constructed by combining multiple instances of the top-left pair in an "origin-all-complementary" pattern. Comprehensive explanations of these counterexamples are provided in~\citet{li2024distance}.}}
    \label{fig:ce_all}
\end{figure}

The 10 isolated cases comprise the following: 6 pairs of point clouds sampled from regular dodecahedrons, 1 pair from icosahedrons, 2 pairs from a combination of one cube and one regular octahedron, and 1 pair from a combination of two cubes. In the combination cases, the relative size is set to 1/2.

The 7 combinatorial cases are obtained through the augmentation of the 6 pairs of point clouds sampled from regular dodecahedrons and 1 pair from icosahedrons, using the method outlined in Theorem A.1 of~\citet{li2024distance}. When augmenting the base point cloud, we consider the \textit{original}, \textit{complementary}, and \textit{all} types of the base graph~\citep{li2024distance}, respectively.

Given that the counterexamples are designed to test the maximal expressiveness of geometric models, we configure all point clouds in the dataset to be fully connected.

\paragraph{rMD17 and MD22} The data split (training/validation/testing) in rMD17 is 950/50/the rest, following related works such as~\citet{batatia2022mace,gemnet}. For MD22, we adopt the data split specified in~\citet{chmiela2023accurate}, which is also consistent with the other works. We train the models using a batch size of 4.

To optimize the parameters $\theta$ of the model $f_{\theta}$, we use a weighted loss function:
\begin{equation}
        \mathcal{L}(\boldsymbol{X},\boldsymbol{z})=(1-\rho)|f_\theta (\boldsymbol{X},\boldsymbol{z}) - \hat{t}(\boldsymbol{X},\boldsymbol{z})|\notag
        +\frac{\rho}{m}\sum_{i=1}^n\sqrt{\sum_{\alpha=1}^3(-\frac{\partial f_\theta (\boldsymbol{X},\boldsymbol{z})}{\partial \boldsymbol{x}_{i\alpha}} - \hat{F}_{i\alpha}(\boldsymbol{X},\boldsymbol{z}))^2},
\end{equation}
where $X\in \mathbb{R}^{n\times 3}$ and $z \in \mathbb{R}$ represents the $n$ atoms' coordinates and atomic number respectively, $\hat{t}$ and $\hat{F}$ represents the molecule's energy and forces acting on atoms respectively. The force ratio $\rho$ is chosen as 0.99 or 0.999.

In rMD17, the experimental results of NequIP~\citep{NequIP}, GemNet~\citep{gemnet}, are sourced from~\citet{musaelian2023learning}. The result of PaiNN~\citep{PaiNN} is sourced from~\citet{batatia2022mace}. The results of 2-F-DisGNN~\citep{li2024distance} and MACE~\citep{batatia2022mace} are reported by their original papers. We trained and evaluated DimeNet~\citep{dimenet} using its implementation in Pytorch Geometric~\citep{fey2019fast} since the original paper did not report results on this dataset, and we use the default parameters (We find that the default parameters produce the best performance).

In MD22, the results of sGDML and MACE~\citep{kovacs2023evaluation} are sourced from~\citet{kovacs2023evaluation}. The results of PaiNN~\citep{PaiNN}, TorchMD-Net~\citep{TorchMD}, Allegro~\citep{musaelian2023learning}, and Equiformer~\citep{liao2022equiformer} are obtained from the work presented by~\citet{li2023long}.

\paragraph{3BPA} Following~\citet{batatia2022mace}, we split the training set and validation set at a ratio of 450/50, utilizing pre-split test sets for evaluation. We use a batch size of 4, an identical objective function to that in rMD17 and MD22, and a force ratio of 0.999 were employed. The results of all other models are sourced from~\citet{batatia2022mace}.

\paragraph{QM9} Following~\citet{li2024distance}, we split the training set and validation set at a ratio of 110K/10K, utilizing pre-split test sets for evaluation. We use a batch size of 32. See the following for details of model hyper-parameters.

\subsection{Model Settings}
\label{Sec: Model Settings}

\paragraph{RBF} Following previous work, we use radial basis functions (RBF) $f^{\rm rbf}_{\rm e}: \mathbb{R} \to \mathbb{R}^{H_{\rm rbf}}$ to expand the euclidean distance between two nodes into a vector, which is shown to be beneficial for inductive bias~\citet{dimenet, li2024distance}. We use the same expnorm RBF function as~\citet{li2024distance}, defined as
\begin{equation}
    f_e^{\rm rbf}(e_{ij})_k = e^{-\beta_k({\rm exp}(-e_{ij})-\mu_k)^2},
\end{equation}
where $\beta_k, \mu_k$ are coefficients of the $k^{\rm th}$ basis.

\paragraph{DisGNN} In each stage of the DisGNN, we augment its expressiveness and experimental performance by stacking dense layers or residual layers DisGNN's architecture is described in Figure~\ref{fig: VD}.

\begin{figure}[h]
    \centering
    \includegraphics[width=0.9\columnwidth]{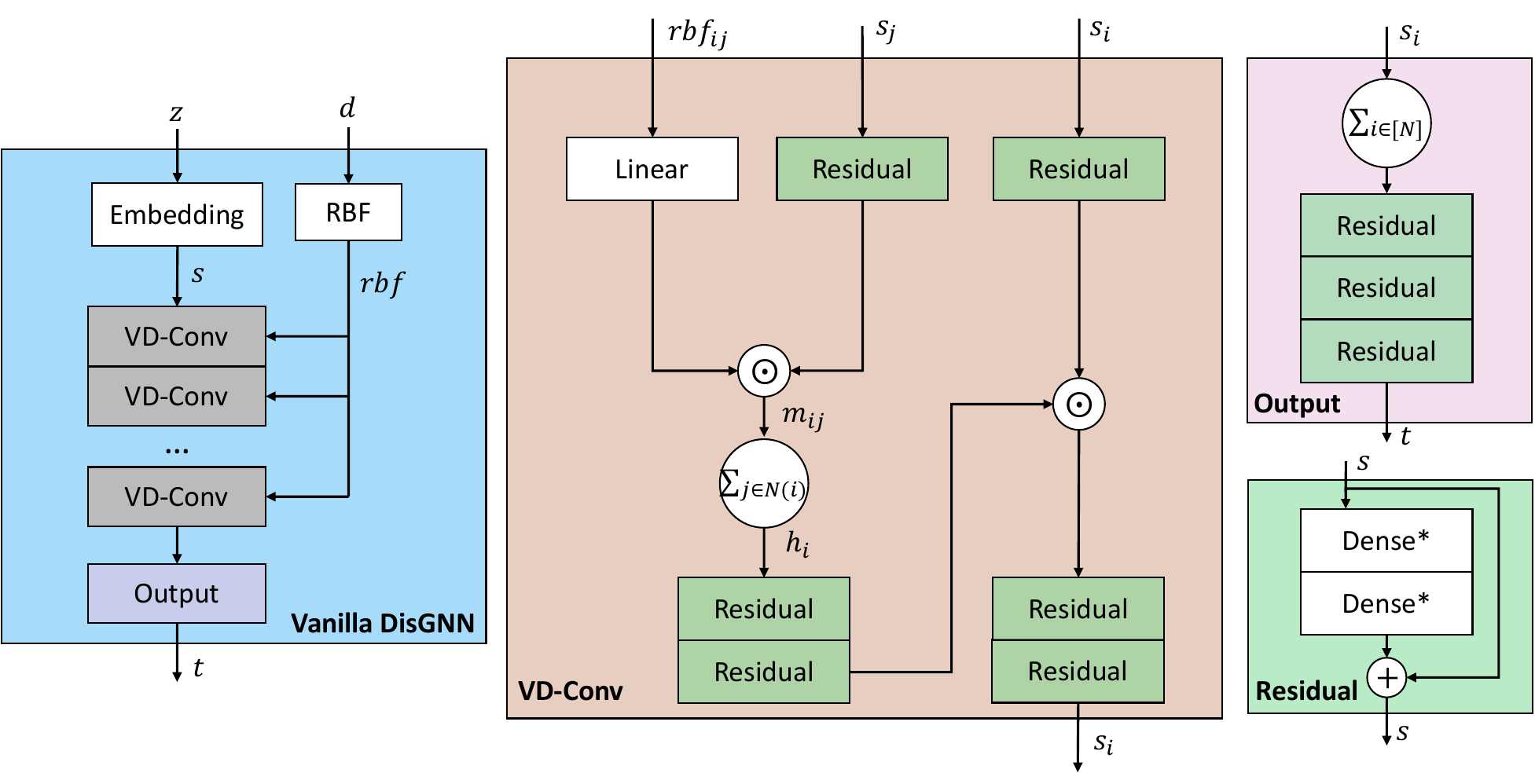}
    \caption{Architecture of DisGNN. $z, d, t$ represent the atomic number, euclidean distance and the target output, respectively. Each Linear block represents an affine transformation with learnable parameters $W, b$, while the Dense* block extends the Linear block by incorporating a non-linear activation function (pre-activation). $\odot$ represents Hadamard product. }
    \label{fig: VD}
\end{figure}

We set the number of VD-Conv layers to 7 for rMD17 and 6 for MD22. The hidden dimension is set to 512 and the dimension of radial basis functions (RBF) is set to 16. The message passing cutoff is set to $13$\r{A} for rMD17 and $7$\r{A} for MD22. We employ the polynomial envelope with $p=6$, as proposed in~\citet{dimenet}, along with the corresponding cutoff.

\paragraph{\revisedd{NGNN}} \revisedd{

GeoNGNN is based on the high-level framework of NGNN proposed in~\citep{zhang2021nested}. NGNN learns over topological graphs by nesting a base GNN such as GCN~\citep{kipf2016semi}, GIN~\citep{xu2018powerful}, or GraphSAGE~\citep{hamilton2017inductive}, as illustrated in~\Cref{fig: NGNN}. For completeness, we include formal descriptions provided in~\citep{zhang2021nested}.

Formally, given a topological graph \( G = (V, E) \), where \( V = \{1, 2, \dots, n\} \) represents the set of nodes and \( E \subseteq V \times V \) represents the set of edges, NGNN first defines \( k \)-hop ego subgraphs \( G_k^v \) for each node \( v \in V \). Each \( k \)-hop ego subgraph \( G_k^v \) is induced by the nodes within \( k \)-hops from \( v \) (including \( v \) itself) and the edges connecting these nodes.

For each rooted subgraph \( G_k^w \), NGNN applies a base GNN to perform \( T^{\text{in}} \)-rounds of message passing. Let \( v \) be any node appearing in \( G_k^w \). Denote the hidden state of \( v \) at time \( t \) in subgraph \( G_{k}^w \) as \( h_{v, G_k^w}^{t} \). The initial hidden state \( h_{v, G_k^w}^{0} \) is typically set to the node's raw features, such as embeddings of atomic numbers. When explicit node marking is adopted, which is the default in GeoNGNN, \( h_{v, G_k^w}^{0} \) is further fused with a unique mark embedding to distinguish the root node within its subgraph.

After \( T^{\text{in}} \)-rounds of message passing, a subgraph pooling operation aggregates the node embeddings \( \msl h_{v, G_k^w}^{T^{\text{in}}} \mid v \in G_k^w \msr \) into a single subgraph representation \( h_{G_k^w} \):
\[
h_{G_k^w} = R_{\text{subgraph}}\left( \msl h_{v, G_k^w}^{T^{\text{in}}} \mid v \in G_k^w \msr\right),
\]
where \( R_{\text{subgraph}} \) is the subgraph pooling function that summarizes all node-level information within the subgraph. This subgraph representation \( h_{G_k^w} \) serves as the final representation of the root node \( w \) in the original graph.

Subsequently, an outer GNN performs \( T^{\text{out}} \)-rounds of message passing, obtaining the hidden representation \( h_v^{t} \) of node \( v \) at step \( t \), where the initial representation is \( h_v^{0} = h_{G_k^v} \). Finally, the graph-level representation \( h_G \) is produced:
\[
h_G = R_{\text{graph}}\left(\msl h_v^{T^{\text{out}}} \mid v \in V \msr\right),
\]
where \( R_{\text{graph}} \) is a global pooling function that aggregates node-level embeddings into a comprehensive graph-level representation. This process enables NGNN to capture hierarchical representations of the graph, leveraging both local (subgraph-level) and global (graph-level) features.

}
\begin{figure}[h]
    \centering
    \includegraphics[width=0.9\columnwidth]{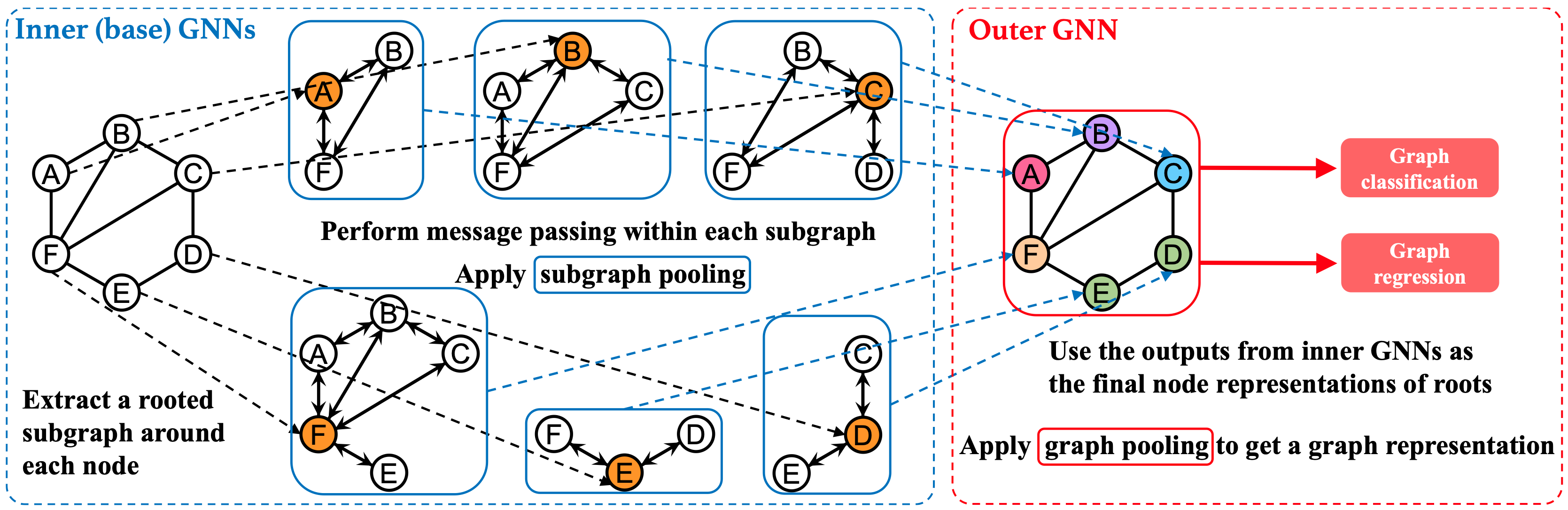}
    \caption{\revisedd{Illustration of NGNN~\citep{zhang2021nested}, taken from ~\citep{zhang2021nested}. NGNN operates by first extracting rooted subgraphs around each node in the original graph. A base GNN with a subgraph pooling layer is then applied independently to each rooted subgraph to compute its representation. The resulting subgraph representation serves as the original feature for the root node in the original graph. Subsequently, an outer GNN is applied to the original graph, leveraging these node features and original graph structures to extract final graph-level representations.}}
    \label{fig: NGNN}
\end{figure}

\paragraph{GeoNGNN}
\revisedd{GeoNGNN serves as the geometric counterpart to NGNN, operating on point clouds rather than topological graphs in Figure~\ref{fig: NGNN}.} The high-level architecture of GeoNGNN is described in the main body. The inner DisGNN, which runs on the ego subgraph of each node, is similar to the DisGNN block shown in Figure~\ref{fig: VD}. However, the inner DisGNN additionally performs node marking and distance encoding for each node $j$ in node $i$'s subgraph \textit{before} passing their representations to VD-Conv layers:
\begin{equation}
    h_{ij} \leftarrow h_{j} \odot \text{MLPs}(\text{RBF}(d_{ij})) \odot \text{MLPs}(\text{Emb}(\mathds{1}_{j=i})))
\end{equation}
where $h_{ij}$ represents node $j$'s representation in $i$'s subgraph.

The outer DisGNN additionally fuses the subgraph representation $t_i$ produced by the inner DisGNN for node $i$ with $h_i$ obtained from the embedding layer:
\begin{equation}
    h_i \leftarrow \text{MLPs}([h_i, t_i]))
\end{equation}

The hidden dimension and message passing cutoff are set to the same values as those in DisGNN. The subgraph size is set to 13 \r{A} for rMD17, 5 \r{A} for MD22, 6 \r{A} for 3BPA, 13 \r{A} for QM9 and the maximal subgraph size is set to 25. We set $(N_{\text{in}}, N_{\text{out}})$ as $(5, 2)$ for rMD17, $(3, 3)$ for MD22, $(6, 2)$ for 3BPA and $(4,1)$ for QM9. It is worth mentioning that the total number of convolutional layers is the same as that of DisGNN. This ensures that the receptive field of the two models is roughly the same, thereby guaranteeing a relatively fair comparison.

We optimize all models using the Adam optimizer~\citep{kingma2014adam}, incorporating exponential decay and plateau decay learning rate schedulers, as well as a linear learning rate warm-up. To mitigate the risk of overfitting, we employ early stopping based on validation loss and apply exponential moving average (EMA) with a decay rate of 0.99 to the model parameters during the validation and testing phases. The models are trained on Nvidia RTX 4090 and Nvidia A100 (80GB) GPUs. For rMD17, training hours for each molecule range from 20 to 70. For MD22, the training hours ranged from 35 to 150 on Nvidia 4090 for all molecules except Stachyose (due to longer convergence time) and Double-walled nanotube (the largest molecule necessitating an A100), which required about 210 and 70 GPU hours on the Nvidia 4090 and 80GB A100, respectively. For 3BPA, the training hours is around 10 GPU hours on Nvidia 4090. For QM9, the training hour is around 36 to 100 on the Nvidia 4090.

 \section{GeoNGNN-C: SE(3)-complete Variant of GeoNGNN}
\label{sec: GeoNGNN-C}
\subsection{Definition and Theoretical Analysis of GeoNGNN-C}

In certain chemical-related scenarios, it is important to guarantee that the outputs of models exhibit invariance under permutations, transformations, and rotations of point clouds, while \textbf{exhibiting differentiation when the point cloud undergoes mere reflection}. This distinction is crucial as pairs of enantiomers, may demonstrate vastly contrasting properties, such as binding affinity. In this section, we propose the SE(3)-complete invariant of GeoNGNN, which is capable of embracing chirality and producing complete representations for downstream tasks.
 
Generally speaking, GeoNGNN-C is designed by simply replacing the undirected distance in GeoNGNN with the directed distance. The formal definition is:
\begin{definition}
\label{Def: GeoNGNN-C}
The fundamental architecture of GeoNGNN remains the same, with the exception that the inner GNN for node $k$'s subgraph performs the subsequent message-passing operation:
    $$
    h_{ki}^{(l+1)}=f_{\rm update}(h_{ki}^{(l)}, \{\!\!\{ (h_{kj}^{(l)}, d_{k,ij})\mid j\in N_k(i)  \}\!\!\}),
    $$
    where $h^{(l)}_{ki}$ represents node $i$'s representation at layer $l$ in node $k$'s subgraph, and the directed distance, $d_{k,ij}$, is defined as:
    $$
    d_{k,ij} = f_{\rm dist}(\text{sign}(\vec{r}_{ci}\times \vec{r}_{cj} \cdot \vec{r}_{ck} ), d_{ij}),
    $$
where $c$ is the geometric center of the point cloud.
\end{definition}

As can be easily validated, directed distance is invariant under SE(3)-transformation while undergoing a sign reversal after reflecting the point cloud. Consequently, GeoNGNN-C retains SE(3)-invariance while possessing the potential to discriminate enantiomers. Significantly, we show that GeoNGNN-C is SE(3)-complete, able to distinguish all pairs of enantiomers:

\begin{theorem}(SE(3)-Completeness of GeoNGNN-C) When the conditions described in Theorem~\ref{theorem: completeness of GeoNGNN} are met, GeoNGNN-C is SE(3)-complete.
\label{theorem: completeness of GeoNGNN-C}
\end{theorem}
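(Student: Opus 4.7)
The plan is to piggyback on Theorem~\ref{theorem: completeness of GeoNGNN}. First I would note that the map $(s,d) \mapsto f_{\rm dist}(s,d)$ may be taken injective under the same injectivity convention used throughout the paper, so $d_{k,ij}$ derives the undirected distance $d_{ij}$. Consequently GeoNGNN-C simulates GeoNGNN by simply discarding the sign component at every aggregation, and therefore already identifies any point cloud $P \in \mathbb{R}^{n \times 3}$ up to E(3)-isometry. It then remains only to separate $P$ from its mirror image $P^{\rm mir}$ in the case where the two are not already related by a proper rotation and translation.

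Second, I would exploit the signed triple product $s_{ijk} := \text{sign}\bigl(\vec r_{ci}\times \vec r_{cj}\cdot \vec r_{ck}\bigr)$, which is the sign of the signed volume of the tetrahedron $(c,i,j,k)$. This quantity is SE(3)-invariant, flips under every reflection, and vanishes exactly when $\{c,i,j,k\}$ is coplanar. If $P$ is genuinely three-dimensional, at least one triple $(i,j,k)$ satisfies $s_{ijk}\neq 0$; if $P$ is planar, any reflection can be realized as a rotation about an axis lying in that plane, so $P$ and $P^{\rm mir}$ are SE(3)-equivalent and nothing needs to be shown. Mirroring the steps of Appendix~\ref{sec: proof of completeness of GeoNGNN}, I would then track how the sign is routed into the subgraph embedding: inside the subgraph of $k$, after one inner round $h_{ki}$ derives $d_{k,ij}$ for the marked root $k$ and any node $i$, and after a second round $h_{kj}$ can extract each sign $s_{ijk}$ together with the triangular distance encoding $(d_{ij}, d_{ic}, d_{jc}, d_{ki}, d_{kj}, d_{kc})$ already exploited in Lemma~\ref{lemma: completeness on subspace}. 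Feeding these signs into the plane-by-plane reconstruction of Lemma~\ref{lemma: completeness on subspace} resolves the residual ``which side of the $cki$-plane'' ambiguity that appears in its proof, upgrading the reconstruction from E(3) to SE(3).

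The main obstacle will be the bookkeeping: verifying that the same conditions as in Theorem~\ref{theorem: completeness of GeoNGNN} (five inner rounds, fully-connected graph, infinite subgraph radius) remain sufficient once the signs are threaded through aggregation, and handling two degenerate situations. The first is when the subgraph root $k$ coincides with $c$; as in the GeoNGNN proof, the outer pooling can select any subgraph with $k\neq c$, which exists whenever $n\geq 2$, and further can select one for which some $s_{ijk}$ is nonzero. The second is when no triple of nodes together with $c$ forms a nondegenerate tetrahedron, which is precisely the planar case already dismissed above. With these two corner cases handled, the rest of the argument is a direct transplant of Appendix~\ref{sec: proof of completeness of GeoNGNN} with the sign carried through each intermediate injective function.
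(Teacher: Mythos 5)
Your proposal is correct and takes essentially the same route as the paper's proof: both piggyback on the reconstruction argument of Theorem~\ref{theorem: completeness of GeoNGNN} and Lemma~\ref{lemma: completeness on subspace} (since the directed distance derives the undirected one under the injectivity convention) and then use the SE(3)-invariant, reflection-flipping sign carried by $d_{k,ij}$ to fix the handedness of the anchor tetrahedron formed by the root, the geometric center, and the minimal-dihedral pair, thereby upgrading the reconstruction from E(3) to SE(3). Your explicit handling of the planar (achiral) case and the root-coincides-with-center case matches the paper's treatment, which orients the $4$-tuple $a,b,i,c$ via the sign embedded in $d_{i,ab}$ and then completes the reconstruction deterministically.
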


\begin{proof}

    We first note that the proof is highly similar to that in the proof of Theorem~\ref{theorem: completeness of GeoNGNN}.

    Let us consider a subgraph containing a node $i$, where we assume that $i$ is distinct from the geometric center $c$. By imitating the proof of Theorem~\ref{theorem: completeness of GeoNGNN}, we can ascertain the following:

    After three rounds of the Chiral DisGNN (i.e., the DisGNN that incorporates the directed distance as a geometric representation as described in Definition~\ref{Def: GeoNGNN-C}), node representations $h_{j}^{(3)}$ for any node $j$ can encode the directed distances $d_{ji}$, $d_{jc}$, and $d_{ic}$. This enables the completion of the triangular distance encoding necessary for reconstruction in Lemma~\ref{lemma: completeness on subspace}.
        
    Next, we employ a modified version of the proof from Lemma~\ref{lemma: completeness on subspace} to demonstrate that with an additional two rounds of the Chiral DisGNN, we can derive the \textit{SE(3)-} and permutation-invariant identifier of the point cloud $P$.
    
    In the proof of Lemma~\ref{lemma: completeness on subspace}, we first search across all the nodes to find two nodes $a$ and $b$ that form the minimal dihedral angles $\theta(aic,bic)$, and then identifies the coordinates of $a,b,i,c$ \textit{up to Euclidean isometry} from the all-pair distances $D=\{(m,n,d_{mn}) \mid m,n \in \{a,b,i,c\}\}$. However, since Chiral DisGNN embeds the geometry using directed distance, we now have $D_i =\{(m,n,d_{i, mn}) \mid m,n \in \{a,b,i,c\}\}$ instead. We are now able to identify the coordinates of $a,b,i,c$ \textit{up to SE(3)-transformation} (i.e., Euclidean isometry without reflection) from $D_i$ in the following way:
    \begin{enumerate}
        \item Fix $x_{i}=(0,0,0)$.
        \item Fix $x_c=(d_{ic},0,0)$.
        \item Fix $x_a$ in plane $xOy+$ and calculate its coordinates using $d_{ia},d_{ca}$.
        \item Calculate the coordinates of $x_b$ using $d_{ib}, d_{cb}, d_{ab}$, and \textit{determine whether $b$ is on $z+$ side or $z-$ side using the signal embedded in $d_{i, ab}$}.
    \end{enumerate}
    
    Note that the orientation relevant to SE(3) transformation is already determined at this stage. We proceed with the remaining steps of the proof outlined in Lemma~\ref{lemma: completeness on subspace} and reconstruct the whole geometry in a \textit{deterministic} way based on the known 4-tuple $abic$, and therefore reconstruct the whole geometry up to permutation and SE(3)-transformation.
    
    Similarly, note that such subgraphs $i$, where $i$ is distinct from the geometric center $c$, always exist as GeoNGNN-C traverses all node subgraphs. Hence, we have established the SE(3)-Completeness of GeoNGNN-C.
    
\end{proof}

\subsection{Experimental Evaluation of GeoNGNN-C}
\label{sec: Experimental evaluation of GeoNGNN-C}

\paragraph{Datasets and tasks} We evaluate the ability of GeoNGNN-C to distinguish enantiomers and learn meaningful chirality-relevant representations through three tasks on two datasets~\citep{adams2021learning, gainski2023chienn}: 1) Classification
of tetrahedral chiral centers as R/S. R/S offers a fundamental measure of molecular chirality, and this classification task serves as an initial evaluation of the model's capacity to discriminate enantiomers. The underlying dataset contains a total of 466K conformers of 78K enantiomers with a single tetrahedral chiral center. 2) Enantiomer ranking task and binding affinity prediction task. The two tasks share the same underlying dataset, while the second task requires models to predict the binding affinity value for each enantiomer directly, and the first requires the model to predict which enantiomer between the corresponding pair exhibits higher such values. Chirality is essential here because enantiomers often exhibit distinct behaviors when docking in a chiral protein pocket, thus leading to differences in binding affinity. The underlying dataset contains 335K conformers from 69K enantiomers carefully selected and labeled by~\citet{adams2021learning}. 

\paragraph{Model architecture and training/evaluation details} In the task of R/S classification, GeoNGNN-C adopts a three-layer outer GNN positioned \textit{before} the inner GNNs. This configuration is designed to enhance the node features first, facilitating the subsequent extraction of chirality-related properties within the inner GNNs. For the other two tasks, GeoNGNN-C consists of five inner layers and one outer layer, with the outer GNN placed after the inner GNN. The model is trained with L1 loss. Specifically, if predicted affinity difference between two enantiomers falls below the threshold of 0.001, it is considered that the model lacks the capability to discriminate between the two enantiomers. Note that all the evaluation criteria are consistent with previous works~\cite {adams2021learning,gainski2023chienn} for fair comparisons.

\begin{table}[t]
\centering
\caption{Results on chiral-sensitive tasks compared to reference models. We \textbf{bold} the best result among models and \underline{underline} the second best ones.}
\label{tab: GeoNGNN-C}
\resizebox{0.65\columnwidth}{!}{
\begin{tabular}{c|ccc}
\hline
     & R/S & Enantiomer ranking & Binding affinity \\
    \multirow{-2}{*}{Model} & Accuracy ↑& R. Accuracy ↑& MAE ↓\\
    \hline
    DMPNN+tags  & -                     & 0.701$\pm$0.003        & 0.285$\pm$0.001        \\
    SphereNet   & \underline{0.982$\pm$0.002}      & 0.686$\pm$0.003          & -                     \\
    Tetra-DMPNN & 0.935$\pm$0.001        & 0.690$\pm$0.006        & 0.324$\pm$0.02         \\
    ChIRo       & 0.968$\pm$0.019        & 0.691$\pm$0.006        & 0.359$\pm$0.009        \\
    ChiENN      & \textbf{0.989$\pm$0.000}        & \textbf{0.760$\pm$0.002}        & \underline{0.275$\pm$0.003}        \\
    \hline
    GeoNGNN-C   & 0.980$\pm$0.002        & \underline{0.751$\pm$0.003}        & \textbf{0.254$\pm$0.000} \\
\hline
\end{tabular}

}

\end{table}

\paragraph{Models to compare with} We compare GeoNGNN-C against previous high-performance models, SphereNet~\citep{liu2021spherical}, ChiRo~\citep{adams2021learning}, ChiENN~\citep{gainski2023chienn}, and other baseline models in~\citep{pattanaikMessagePassingNetworks2020, adams2021learning, gainski2023chienn}. It is worth noting that, for the last two tasks, the docking scores are labeled at the stereoisomer level, meaning that all different conformers of a given enantiomer are assigned the same label, specifically the best score, by~\citet{adams2021learning}. As a result, 2D Graph Neural Networks (GNNs) such as DMPNN+tags and models that possess \textbf{invariance to conformer-level transformations}, such as bond rotations, including ChiRo~\citep{adams2021learning}, Tetra-DMPNN~\citep{pattanaikMessagePassingNetworks2020} and ChiENN~\citep{gainski2023chienn}, inherently exhibit significantly better data efficiency and usually exhibit better performance. GeoNGNN-C and SphereNet do not exhibit such invariance, and are trained on 5 conformers for each enantiomer as data augmentation.

\paragraph{Results} The experimental results for the three tasks are presented in Table~\ref{tab: GeoNGNN-C}. It can be observed that GeoNGNN-C demonstrates a competitive performance compared to SOTA methods on the R/S classification task. Importantly, despite the simple and general model design, GeoNGNN-C outperforms models specifically tailored for the last two tasks, such as ChIRo and ChiENN, which incorporates invariance to bond rotations. It successfully learns the \textit{approximate invariance} from only 5 conformers per enantiomer and achieves the second-best result (very close to the best one) on the enantiomer ranking task and new SOTA results on the binding affinity prediction task. These results reveal the potential of GeoNGNN-C as a simple but theoretically chirality-aware expressive geometric model.

\end{document}